\def\prompt{\mathtt{pt}}
\def\actor{Actor}
\def\reporter{Reporter}
\def\planner{Planner}
\def\attn{\mathtt{attn}}
\def\tdo{\mathrm{\bf do}\hspace{0.6mm}}
\def\proj{\mathtt{Proj}}
\def\Softmax{\mathtt{Softmax}}
\def\mha{\mathtt{Mha}}
\def\relu{\mathtt{ReLU}}
\def\wvert{\hspace{0.5mm}\vert\hspace{0.5mm}}
\def\wVert{\hspace{0.5mm}\Vert\hspace{0.5mm}}
\acrodef{hmdp}[HMDP]{Hierarchical Markov Decision Process}
\acrodef{mdp}[MDP]{Markov Decision Process}
\acrodef{pomdp}[POMDP]{Partially Observable Markov Decision Process}
\acrodef{bail}[BAIL]{Bayesian aggregated imitation learning}
\definecolor{NavyBlue}{RGB}{51,112,185}
\newcommand{\myblue}[1]{\textcolor{blue!70!black}{#1}}
\title{From Words to Actions: Unveiling the Theoretical Underpinnings of LLM-Driven Autonomous Systems}
\author{Jianliang He\thanks{Equal contribution.} \thanks{Fudan University. Email: \texttt{hejl20@fudan.edu.cn}} \qquad
    Siyu Chen\footnotemark[1] \thanks{Yale University. Email: \texttt{\{siyu.chen.sc3226,zhuoran.yang\}@yale.edu}.} \qquad
    Fengzhuo Zhang\thanks{National University of Singapore. Email: \texttt{fzzhang@u.nus.edu}.} \qquad
    Zhuoran Yang\footnotemark[3]}
\date{}
\begin{document}
\maketitle
\begin{abstract}
In this work, from a theoretical lens, we aim to   understand why large language model (LLM) empowered agents  are able to solve decision-making problems in the physical world. 
To this end, consider a hierarchical reinforcement learning (RL) model where the  LLM Planner and the Actor perform high-level task planning   and low-level execution, respectively.   
Under this model, the LLM Planner navigates a partially observable Markov decision process (POMDP) by iteratively generating language-based subgoals via prompting. 
Under proper assumptions on the pretraining data, we prove that  the pretrained LLM Planner effectively performs Bayesian aggregated imitation learning (BAIL) through in-context learning. 
Additionally, we highlight the  necessity for exploration beyond the subgoals derived from BAIL by proving that naively executing the subgoals returned by LLM leads to a linear regret. 
As a remedy, we introduce an $\epsilon$-greedy exploration strategy to BAIL, which is proven to incur sublinear regret when the pretraining error is small.
Finally, we extend our theoretical framework to include scenarios where the LLM Planner serves as a world model for inferring the transition model of the environment and to multi-agent settings, enabling coordination among multiple Actors.
\end{abstract}

%%%%%%%%%%%%-- Content Table --%%%%%%%%%%%
{

\tableofcontents
}
\newpage

%%%%%%%%%%%%-- Main Content --%%%%%%%%%%
\section{Introduction}

The advent of large language models (LLMs) such as GPT-4 \citep{openai2023gpt} and Llama 2 \citep{touvron2023llama} has marked a significant leap in artificial intelligence, thanks to their striking capabilities in understanding language and performing complex reasoning tasks. These capabilities of LLMs have led to the emergence of LLM-empowered agents (LLM Agents), where LLMs are used in conjunction with tools or actuators to solve decision-making problems in the physical world. LLM Agents have showcased promising empirical successes in a wide range of applications, including autonomous driving \citep{wang2023empowering,fu2024drive}, robotics \citep{brohan2023can,li2023interactive}, and personal assistance \citep{liu2023reason,nottingham2023embodied}. This progress signifies a crucial advancement in the creation of intelligent decision-making systems, distinguished by a high degree of autonomy and seamless human-AI collaboration.

LLMs only take natural languages as input.
To bridge the language and physical domains, LLM-agents typically incorporate three critical components: an LLM \planner, a physical  \actor, and a multimodal \reporter, functioning respectively as the brain, hands, and eyes of the LLM-agent, respectively.  
Specifically, upon receiving a task described by a human user, the LLM Planner breaks down the overall task into a series of subgoals. 
Subsequently, the Actor implements each subgoal  in the physical world through a sequence of actions. Meanwhile, the Reporter monitors changes in the physical world and conveys this information back to the LLM Planner in natural language form. This dynamic interaction among Planner, Actor, and Reporter empowers LLM Agents to understand the environment, formulate informed decisions, and execute actions effectively, thus seamlessly integrating high-level linguistic subgoals with low-level physical task execution.

The revolutionary approach of LLM Agents represents a paradigm shift away from traditional learning-based decision-making systems.
Unlike these conventional systems, LLM Agents are not tailored to any specific task. Instead, they rely on the synergy of their three distinct components—each trained separately and often for different objectives. 
In particular, the LLM Planner is trained to predict the next token in a sequence on vast document data. 
Moreover, when deployed to solve a task, the way to interact with the LLM Planner is via prompting with the LLM fixed. 
The Actor, as language-conditioned policies, can be trained by RL or imitation learning.
Moreover, the Reporter, as a multimodal model, is trained to translate the physical states (e.g., images) into natural language.
This unique configuration prompts critical research questions regarding the theoretical underpinnings of LLM Agents, particularly concerning their decision-making effectiveness.

\begin{figure}[ht]
\centering
\includegraphics[width=0.85\textwidth]{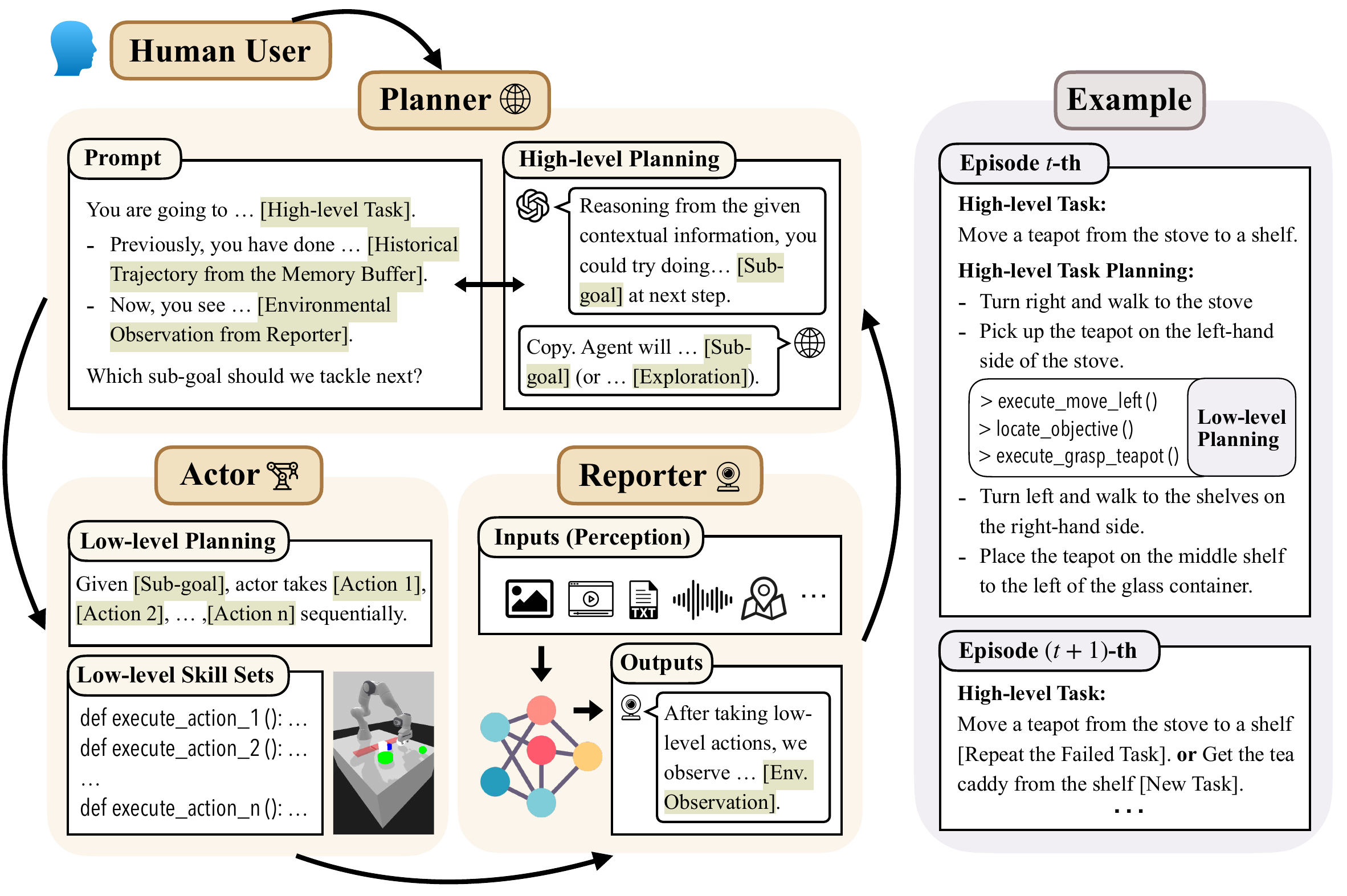}
\caption{\small Overview of the Planner-Actor-Reporter (PAR) system as LLM Agents. Acting as a central controller, the \planner\ conducts the high-level planning by storing the history and reasoning through the iterative use of the ICL ability of LLMs, coupled with explorations. The \actor\ handles low-level planning and executes subgoals using pre-programmed skill sets, and the \reporter\ perceives and processes multimodal information from environment to reinforce the ongoing planning.}
\label{fig:taskplannning}
\vspace{-3mm}
\end{figure}

In this work, we  make an initial step toward  developing a theoretical framework for understanding the dynamics and effectiveness of LLM Agents. Specifically, we aim to answer the following questions:
\myblue{(a)} What is a theoretical model for understanding the performance of LLM Agents?
\myblue{(b)} How do pretrained LLMs solve decision-making problems in the physical world via prompting?
\myblue{(c)} How does an LLM Agent address the exploration-exploitation tradeoff?
\myblue{(d)} How do the statistical errors of the pretrained LLM and Reporter affect the overall performance of the LLM Agent?

To address Question \myblue{(a)}, we propose analyzing LLM Agents within a hierarchical reinforcement learning framework \citep{barto2003recent,pateria2021hierarchical}, positioning the LLM Planner and the Actor as policies operating within high-level POMDPs and low-level MDPs, respectively (\S\ref{sec:PAR}). 
Both levels share the same state space—namely, the physical state—though the LLM Planner does not directly observe this state but instead receives a language-based description from the Reporter, effectively navigating a POMDP. 
The action space of the high-level POMDP is  the set of language subgoals. 
Meanwhile,  the state transition kernel is determined by the pretrained Actor, and thus is associated with a variable $z$ that summarizes its dependency on low-level Actor. Such a variable is unknown to the LLM Planner. 
After pretraining, without prior knowledge of the Actor's quality or the physical environment, the LLM Planner attempts to solve the high-level POMDP by iteratively generating a sequence of subgoals based on feedback from the Reporter via prompting.
Under this framework, the overall performance of the LLM Agent can be captured by the regret in terms of finding the optimal policy of the hierarchical RL problem in the online setting (\S\ref{sec:metric}).

Furthermore, to answer Question \myblue{(b)}, we prove that when the pretraining data includes a mixture of expert trajectories,   during the prompting stage, the pretrained LLM Planner essentially performs Bayesian aggregated imitation learning (BAIL) through in-context learning (Theorem \ref{thm:BAIL}). 
This process involves constructing a posterior distribution over the hidden parameter 
$z$of the transition kernel, followed by generating subgoals that emulate a randomly selected expert policy, weighted according to this posterior distribution.  
Such a Bayesian learning mechanism is encoded by the LLM architecture and is achieved through prompting.

However, since the LLM has no prior knowledge of the physical environment, it needs to guide the Actor to explore the physical environment. We prove that merely adhering to BAIL-derived subgoals can lead to the inadequate exploration, resulting in a linear regret (Proposition \ref{prop:hardexample}). 
To mitigate this, i.e., Question \myblue{(c)}, we introduce an 
$\epsilon$-greedy
 exploration strategy, which occasionally deviates from BAIL subgoals in favor of exploration, significantly enhancing learning efficacy by ensuring a  sublinear regret (Theorem \ref{thm:regret}). 
Specifically, to address Question \myblue{(d)} we establish that the regret is bounded by a sum of two terms (Theorem \ref{thm:practical regret}): a 
 $\sqrt{T}$-regret related to the number of episodes the LLM Agent is deployed to the hierarchical RL problem, and an additional term representing the  statistical error from pretaining the LLM Planner and Reporter via maximum likelihood estimation (MLE) and contrastive learning, respectively (Theorem \ref{thm:llmpretrain}, \ref{thm:translatorpretrain}).  

Finally, we extend our analysis to scenarios where the Planner utilizes the LLM as  world model for inferring the upper-level POMDP's transition model via Bayesian model aggregation (Proposition \ref{thm:bawm}, Corollary \ref{cor:bawm_regret}). Our theoretical framework also accommodates a multi-agent context, where the LLM Planner coordinates with a collaborative team of low-level actors (Corollary \ref{corol:perfectregret}).

\section{Preliminaries and Related Works}\label{sec:preliminaries}

\paragraph{Large Language Models.}
The Large Language Models (LLMs) such as ChatGPT \citep{brown2020language}, GPT-4 \citep{openai2023gpt}, Llama \citep{touvron2023llama}, and Gemini \citep{team2023gemini}, are pretrained on vast text corpora to predict in an \emph{autoregressive} manner. Starting from an initial token $\ell_1\in\mathfrak{L}\subseteq\RR^{d}$, where $d$ denotes the dimension of token vector and $\mathfrak{L}$ denotes the language space, the LLM, with parameters $\theta\in\Theta$, predicts the next token with $\ell_{t+1}\sim\mathtt{LLM}_\theta(\cdot\wvert S_t)$, where $S_t=(\ell_1,\dots,\ell_t)$ and $t\in\mathbb{N}$. Each token $\ell_t\in\mathfrak{L}$ specifies a word or word's position, and the token sequence $S_t$ resides in the space of token sequences $\mathfrak{L}^*$. Such an autoregressive generating process terminates when the stop sequence token is generated.
\vspace{-10pt}
\paragraph{In-Context Learning.} LLMs haved exhibited robust reasoning capabilities and a crucial aspect of their reasoning prowess is the in-context learning (ICL) ability. This ability is further enhanced through additional training stages \citep{iyer2022opt}, careful selection and arrangement of informative demonstrations \citep{liu2021makes, kim2022self}, explicit instruction \citep{honovich2022instruction}, and use of prompts to stimulate chain of thoughts \citep{wei2022chain}. Unlike fine-tuned models customized for specific tasks, LLMs showcase comparable capabilities by learning from the informative \emph{prompts} \citep{li2022pre,liu2022p}. Assume that prompt, denoted by $\prompt_t=(\ell_1,\dots,\ell_t)\in\mathfrak{L}^*$, is generated based on a latent variable $z\in\mathcal{Z}$ autoregressively. The token follows a generating distribution such that $\ell_{t}\sim\mathbb{P}(\cdot\wvert\prompt_{t-1},z)$ and $\prompt_t=(\prompt_{t-1},\ell_{t})$, where $\mathcal{Z}$ denotes the space of hidden information or concepts. The latent structure is commonly employed in language models, including topic models like LDA \citep{blei2003latent}, BERT \citep{devlin2018bert}, generative models like VAE \citep{kusner2017grammar}, T5 \citep{raffel2020exploring}, and is also widely adopted in the theoretical analysis of ICL \citep{xie2021explanation,zhang2023and}. 
Theoretical understanding of ICL is an active area of research. Since real-world datasets used for LLM pretraining are difficult to model theoretically and are very large, ICL has also been studied in stylized setups \citep{xie2021explanation,muller2021transformers,garg2022can,chan2022data,hahn2023theory,zhang2023and}. In this paper, we build upon the framework attributing the ICL capability to Bayesian inference \citep{xie2021explanation,jiang2023latent,zhang2023and}, which posits that the pretrained LLMs predict the next token with probability by aggregating the generating distribution concerning latent variable $z\in\mathcal{Z}$ over the posterior distribution. Moreover, a series of practical experiments, including \citet{wang2023large,ahuja2023context}, provide empirical support for this Bayesian statement.

\vspace{-10pt}
\paragraph{LLM Agents.}
LLMs, as highlighted in \cite{openai2023gpt}, are powerful tools for the task planning \citep{wei2022emergent,hu2023language}. The success of LLM agent marks a shift from task-specific policies to a pretrain-finetune-prompt paradigm. By breaking down the complex tasks into subgoals, LLM Agent facilitates the effective zero-shot resource allocation across environments. For instance, envision a scenario where a robotic arm is tasked with ``\emph{move a teapot from the stove to a shelf}", a task for which the robotic arm may not be pretrained.  
However, leveraging LLMs allows the decomposition of the task into a sequence of executable subgoals: ``\emph{grasp the teapot}", ``\emph{lift the teapot}", ``\emph{move the teapot to the shelf}", and ``\emph{release the teapot}".

In the conventional task-planning and decision-making problems, symbolic planners have commonly been employed to transform them into search problems \citep{bonet2001planning,ghallab2004automated} or to design distinct reinforcement learning or control policies for each specific scenario. Recent empirical studies have shifted towards leveraging LLMs as symbolic planners in various domains, including robotic control \citep{mandi2023roco,brohan2023can,li2023interactive,du2023video}, autonomous driving \citep{wang2023empowering,fu2024drive} and personal decision assistance \citep{li2022pre,lin2023grounded,hu2023tree,liu2023reason,nottingham2023embodied}. Another recent line of research has been dedicated to devising diverse prompting schemes to enhance the reasoning capability of LLMs \citep{wei2022chain,yao2023tree,yao2023retroformer,hao2023reasoning}. Despite the considerable empirical success, there is a lack of comprehensive theoretical analysis on LLM Agent. In this paper, we formalize this approach into a hierarchical LLM-empowered planning framework and provide a theoretical analysis of its performance.
 Two recent works by \citet{liu2023reason} and \citet{lee2023supervised} also aim to establish provable algorithms for planning with LLMs or decision-pretrained Transformers (DPT).  In comparison, we discuss both the plausibility of taking LLMs as a subgoal generator \citep{lee2023supervised} and simulated world model \citep{liu2023reason}. Furthermore, we provide a statistical guarantee for pretrained models and conduct a detailed examination of the algorithm's performance in practical settings, bringing our analysis closer to real-world applications.

\section{Theoretical Framework for LLM Agents}
To formalize the architecture of LLM Agents, we propose a general theoretical framework---\underline{P}lanner-\underline{A}ctor-\underline{R}eporter (PAR) system. Furthermore, the problem is modeled as a hierarchical RL problem \citep{pateria2021hierarchical}. Specifically, the \planner, empowered by LLMs, conducts high-level task planning within the language space; the \actor, pretrained before deployment, undertakes low-level motion planning within the physical world; and the \reporter, equipped with a sensor to sense the physical environment, processes the information and feeds it back to the \planner, bridging the gap between language space and the physical world (see \S\ref{sec:PAR}). Additionally, we present the performance metric and pretraining methods of LLMs for the \planner\ and translators for the \reporter\ in \S\ref{sec:metric}.

\subsection{Planner-Actor-Reporter System}\label{sec:PAR} 
In this section, we delve into details of the PAR system under \ac{hmdp}. 
At the high level, the \planner\ empowered by LLM handles task planning by decomposing tasks into subgoals to solve a language-conditioned \ac{pomdp} with a finite horizon $H$.
At the low level, the \actor\ translates these subgoals into the actionable steps in the physical world to handle a language-conditioned \ac{mdp} with a finite horizon $H_a$\footnote{Throughout the paper, we use the notation $\bar{\cdot}$ to distinguish low-level elements from their high-level counterparts.}. Please refer to the right panel of Figure \ref{fig:taskplannning} for a detailed example of LLM Agent, and see Figure \ref{fig:HMDP} for an overview of the hierarchical interactive process.

\paragraph{Low-level MDP.~}Let $\cG\subseteq\mathfrak{L}$ be the space of language subgoals, $\cS$ and $\cA$ respectively denote the space of physical states and actions. At high-level step $h$, the low-level MDP is specified by a transition kernel $\TT_h=\{\TT_{h,\bar{h}}\}_{\bar{h}\in[H_a]}$ and the rewards that depends on subgoal $g\in\cG$. Following this, the \actor\ is modelled as a language-conditioned policy $\mu=\{\mu_g\}_{g\in\cG}$, where $\mu_g=\{\mu_{\bar{h}}(\cdot|\cdot,g)\}_{\bar{h}\in[H_a]}$ and $\mu_{\bar{h}}:\cS\times\cG\mapsto\Delta(\cA)$. Assume that the \actor\ stops at step $H_a+1$, regardless of the subgoal achievement. Subsequently, the \planner\ receives the observation of the current state $\bar{s}_{h,H_a+1}$ from the \reporter, and sends a new subgoal to the \actor\ based on the historical feedback.

\paragraph{High-level POMDP.~}Suppose that a low-level episode corresponds to a single high-level action of the \planner. Thus, the high-level \ac{pomdp} reuses the physical state space $\cS$ as the state space, but takes the subgoal space $\cG$ as the action space instead. Following this, the high-level transition kernel is jointly determined by the low-level policy $\mu$ and the physical transition kernel $\TT$ such that 
\begin{align}
\PP_{z,h}(s'&\wvert s,g)=\PP\big(\bar{s}_{h,H_a+1}=s'\wvert\bar{s}_{h,1}=s,a_{h,1:\bar{h}}\sim\mu_g,\bar{s}_{h,2:\bar{h}+1}\sim\TT_{h}\big),
\end{align} 
where we write $z =(\TT,\mu)$. 
Since the LLM-empowered \planner\ cannot directly process the physical states, it relies on some (partial) observations generated by the \reporter. Specifically, let $o_h\in\cO$ describe the physical state $s_h\in\cS$ in language through a translation distribution $\OO:\cO\mapsto\Delta(\cS)$, where $\cO\subseteq\mathfrak{L}$ denotes the space of observations. At each step $h\in[H]$, a reward $r_{h}(o_h,\omega)\in[0,1]$ is obtained, which depends on both the observation and the task $\omega\in\Omega$ assigned by human users. Here, $\Omega\subseteq\mathfrak{L}$ denotes the space of potential tasks in language. 

\paragraph{Interactive Protocol.~}
The \planner\ aims to determine a sequence of subgoal $\{g_h\}_{h\in[H]}$ such that when the \actor\ is equipped with policy $\pi=\{\pi_{h}\}_{h\in[H]}$, these subgoals maximize the expected sum of rewards. During task planning, the \planner\ must infer both \actor's intention, i.e., policy $\mu$, and the environment, i.e., physical transition kernel $\TT$, from the historical information. Thus, $z$ constitutes all the latent information to the high-level \planner, and denote $\cZ$ as the space of all potential latent variables with $|\cZ|<\infty$.
\begin{figure*}[t]
\centering
\includegraphics[width=0.95\textwidth]{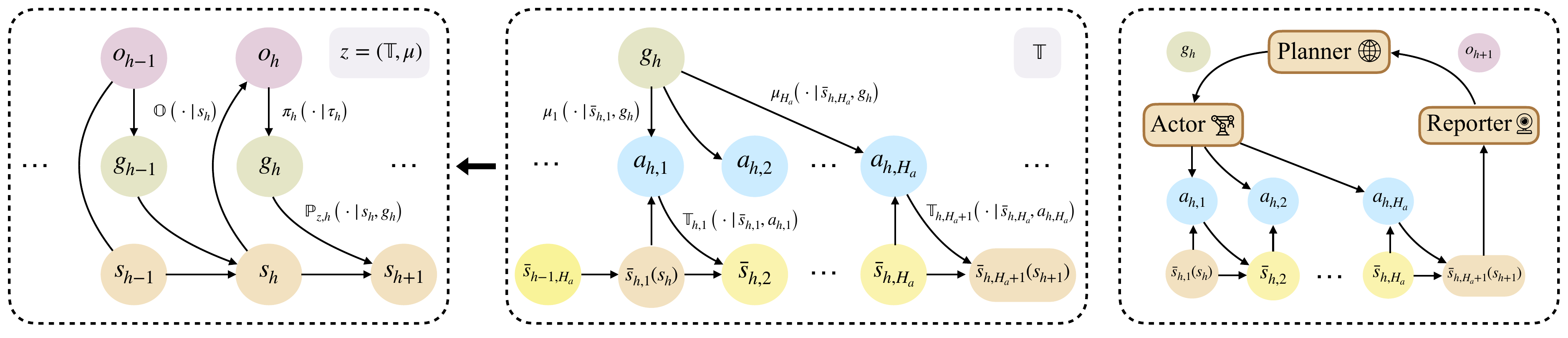}
\caption{\small Illustration of structure of \ac{hmdp}. The low-level MDP is featured by transition kernel $\TT$, which characterizes the dynamics of the physical environment. The high-level transition is a result of a sequence of low-level actions in the physical environment, guided by policies $\mu=\{\mu_g\}_{g\in\cG}$. Thus, high-level \ac{pomdp} incorporates latent information $z=(\TT,\mu)$ originated from the low-level.}
\label{fig:HMDP}
\end{figure*}
To summarize, the interactive protocol is as below: at the beginning of each episode $t$, \planner\ receives a task $\omega_t$. At step $h$, each module follows:
\vspace{-10pt}
\paragraph{Module~1:~Planner.} After collecting $o_{h}^t$ from \reporter,  the \planner\ leverages LLMs for recommendations on task decomposition, and the policy is denoted by $\pi^t_{h,\mathtt{LLM}}:\cT^*\times(\mathcal{O}\times\mathcal{G})^{h-1}\times\mathcal{O}\times\Omega\mapsto\Delta(\mathcal{G})$, where $\cT^*$ represents the space of the trajectory sequence with arbitrary length. LLM's recommendation is obtained by invoking the ICL ability with the history-dependent prompt:
\begin{equation}
 \prompt_h^t= \mathcal{H}_t\cup\left\{\omega^t,\tau_{h}^t\right\},\quad \mathcal{H}_t=\bigcup_{i=1}^{t-1}\left\{\omega^i,\tau_{H}^i\right\},
	\label{eq:prompt}
\end{equation}
where $\mathcal{H}_t\in\cT^*$ denotes the historical context and $\tau_{h}^t=\{o_{1}^t,g_{1}^t,\dots,o_{h}^t\}$ is the trajectory until $h$-th step. In the PAR system, \planner\ retains autonomy and is not obligated to follow LLM's recommendations. Let $\pi^t_h$ be the \planner's policy, which partially leverages the LLM's recommendation $\pi^t_{h,\mathtt{LLM}}(\cdot\wvert \tau_{h}^t,\omega^t)=\mathtt{LLM}_\theta(\cdot\wvert\prompt_h^t)$. The \planner\ selects $g_{h}^t\sim\pi_h^t(\cdot\wvert\tau_{h}^t,\omega^t)$, and sends it to the \actor.
 
\vspace{-5pt}
\paragraph{Module~2:~Actor.} Upon receiving $g_h^t$ from \planner,  the \actor\ plans to implement $g_h^t$ in physical world with pretrained skill sets, denoted by a subgoal-conditioned policy $\mu=\{\mu_g\}_{g\in\cG}$. A sequence of actions $\{a_{h,\bar{h}}\}_{\bar{h}\in[H_a]}$ is executed, where the dynamics follows $a_{h,\bar{h}}\sim\mu_{\bar{h}}(\cdot\wvert\bar{s}_{h,\bar{h}},g_h^t)$ and $\bar{s}_{h,\bar{h}+1}\sim\TT_{h,\bar{h}}(\cdot\wvert\bar{s}_{h,\bar{h}},a_{h,\bar{h}})$ starting from $\bar{s}_{h, 1}=s_h^t$. The low-level episode concludes at $s_{h+1}^t=\bar{s}_{h, H_a+1}$.
\vspace{-5pt}
\paragraph{Module~3:~Reporter.} After the low-level episode concludes, the \reporter\ collects and reports the current state $s_h^t$ via observation $o^t_{h+1}$ generated from $\mathbb{O}_\gamma(\cdot\wvert s^t_{h+1})$, where $\mathbb{O}_\gamma:\mathcal{S}\mapsto\Delta(\mathcal{O})$ denotes the distribution of the pretrained translator. Subsequently, the observation $o_{h+1}^t$ of the current state is sent back to the \planner, reinforcing to the ongoing task planning.

\vspace{10pt}
The strength of the PAR system lies in its resemblance to RL \citep{sutton2018reinforcement}, allowing the \planner\ to iteratively adjust its planning strategy based on feedback from the \reporter. Moreover, the \reporter\ empowers the system to process the real-time information and the integration of multiple modalities of raw data like RGB, images, LiDAR, audio, and text \citep{li2023multimodal,xu2023multimodal}. The \actor's skill sets can effectively be pretrained using the goal-conditioned RL \citep{chane2021goal,liu2022goal}, language-to-environment grounding \citep{brohan2023can,huang2022language} or pre-programmed manually \citep{singh2023progprompt}.

\subsection{Performance Metric and Pretraining}\label{sec:metric}

\paragraph{Performance Metric.} In this paper, we focus on the \emph{performance of the high-level \planner}, and regard the low-level \actor\ as an autonomous agent that can use the pretrained skill sets following a fixed policy. For any latent variable $z\in\mathcal{Z}$ and policy $\pi=\{\pi_h\}_{h\in[H]}$ with $\pi_h:(\mathcal{O}\times\mathcal{G})^{h-1}\times\mathcal{O}\times\Omega\mapsto\Delta(\mathcal{G})$, the value function is defined as
\begin{equation}
\mathcal{J}_{z}(\pi,\omega):=\mathbb{E}_{\pi}\left[\sum_{h=1}^Hr_{h}\left(o_{h},\omega\right)\right],
\label{eq:value}	
\end{equation}
where the expectation is taken concerning the initial state $s_1\sim\rho$, policy $\pi$, ground-truth translation distribution $\mathbb{O}$, and transition kernel $\mathbb{P}_z$. For all $(z,\omega)\in\cZ\times\Omega$, there exists an optimal policy $\pi_{z}^*\left(\omega\right)={\rm argmax}_{\pi\in\Pi} \mathcal{J}_{z}(\pi,\omega)$, where $\Pi=\{\pi=\{\pi_h\}_{h\in[H]},\pi_h:(\mathcal{O}\times\mathcal{G})^{h-1}\times\mathcal{O}\times\Omega\mapsto\Delta(\mathcal{G})\}$. 

To characterize the performance under practical setting, we denote $\mathcal{\hat J}_{z}(\pi,\omega)$ as the value function concerning the pretrained translator $\mathbb{O}_{\hat\gamma}$, and for all $\omega\in\Omega$, let $\hat{\pi}_{z}^*\left(\omega\right)={\rm argmax}_{\pi\in\Pi}\ \hat{\mathcal{J}}_{z}(\pi,\omega)$ be the optimal policy in practice. Then, the regret under practical setting is defined as
\begin{equation}
{\rm Reg}_z(T):=\sum_{t=1}^T\EE_{\cH_t}\left[\mathcal{\hat{J}}_z(\hat{\pi}_z^*,\omega^t)-\mathcal{\hat{J}}_z(\hat{\pi}^t,\omega^t)\right],
\end{equation}
where $\{\hat{\pi}^t\}_{t\in[T]}$ represents the \planner's policy empowered by a pretrained $\mathtt{LLM}_{\hat{\theta}}$ and the expectation is taken with respect to the context $\cH_t$ defined in \eqref{eq:prompt} generated by taking $\{\hat{\pi}^{i}\}_{i<t}$ sequentially. Here, we focus on the performance when the \planner\ collaborates with a pretrained PAR system in an environment characterized by $z$ and pretrained \reporter. Our goal is to design a sample-efficient algorithm that achieves a sublinear regret, i.e., ${\rm Reg}_z(T)=o(T)$. 

\paragraph{Pretraining Dataset Collection.}
The pretraining dataset consists of $N_{\rm p}$ independent samples with $T_{\rm p}$ episodes such that $\cD=\{D_n\}_{n\in[N_{\rm p}]}$, where $D_n=\{z\}\cup\{\omega^t,\tau_H^t,g_{1:H}^{t,*},s_{1:H}^t\}_{t\in[T_{\rm p}]}$. For each sample, $z\sim\cP_\cZ$ specifies a low-level MDP with language-conditioned policies and $\omega^t{\sim}\cP_\Omega$ specifies the sequence of high-level tasks. Here, $\mathcal{P}_\mathcal{Z}$ and $\cP_\Omega$ denote the prior distributions. We assume that the joint distribution of each data point $D$ in the dataset, denoted by $\PP_\cD$, follows that:
\begin{align}
\PP_\cD(D)&=\cP_\cZ(z)\cdot\prod_{t=1}^{T_{\rm p}}\cP_\Omega(\omega^t)\cdot\prod_{h=1}^{H_{\rm }}\pi^*_{z,h}(g_h^{t,*}\wvert\tau_h^t,\omega^t)\nonumber\\
&\quad \cdot\mathbb{O}(o_h^t\wvert s_h^{t})\cdot\pi^b_{h}(g_h^t\wvert\tau_h^t,\omega^t)\cdot\PP_{z,h}(s_{h+1}^t\wvert s_h^{t},g_h^{t}),
\label{eq:joint_distribution}
\end{align}
where $\pi^b=\{\pi^b_h\}_{h\in[H]}$ is the behavior policy that features how the contextual information is collected, and additionally the label, i.e., optimal subgoal, is sampled from the optimal policy $\pi^*_{z}$ by experts. Subsequently, the latent information $z$ is hidden from the context. 
\paragraph{LLM Pretraining.}
To pretrain LLMs, we adopt a supervised learning approach concerning the transformer structure, aligning with the celebrated LLMs such as BERT and GPT \citep{devlin2018bert,brown2020language}. Specifically, the pretraining data is constructed based on $\cD$. For clarity, we extract the language data without expert knowledge and write the collected data into a sequence of ordered tokens, i.e., sentences or paragraphs. For the $n$-th sample $D_n$, we write
\begin{align}
(\ell_1^n,\dots,\ell^n_{\bar{T}_{\rm p}}):=\left(\omega^{n,t},o_1^{n,t},g_1^{n,t},\dots,o_{H-1}^{n,t},g_{H-1}^{n,t},o_H^{n,t}\right)_{t\in[T_{\rm p}]},
\end{align}
 with a length of $\bar{T}_{\rm p}=2HT_{\rm p}$, which contains $T_{\rm p}$ episodes with one task, $H$ observations and $H-1$ subgoals each. Following this, LLM's pretraining dataset is autoregressively constructed with the expert guidance, denoted by $\mathcal{D}_\mathtt{LLM}=\{(\tilde{\ell}_t^{n},S_t^n)\}_{(n,t)\in[N_{\rm p}]\times[\bar{T}_{\rm p}]}$, where $S_{t+1}^n=(S_t^n,\ell^n_t)$ and let
 $$
\left\{
\begin{aligned}
	&\tilde{\ell}_{t'}^n=g_h^{n,t,*}\hspace{0.4cm}\text{if~}t'=2H(t-1)+2h+1,\\
	&\tilde{\ell}_{t'}^n=g_h^{n,t}\hspace{0.65cm}\text{otherwise}.
\end{aligned}\right.
$$
In other words, when pretraining to predict the next subgoal, we replace the one sampled from the behavior policy with the one from the optimal policy. In practice, sentences with expert knowledge can be collected from online knowledge platforms such as Wikipedia \citep{merity2016pointer, reid2022can}. Following the pretraining algorithm of BERT and GPT, the objective is to minimize the cross-entropy loss, which can be summarized as $\hat{\theta}={\rm argmin}_{\theta\in\Theta}\ \mathcal{L}_\mathrm{CE}(\theta;\mathcal{D}_\mathtt{LLM})$ with
\begin{align}
	\mathcal{L}_\mathrm{CE}(\theta;\mathcal{D}_\mathtt{LLM}):=\mathbb{\hat E}_{\mathcal{D}_\mathtt{LLM}}\left[-\log\mathtt{LLM}_\theta(\ell\wvert S)\right], 
	\label{eq:crossentropyloss}
\end{align}
and $\mathtt{LLM}_{\hat{\theta}}$ is the pretrained LLM by algorithm in \eqref{eq:crossentropyloss}. More details are deferred to \S\ref{sec:llmpretrain}.

\paragraph{Translator Pretraining.}
To pretrain translators, we employ a self-supervised contrastive learning approach, which aligns with celebrated vision-language models such as CLIP \citep{radford2021learning} and ALIGN \citep{jia2021scaling}. Let $\mathcal{D}_\mathtt{Rep}$ be the contrastive pretraining dataset for translators, which is also constructed upon the dataset $\cD$. Following the framework adopted in \cite{qiu2022contrastive,zhang2022making}, for each observation-state pair $(o,s)\in\cD$, a positive or a negative data point, labelled as $y=1$ and $y=0$, is generated with equal probability, following that
\begin{itemize}[leftmargin=*]
	\setlength{\itemsep}{-1pt}
	\item[-] \textbf{Positive Data:} Collect $(o,s)$ with label $y=1$.
	\item[-] \textbf{Negative Data:} Collect $(o,s^-)$ with label $y=0$, where $s^-$ is sampled from negative sampling distribution $\mathcal{P}^{-}\in\Delta(\cO)$ that has a full support over the domain of interest.
\end{itemize}
Denote $\PP_{\cC}$ as the joint distribution of data collected by the process above. The learning algorithm follows that $\hat{\gamma}={\rm argmin}_{\gamma\in\Gamma}\ \mathcal{L}_\mathrm{CT}(\gamma;\mathcal{D}_\mathtt{Rep})$, where the contrastive loss $\mathcal{L}_{\rm CT}(\gamma;\mathcal{D}_\mathtt{Rep})$ is defined as
\begin{align}
	\mathcal{L}_{\rm CT}(\gamma;\mathcal{D}_\mathtt{Rep}):=\mathbb{\hat E}_{\mathcal{D}_\mathtt{Rep}}[y\cdot\log\left(1+{f_\gamma(o, s)^{-1}}\right)+(1-y)\cdot\log\left({1+f_\gamma(o,s)}\right)].\label{eq:contrastive_loss}
\end{align}
 Consider function class $\cF_\gamma$ with finite elements with $\mathcal{F}_\gamma\subseteq(\mathcal{S}\times\mathcal{O}\mapsto\mathbb{R})$ serving as a set of candidate functions that approximates the ground-truth likelihood ratio $f^*(\cdot,\cdot)=\mathbb{O}(\cdot\wvert\cdot)/\mathcal{P}^-(\cdot)$ (see Lemma \ref{lem:targetcontrastive} for justification). Following this, the pretrained translator for the \reporter\ by the algorithm in \eqref{eq:contrastive_loss} is thus defined as $\mathbb{O}_{\hat{\gamma}}(\cdot\wvert\cdot)=f_{\hat{\gamma}}(\cdot,\cdot)\cdot\mathcal{P}^-(\cdot)$. More details are deferred to \S\ref{sec:contrastivepretrain}.
 
 \begin{remark}
 	In \eqref{eq:joint_distribution}, we assume that all pretraining data is generated from a joint distribution $\PP_\cD$, and then split for pretraining of LLM and \reporter. In practice, the pretraining dataset for the \reporter\ can consist of paired observation-state data collected from any arbitrary distribution, as long as (\romannumeral1) the LLM and \reporter\ ``speak" the same language, i.e., shared $\OO$, and (\romannumeral2) the coverage assumption can hold (see Assumption \ref{as:onlinecoverage}). 
 \end{remark}
 \begin{remark}
 	As an example, noise contrastive estimation \citep[NCE,][]{gutmann2010noise} is one of the most widely adopted objectives in contrastive representation learning. From the theoretical lens, to estimate unnormalized model $p_d$ with $x_i\overset{\rm iid}{\sim}p_d$, additional noise data is sampled from a reference distribution $p_n$ and then estimate by maximizing $\hat{\EE}[y\cdot\log(h_\gamma(x))+(1-y)\cdot\log(1-h_\gamma(x))]$ with $y=\ind(x\text{~is not noise})$ and $h^*(x)=p_d(x)/(p_d(x)+p_n(x))$. With slight modifications, we use a function class $\cF$ to approximate the ratio $p_d/p_n$ rather than the relative probability $h$ itself. In practice, the most commonly used contrastive training objectives are variations of NCE and originated from the NLP domain \citep{schiappa2023self} by sharing the same idea of \emph{minimizing the distance between the positive pair and maximizing the distance between the negative pairs}.
 \end{remark}

\section{LLM Planning via Bayesian Aggregated Imitation Learning}\label{sec:bail}
In this section, we first demonstrate that LLMs can conduct high-level planning through \ac{bail} in \S\ref{sec:BAIL}, leveraging the ICL ability of LLMs with the history-dependent prompts. However, depending solely on LLM's recommendations proves insufficient for achieving sample efficiency under the worst case (see Proposition \ref{prop:hardexample}). Following this, we propose a planning algorithm for  \planner\ in \S\ref{sec:planningalg}, leveraging LLMs for expert recommendations, in addition to an exploration strategy.

\subsection{Bayesian Aggregated Imitation Learning}\label{sec:BAIL}
In this subsection, we show that the LLM conducts high-level task planning via \ac{bail}, integrating both Bayesian model averaging \citep[BMA,][]{hoeting1999bayesian} during the online planning and imitation learning \citep[IL,][]{ross2010efficient} during the offline pretraining. Intuitively, pretrained over $\mathcal{D}_\mathtt{LLM}$, LLM approximates the conditional distribution $\mathtt{LLM}(\ell=\cdot\wvert S)=\PP_{\cD}(\ell=\cdot\wvert S)$, where $\PP_{\mathcal{D}}$ is the joint distribution in \eqref{eq:joint_distribution} and the randomness introduced by the latent variable is aggregated, i.e., $\PP_{\cD}(\ell=\cdot\wvert S)=\mathbb{E}_{z\sim\PP_{\cD}(\cdot|S)}\left[\PP_{\cD}(\ell=\cdot\wvert S,z)\right]$. Here, $\PP_{\cD}(\ell=\cdot\wvert S,z)$ can be viewed as a generating distribution with a known $z$ and is then aggregated over the posterior distribution $\PP_{\cD}(z=\cdot\wvert S)$, aligning with the form of BMA \citep{zhang2023and}. We temporarily consider the perfect setting.
\begin{definition}[Perfect Setting]
	We say the PAR system is perfectly pretrained if 
	(\romannumeral1) $\mathbb{O}_{\hat\gamma}(\cdot\wvert s)=\mathbb{O}(\cdot\wvert s)$ for all $s\in\mathcal{S}$,
	(\romannumeral2) $\mathtt{LLM}_{\hat\theta}(\cdot\wvert S_t)=\mathtt{LLM}(\cdot\wvert S_t)$ for all $S_t=(\ell_1,\dots,\ell_t)\in\mathfrak{L}^*$ with length $t\leq\bar{T}_{\rm p}$.
	\label{as:perfect}
\end{definition}
The assumption states that the \reporter\ and LLMs can report and predict with ground-truth distributions employed based on the joint distribution $\PP_\mathcal{D}$. During ICL, we invoke LLMs by history-dependent $\prompt_h^t=\mathcal{H}_t\cup\{\omega^t,\tau_{h}^t\}\in\mathfrak{L}^*$ for all $(h,t)\in[H]\times[T]$. Conditioned on latent variable $z$ and $\prompt_h^t$, the generating distribution is the optimal policy such that $\PP_{\cD}(\cdot\wvert\prompt_h^t,z)=\pi^*_{z,h}(\cdot\wvert \tau_{h}^t,\omega^t)$, which is independent of historical $\mathcal{H}_t$. In this sense, LLMs imitate expert policies during pretraining. The proposition below shows that LLMs conduct task planning via \ac{bail}.
\begin{proposition}[LLM Performs BAIL]
Assume that the pretraining data distribution is given by \eqref{eq:joint_distribution}. Under the perfect setting in Definition  \ref{as:perfect}, for all $(h,t)\in[H]\times[T]$, the LLM conducts task planning via \ac{bail}, following that
\begin{align*}
\pi_{h,\mathtt{LLM}}^t\left(\cdot\wvert\tau_{h}^t,\omega^t\right) = \sum_{z\in\mathcal{Z}}\pi^*_{z,h}\left(\cdot\wvert\tau_{h}^t,\omega^t\right) \cdot \mathbb{P}_{\cD}\left(z\wvert\prompt_h^t\right),
\end{align*}
where $\pi_{h,\mathtt{LLM}}^t$ denotes the LLM's policy and prompt is defined in \eqref{eq:prompt}.
\label{thm:BAIL}
\end{proposition}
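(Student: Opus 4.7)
The plan is to reduce the claim to a one-line Bayesian marginalization identity applied to the joint data distribution $\mathbb{P}_\mathcal{D}$ in \eqref{eq:joint_distribution}, exploiting two ingredients of the perfect setting of Definition \ref{as:perfect}: (i) the pretrained LLM recovers the true next-token conditional, $\mathtt{LLM}_{\hat\theta}(\cdot\wvert \prompt_h^t)=\mathtt{LLM}(\cdot\wvert \prompt_h^t)$, and (ii) this conditional is exactly the next-subgoal law induced by the cross-entropy minimizer of \eqref{eq:crossentropyloss}, namely $\mathbb{P}_\mathcal{D}(g_h^{t,*}=\cdot\wvert \prompt_h^t)$. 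Assembling these observations with the law of total probability over the hidden latent $z$ will yield the stated BAIL decomposition.

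First I would unfold the planner module. By Module~1 in \S\ref{sec:PAR}, the LLM-driven policy is $\pi_{h,\mathtt{LLM}}^t(\cdot\wvert \tau_h^t,\omega^t)=\mathtt{LLM}_{\hat\theta}(\cdot\wvert \prompt_h^t)$, which by Definition \ref{as:perfect}(ii) equals $\mathtt{LLM}(\cdot\wvert \prompt_h^t)$. Recognizing the latter as the Bayes-optimal predictor of the next subgoal token under the pretraining distribution, I rewrite it as $\mathbb{P}_\mathcal{D}(g=\cdot\wvert \prompt_h^t)$ and apply the law of total probability,
\begin{align*}
\mathbb{P}_\mathcal{D}\bigl(g=\cdot\wvert \prompt_h^t\bigr)=\sum_{z\in\mathcal{Z}}\mathbb{P}_\mathcal{D}\bigl(g=\cdot\wvert \prompt_h^t,z\bigr)\cdot\mathbb{P}_\mathcal{D}\bigl(z\wvert \prompt_h^t\bigr),
\end{align*}
which already exhibits the posterior weighting $\mathbb{P}_\mathcal{D}(z\wvert \prompt_h^t)$ on the right-hand side.

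The remaining task is to identify the $z$-conditional factor with the expert policy. Reading off the factorization \eqref{eq:joint_distribution}, the subgoal label at step $h$ of episode $t$ is drawn from $\pi_{z,h}^{*}(\cdot\wvert \tau_h^t,\omega^t)$; crucially, given $z$ it depends on the past only through the current trajectory $\tau_h^t$ and the current task $\omega^t$, while all prior episodes in $\mathcal{H}_t$ and all observation-generating and transition factors split off as multiplicative terms common to numerator and denominator when forming the conditional. I would make this conditional independence rigorous by writing $\mathbb{P}_\mathcal{D}(\prompt_h^t,g\wvert z)$ via the chain rule along the tokenization of $\prompt_h^t$ and cancelling the common marginal $\mathbb{P}_\mathcal{D}(\prompt_h^t\wvert z)$, leaving precisely $\pi_{z,h}^{*}(g\wvert \tau_h^t,\omega^t)$. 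Substituting this identity into the display above finishes the argument.

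The main obstacle is purely bookkeeping: aligning the token-level conditioning used inside $\mathtt{LLM}(\cdot\wvert \prompt_h^t)$ with the trajectory-level factorization of \eqref{eq:joint_distribution}, and verifying that behavior-policy subgoals in $\mathcal{H}_t$ and in the ``shifted'' label convention of $\mathcal{D}_\mathtt{LLM}$ enter only through the posterior on $z$ rather than through the $z$-conditional generating distribution. This is exactly where the construction of $\mathcal{D}_\mathtt{LLM}$ matters: subgoal positions carry the expert label $g_h^{n,t,*}$, so conditional on $z$ the one-step predictive law is $\pi_z^{*}$ and not a mixture with $\pi^b$. Once this Markov-given-$z$ property is spelled out, the claim reduces to the single Bayes identity above.
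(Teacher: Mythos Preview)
Your proposal is correct and follows essentially the same approach as the paper: apply the law of total probability over the latent $z$ to $\mathbb{P}_{\mathcal{D}}(g\wvert\prompt_h^t)$, then use the factorization \eqref{eq:joint_distribution} to identify the $z$-conditional subgoal law with $\pi^*_{z,h}(\cdot\wvert\tau_h^t,\omega^t)$, noting that conditioning on $z$ renders the history $\mathcal{H}_t$ irrelevant. Your discussion of the bookkeeping around the $\mathcal{D}_\mathtt{LLM}$ label construction is more explicit than the paper's own proof, which dispatches the conditional-independence step in a single phrase.
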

\begin{proof}[Proof of Proposition \ref{thm:BAIL}.]
	Please refer to \S\ref{ap:bil} for a detailed proof.
\end{proof}

Proposition \ref{thm:BAIL} suggests that LLMs provide recommendations following a two-fold procedure: Firstly, LLMs compute the posterior belief of each latent variable $z\in\mathcal{Z}$ from $\prompt_h^t$. Secondly, LLMs aggregate the optimal policies over posterior probability and provide recommendations.

\subsection{LLM-Empowered Planning Algorithm}\label{sec:planningalg}
\begin{algorithm}[t] 
	\caption{Planning with PAR System - \planner} 
	\begin{algorithmic}[1]
		\renewcommand{\algorithmicrequire}{\textbf{Input:}}
		\Require Policy $\pi_{\mathtt{exp}}$ with $\eta\in(0,1)$, $c_\cZ>0$, and $|\cZ|\in\NN$.
		\renewcommand{\algorithmicrequire}{\textbf{Initialize:}}
		\Require $\mathcal{H}_0\leftarrow\{\}$, and $\epsilon\leftarrow(H\log(c_\mathcal{Z}|\mathcal{Z}|\sqrt{T})/T\eta)^{1/2}$. 
		\For{episode $t$ from $1$ to $T$}
		\State Receive the high-level task $\omega^t$ from the human user.
		\State Sample $\mathcal{I}_t\sim\text{Bernuolli}(\epsilon)$.
		\For{step $h$ from $1$ to $H$}
		\State Collect the observation $o_{h}^t$ from the \reporter.
		\State Set $\prompt_h^t\leftarrow\mathcal{H}_t\cup\{\omega^t,o_1^t,\dots,o_h^t\}$.
		\State Sample $g_{h,\mathtt{LLM}}^t\sim\mathtt{LLM}(\cdot\wvert\prompt_h^t)$ via prompting LLM.
		\State \textbf{If} $\mathcal{I}_t=1$ \textbf{then} $g_{h}^t\leftarrow g_{h,\mathtt{LLM}}^t$, \textbf{else} sample $g_{h}^t\sim\pi_{h,\mathtt{exp}}(\cdot\wvert \tau_{h}^t)$.
		\State Send the subgoal $g_h^t$ to the \actor.
		\EndFor
		\State Update $\mathcal{H}_{t+1}\leftarrow\mathcal{H}_{t}\cup\{\omega^t,\tau_{H}^t\}$.
		\EndFor	
	\end{algorithmic} 
	\label{alg:planner}
\end{algorithm}
Following the arguments above, we propose a planning algorithm for the \planner\ within a perfect PAR system. From a high level, the process of task planning is an implementation of policies from imitation learning \citep{ross2010efficient,ross2011reduction} with two key distinctions: (\romannumeral1) \planner\ collaborates with LLM, a ``nascent" expert that learns the hidden intricacies of the external world from updating prompts; (\romannumeral2) different from behavior cloning or inverse RL, \planner\ does not aim to comprehend LLM's behaviors. Instead, the imitation is accomplished during the offline pretraining, and \planner\ shall selectively adhere to LLM's suggestions during online planning. Next, we show that task planning solely guided by LLMs fails to achieve sample efficiency in the worst case.
\begin{proposition}[Hard-to-Distinguish Example] 
	\label{prop:hardexample}
	Suppose that Definition \ref{as:perfect} holds. Given any $T\in\mathbb{N}$, there exists an HMDP and specific latent variable $z\in\mathcal{Z}$ such that if \planner\ strictly follows LLM's recommended policies in Proposition  \ref{thm:BAIL}, it holds that ${\rm Reg}_z(T)\geq 0.5T\cdot(1-1/|\mathcal{Z}|)^T$.
\end{proposition}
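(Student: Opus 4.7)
The plan is to construct a single-step ``password''-style HMDP where the \planner\ following the posterior-averaged policy from Proposition \ref{thm:BAIL} provably underexplores. I would take $H=1$, $|\cZ|=K$, subgoal set $\cG=\{g_1,\dots,g_K\}$, and prior $\cP_\cZ$ uniform on $\cZ$. Under latent $z_k$, the optimal subgoal is $g_k$, yielding a ``good'' observation $o^{good}$ with reward $1$; every other $g_i$ ($i\neq k$) deterministically produces the same ``bad'' observation $o^{bad}$ with reward $0$. Fix the true latent variable to be $z_1$. By Definition \ref{as:perfect} and Proposition \ref{thm:BAIL}, the \planner's policy at episode $t$ is a posterior-weighted mixture over the deterministic experts $\pi^*_{z_k}$, which in this construction reduces to a uniform draw from the subgoals corresponding to hypotheses currently in the posterior's support.

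I would then track the posterior across episodes. If the \planner\ picks $g_i$ and observes $o^{bad}$, then $z_i$ is eliminated (since under $z_i$ the subgoal $g_i$ would almost surely have produced $o^{good}$), while every other $z_j$ remains equally likely. Hence the posterior stays uniform on the not-yet-eliminated subset, and a direct sampling-without-replacement calculation shows that the first episode $\tau$ at which the \planner\ picks $g_1$ is uniformly distributed on $\{1,\dots,K\}$. This gives the closed-form expected regret
\begin{align*}
\EE[{\rm Reg}_z(T)] = \frac{T(2K-T-1)}{2K}\ \text{for }T\le K,\qquad \EE[{\rm Reg}_z(T)]=\frac{K-1}{2}\ \text{for }T\ge K.
\end{align*}

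The main algebraic step is to show $\EE[{\rm Reg}_z(T)]\ge 0.5\,T(1-1/K)^T$. For $T\le K$ this reduces to $2-(T+1)/K\ge (1-1/K)^T$; both sides are decreasing in $T$, and the inequality can be checked at the endpoints $T=1$ and $T=K$ using the elementary bound $(1-1/K)^K\le 1-1/K$, then extended to the interior by a monotonicity comparison. For $T>K$ I would invoke the global maximum $\max_T T(1-1/K)^T\le K/e$ (obtained by elementary calculus on $t\mapsto t(1-1/K)^t$), which yields $0.5\,T(1-1/K)^T\le K/(2e)\le (K-1)/2$ for $K\ge 2$.

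The principal obstacle is designing the observation structure so that wrong guesses eliminate exactly one hypothesis and do not bias the posterior away from uniformity on the remaining support. This hinges on $o^{bad}$ being identical across all non-guessed hypotheses---otherwise the posterior could concentrate asymmetrically and the clean sampling-without-replacement analysis would break down. Once this symmetry is arranged, the rest of the argument is the elementary computation of $\EE[\tau]$ and the algebraic inequality above.
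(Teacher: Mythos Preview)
Your construction is correct and yields the stated bound, but it is genuinely different from the paper's. The paper takes $H=2$, only \emph{two} subgoals $\{g_1,g_2\}$, and $N=|\cZ|$ latent models in which $g_1$ is simultaneously (i) optimal for all $z_i$ with $i\neq 1$ and (ii) completely uninformative (it deterministically returns to $s_1$ regardless of $z$). Consequently the BAIL policy plays $g_1$ with probability $1-1/N$ at every step along the ``stuck'' history, the posterior never moves, and the single event $\{\text{play }g_1\text{ in all }T\text{ episodes}\}$ already carries per-episode regret $0.5$ with probability $(1-1/N)^T$. This is essentially a one-line lower bound once the instance is written down. Your password instance instead makes every wrong guess informative, forcing you to track a sampling-without-replacement process and then verify the algebraic inequality $T(2K-T-1)/(2K)\ge 0.5T(1-1/K)^T$; you gain an exact closed form for the regret, but the proof is longer.

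Two small points to tighten. First, in the paper's convention an episode records $H$ observations and $H{-}1$ subgoals (see the pretraining token layout), so with $H=1$ there is no action and no feedback in $\cH_t$; you should set $H=2$ with $r_1\equiv 0$ and $r_2(o^{good})=1$, $r_2(o^{bad})=0$, which leaves the rest of your argument unchanged. Second, ``both sides decreasing, check endpoints'' is not by itself a valid interpolation; what actually works is that $f(T)=2-(T{+}1)/K-(1-1/K)^T$ has $f'(T)=-1/K+(1-1/K)^T|\ln(1-1/K)|$, which is monotone decreasing in $T$, hence $f$ is unimodal and its minimum on $[1,K]$ is attained at an endpoint---and you have already checked $f(1),f(K)\ge 0$.
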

\begin{proof}[Proof of Proposition \ref{prop:hardexample}.]
	Please refer to \S\ref{ap:hard_exp} for a detailed proof.
\end{proof}

Proposition \ref{prop:hardexample} indicates that relying solely on LLMs for task planning can result in a suboptimal $\Omega(T)$ regret in the worst case when $|Z|=T$. Thus, additional exploration is essential to discern the latent information about the external world, a parallel to the practical implementations in latent imitation learning \citep{edwards2019imitating,kidambi2021mobile} and LLM-based reasoning \citep{hao2023reasoning,nottingham2023embodied}. In practice, while the language model can guide achieving a goal, it's important to note that \emph{this guidance is not grounded in real-world observations}. Thus, as pointed out by \cite{grigsby2023amago}, the information provided in narratives might be arbitrarily wrong, which highlights the need for exploration to \emph{navigate new environments effectively}. Similar to $\epsilon$-greedy algorithms \citep{tokic2011value,dann2022guarantees}, we provide a simple but efficient algorithm for LLM-empowered task planning. Algorithm \ref{alg:planner} gives the pseudocode. In each episode, the \planner\ performs two main steps:
\begin{itemize}[leftmargin=*]
	\setlength{\itemsep}{-1pt}
	\item[-] \textbf{Policy Decision}\ ($\mathtt{Line\ 5}$): Randomly decide whether to execute the exploration policy $\pi_\mathtt{exp}$ or follow the LLM's recommendations within this episode with probability $\epsilon$.
	\item[-] \textbf{Planning with LLMs}\ ($\mathtt{Line\ 7-10}$): If \planner\ decides to follow the LLM's recommendations, the subgoal is obtained by prompting LLMs with $\prompt_h^t=\mathcal{H}_t\cup\{\omega^t,\tau_{h}^t\}$, equivalently sampling from $\mathtt{LLM}(\cdot\wvert\prompt_h^t)$. Otherwise, the \planner\ takes sub-goal from $\pi_{h,\mathtt{exp}}(\cdot\wvert \tau_h^t)$.
\end{itemize}
In conventional $\epsilon$-greedy algorithms, explorations are taken uniformly over the action space $\mathcal{G}$, i.e., $\pi_\mathtt{exp}={\rm Unif}_\mathcal{G}$. Recent work has extended it to a collection of distributions (e.g., softmax, Gaussian noise) for function approximation \citep{dann2022guarantees}. Following this, we instead consider a broader class of exploration strategies that satisfy the $\eta$-distinguishability property below.
\begin{definition}[$\eta$-distinguishability]
 \label{def:iden}
We say an exploration policy $\pi_{\mathtt{exp}}=\{\pi_{h,\mathtt{exp}}\}_{h\in[H]}$ is $\eta$-distinguishable if there exists an absolute constant $\eta>0$ such that for all $z,z'\in\mathcal{Z}$ with $z\neq z'$, it holds that $D_{\rm H}^2\left(\mathbb{P}^{\pi_\mathtt{exp}}_{z}(\tau_{H}),\mathbb{P}^{\pi_\mathtt{exp}}_{z'}\left(\tau_{H}\right)\right)\geq\eta$.
 \end{definition}
The $\eta$-distinguishability implies the existence of exploration policy  $\pi_\mathtt{exp}$ that could well-distinguish the models with an $\eta$-gap in Hellinger distance concerning the distribution of whole trajectory, which also impose condition over the model seperation. 
Next, we introduce the assumption over priori.

\begin{assumption}[Prior coverage]
	There exists a constant $c_\mathcal{Z}>0$ such that $\sup_{z,z'}\frac{\mathcal{P}_\mathcal{Z}(z')}{\mathcal{P}_\mathcal{Z}(z)}\leq c_\mathcal{Z}$.
	\label{as:coverage}
\end{assumption}
\vspace{-2mm}
The assumption asserts a bounded ratio of priors, implying that each $z \in \mathcal{Z}$ has a non-negligible prior probability. The assumption is intuitive, as a negligible priori suggests such a scenario almost surely does not occur, rendering the planning in such scenarios unnecessary. Now, we are ready to present the main theorem of the \planner\ under perfect setting.
 \begin{theorem}[Regret under Perfect Setting]
 \label{thm:regret}
Suppose that Definition \ref{as:perfect} and Assumption \ref{as:coverage} hold. Given an $\eta$-distinguishable exploration policy $\pi_{\mathtt{exp}}$ and $T\leq T_{\rm p}$, Algorithm \ref{alg:planner} ensures
\begin{align*}
{\rm Reg}_z(T)&:=\sum_{t=1}^T\EE_{\cH_t}\left[\mathcal{J}_z(\pi_z^*,\omega^t)-\mathcal{J}_z(\pi^t,\omega^t)\right]\leq\tilde{\mathcal{O}}\left(H^\frac{3}{2}\sqrt{T/\eta\cdot \log(c_\mathcal{Z}|\mathcal{Z}|\sqrt{T})}\right),
\end{align*}
for any $z\in\mathcal{Z}$ and $\{\omega^t\}_{t\in[T]}$, if the \planner\ explores with probability $\epsilon=(H\log(c_\mathcal{Z}|\mathcal{Z}|\sqrt{T})/T\eta)^{1/2}$.
 \end{theorem}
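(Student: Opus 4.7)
My proof would proceed by a standard $\epsilon$-greedy decomposition followed by a Bayesian posterior-concentration argument that exploits Proposition \ref{thm:BAIL} and $\eta$-distinguishability. Concretely, I would write
\begin{align*}
{\rm Reg}_z(T) \leq \epsilon T H + \sum_{t=1}^T \EE\bigl[\cJ_z(\pi_z^*,\omega^t) - \cJ_z(\pi_{\mathtt{LLM}}^t,\omega^t)\bigr],
\end{align*}
using that on an explore step ($\mathcal{I}_t = 0$, probability $\epsilon$) the per-episode regret is trivially $\leq H$, while on an exploit step the Planner deploys $\pi^t_{h,\mathtt{LLM}}$. The first term is the cost of exploration; the remainder of the argument is devoted to bounding the exploit regret through posterior concentration on the true latent $z$.

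\paragraph{From BAIL to posterior mass on wrong latents.} Using the performance difference lemma and the fact that rewards lie in $[0,1]$, I would obtain
\begin{align*}
\cJ_z(\pi_z^*,\omega^t) - \cJ_z(\pi_{\mathtt{LLM}}^t,\omega^t) \leq H \cdot \sum_{h=1}^H \EE_{\pi^t_{\mathtt{LLM}}}\bigl[D_{\rm TV}\bigl(\pi^t_{h,\mathtt{LLM}}(\cdot\wvert\tau_h^t,\omega^t),\,\pi^*_{z,h}(\cdot\wvert\tau_h^t,\omega^t)\bigr)\bigr].
\end{align*}
Substituting the BAIL identity of Proposition \ref{thm:BAIL}, the convexity of total variation yields the clean bound $D_{\rm TV}(\pi^t_{h,\mathtt{LLM}},\pi^*_{z,h}) \leq 1 - \PP_\cD(z\wvert\prompt_h^t)$, so the problem reduces to controlling $\sum_{t=1}^T \EE[\,1 - \PP_\cD(z\wvert\prompt_h^t)\,]$.

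\paragraph{Posterior contraction via $\eta$-distinguishability.} The central lemma I would prove introduces the likelihood ratio $L_t^{z'} := \PP_\cD(z'\wvert\cH_t)/\PP_\cD(z\wvert\cH_t)$. Under the true $z$, standard Bayesian updating makes $\sqrt{L_t^{z'}}$ a supermartingale, and I would show that on each exploration episode the contraction is sharp:
\begin{align*}
\EE_z\bigl[\sqrt{L_{t+1}^{z'}}\,\big|\,\cH_t, \mathcal{I}_t = 0\bigr] = \sqrt{L_t^{z'}} \cdot \bigl(1 - D_{\rm H}^2(\PP_z^{\pi_{\mathtt{exp}}},\PP_{z'}^{\pi_{\mathtt{exp}}})\bigr) \leq (1-\eta)\sqrt{L_t^{z'}},
\end{align*}
by Definition \ref{def:iden}. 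Iterating through $K_t = \sum_{s<t}(1-\mathcal{I}_s)$ exploration episodes, the inequality $\PP_\cD(z'\wvert\cH_t) \leq \sqrt{L_t^{z'}}$ and Assumption \ref{as:coverage} yield a pointwise bound of order $\sqrt{c_\cZ}(1-\eta)^{K_t}$, which I would then combine with a Chernoff tail bound on $K_t$ (to get $K_t \gtrsim \epsilon t$ with high probability uniformly in $t$, absorbing $\log(\sqrt{T})$ into the log factor) and a union bound over the $|\cZ|$ alternatives to get $\sum_{t=1}^T \EE[1-\PP_\cD(z\wvert\prompt_h^t)] \lesssim H \log(c_\cZ|\cZ|\sqrt{T})/(\eta\epsilon)$.

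\paragraph{Tuning and main obstacle.} Collecting the pieces gives ${\rm Reg}_z(T) \lesssim \epsilon T H + H^2 \log(c_\cZ |\cZ| \sqrt T)/(\eta\epsilon)$, and the stated choice $\epsilon = (H\log(c_\cZ|\cZ|\sqrt{T})/(T\eta))^{1/2}$ balances the two terms to produce the $\tilde{\cO}(H^{3/2}\sqrt{T\log(c_\cZ|\cZ|\sqrt{T})/\eta})$ bound. The step I expect to be the most delicate is verifying the supermartingale/contraction property of $\sqrt{L_t^{z'}}$ on \emph{exploit} episodes, where the policy $\pi^t_{\mathtt{LLM}}$ is itself posterior-dependent through the BAIL mechanism; I would handle this by conditioning on $\cH_t$ before the episode, so that the policy becomes deterministic and the standard Bayesian martingale identity still applies. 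A secondary subtlety is that $\prompt_h^t$ includes the within-episode prefix $\tau_h^t$ in addition to $\cH_t$; I would argue that this extra conditioning only sharpens the posterior and thus the same contraction argument suffices.
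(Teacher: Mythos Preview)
Your proposal is correct and reaches the stated bound, but by a genuinely different mechanism than the paper's. The paper's key step is a PAC-Bayes/Donsker--Varadhan lemma (their Lemma~C.1): it shows that, with high probability uniformly in $(h,t)$,
\[
\sum_{z'\in\cZ}\sum_{i\le t}D_{\rm H}^2\bigl(\PP_z^{\pi^i}(\breve\tau^i_{h/t}),\PP_{z'}^{\pi^i}(\breve\tau^i_{h/t})\bigr)\,\PP_\cD(z'\mid\prompt_h^t)\;\le\;2\log(c_\cZ|\cZ|/\delta),
\]
by first controlling each log-likelihood ratio via martingale concentration and then invoking the variational optimality of the posterior. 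Lower-bounding the Hellinger sum by $\eta|\cX^{t-1}_{\mathtt{exp}}|$ yields $\sum_{z'\ne z}\PP_\cD(z'\mid\prompt_h^t)\lesssim\log(c_\cZ|\cZ|/\delta)/(\eta|\cX^{t-1}_{\mathtt{exp}}|)$, and a separate lemma handles $\sum_t\min\{1/|\cX^{t-1}_{\mathtt{exp}}|,1\}$. Your square-root likelihood-ratio supermartingale is the more elementary route: it bypasses the Donsker--Varadhan aggregation entirely by bounding each $\EE_z[\sqrt{L_t^{z'}}]$ separately and summing over $z'$. In fact you can sharpen your own argument: since the $\cI_s$ are independent Bernoulli$(\epsilon)$, one has directly $\EE_z[\sqrt{L_t^{z'}}]\le\sqrt{c_\cZ}(1-\epsilon\eta)^{t-1}$, so the Chernoff step on $K_t$ is unnecessary. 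What the paper's route buys is a cleaner high-probability statement and a modular lemma reusable in the practical-setting analysis; what yours buys is simplicity.

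Two small corrections. First, your ``pointwise bound of order $\sqrt{c_\cZ}(1-\eta)^{K_t}$'' is not pointwise: $\sqrt{L_t^{z'}}$ is random given $\cI_{1:t-1}$, and what you actually have is the conditional-expectation bound $\EE_z[\sqrt{L_t^{z'}}\mid\cI_{1:t-1}]\le\sqrt{c_\cZ}(1-\eta)^{K_{t-1}}$. This is all you need, since the regret is itself an expectation. Second, ``extra conditioning only sharpens the posterior'' is not a valid justification for the within-episode prefix $\tau_h^t$ (conditioning can move the posterior either way); the correct statement is that the same one-step Hellinger identity gives $\EE_z\bigl[\sqrt{\tilde L_{h,t}^{z'}}\,\big|\,\cH_t,\cI_t\bigr]\le\sqrt{L_t^{z'}}$, so the supermartingale bound propagates to the finer filtration. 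Your concern about exploit episodes is well placed and correctly resolved: once $\cH_t$ and $\cI_t$ are fixed, $\pi^t$ is a fixed (history-dependent) policy, the policy factors cancel in the likelihood ratio $\PP_{z'}(\tau)/\PP_z(\tau)$, and the Hellinger-affinity identity applies verbatim.
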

 \begin{proof}[Proof of Theorem \ref{thm:regret}.]
	Please refer to \S\ref{ap:perfectplanning} for a detailed proof.
\end{proof}
Theorem \ref{thm:regret} states that the \planner's algorithm can attain a $\tilde{\mathcal{O}}(\sqrt{T})$ regret for planning facilitated by LLMs. The multiplicative factor of the regret depends on the horizon of the interactive process $H$, the reciprocal of coverage rate $\eta$ in Definition \ref{def:iden}, and the logarithmic term $\log\left(c_\mathcal{Z}|\mathcal{Z}|\right)$ including both the cardinality of candidate models and the prior coverage in Assumption \ref{as:coverage}, which jointly characterizes the complexity of the physical world.
\begin{remark}
	\citet{lee2023supervised} has demonstrated that a perfect decision-pretrained transformer, similar to the role of LLM in ours, can attain a $\tilde{\cO}(H^\frac{3}{2}\sqrt{T})$ Bayesian regret, i.e., $\EE_{z\sim\cP_\cZ}[{\rm Reg}(T)]$, via ICL. In comparison, we focus on a more challenging setting that aims to control the frequentist regret, which is closer to applications, and attain a comparable result with additional exploration. 
\end{remark}
\section{Performance under Practical Setting}\label{sec:pretrain}
\subsection{Pretraining Large Language Model}
\label{sec:llmpretrain}
 In this subsection, we elaborate on the pretraining of LLMs using transformer architecture. We employ a supervised learning algorithm minimizing the cross-entropy loss, i.e., $\hat{\theta}={\rm argmin}_{\theta\in\Theta}\ \mathcal{L}_\mathrm{CE}(\theta;\mathcal{D}_\mathtt{LLM})$, as detailed in \eqref{eq:contrastive_loss}. Following this, the population risk follows  that
 \begin{align}
 	&\mathcal{R}_\mathrm{CE}(\theta;\mathcal{D}_\mathtt{LLM})=\mathbb{E}_t[\mathbb{E}_{S_t}[D_{\rm KL}(\mathtt{LLM}(\cdot|S_t)\wVert\mathtt{LLM}_\theta(\cdot|S_t))+{\rm Ent}(\mathtt{LLM}(\cdot|S_t))]], \notag
 \end{align} 
where $t\sim{\rm Unif}({[\bar{T}_{\rm p}]})$, $S_t$ is distributed as the pretraining distribution, and ${\rm Ent}(\mathbb{P})=\mathbb{E}_{x\sim\mathbb{P}}[\log\mathbb{P}(x)]$ is the Shannon entropy. As the minimum is achieved at $\mathtt{LLM}_\theta(\cdot|S)=\mathtt{LLM}(\cdot|S)$, estimated $\mathtt{LLM}_{\hat{\theta}}$ and $\mathtt{LLM}$ are expected to converge under the algorithm with a sufficiently large dataset. Specifically, our design adopts a transformer function class to stay consistent with the architectural choices of language models like BERT and GPT. Specifically, a transformer model comprises $D$ sub-modules, with each sub-module incorporating a Multi-Head Attention (MHA) mechanism and a fully connected Feed-Forward (FF) layer. See \S\ref{ap:transformer} for further details, and we specify two widely adopted assumptions in the theoretical analysis of LLM pretraining \citep{wies2023learnability,zhang2023and}.

\begin{assumption}[Boundedness]
	For all $z\in\mathcal{Z}$ and $t\leq\bar{T}_{\rm p}$, there exists a constant $R>0$ such that all $S_t=(\ell_1,\dots,\ell_t)\sim\PP_{\cD}(\cdot\wvert z)$  with $S_t\in\mathfrak{L}^*$ satisfies that $\Vert S_t\Vert_{2,\infty}\leq R$ almost surely.
	\label{as:boundedness}
\end{assumption}

The boundedness assumption requires that the $\ell_2$-norm of the magnitude of each token is upper bounded by $R>0$, and such an assumption holds in most settings.

\begin{assumption}[Ambiguity] 
For all latent variable $z\in\mathcal{Z}$, there exists a constant $c_0>0$ such that for all $\ell_{t+1}\in\mathfrak{L}$ and $S_t=(\ell_1,\dots,\ell_t)\in\mathfrak{L}^*$ with length $t<\bar{T}_{\rm p}$, it holds $\PP_{\cD}(\ell_{t+1}\wvert S_t,z)\geq c_0$.	
\label{as:ambiguity}
\end{assumption}

The ambiguity assumption states that the generating distribution is lower bounded, and the assumption is grounded in reasoning as there may be multiple plausible choices for the subsequent words to convey the same meaning. Next, we present the performance of the pretrained LLMs.
\begin{theorem}[\citet{zhang2023and}] Suppose that Assumptions \ref{as:boundedness} and \ref{as:ambiguity} hold. With probability at least $1-\delta$, the pretrained model $\mathtt{LLM}_{\hat{\theta}}$ by the algorithm in \eqref{eq:crossentropyloss} satisfies that
\begin{align*}
&\bar{\mathbb{E}}_{\mathcal{D}_\mathtt{LLM}}\left[D_{\rm TV}\left(\mathtt{LLM}(\cdot\wvert S), \mathtt{LLM}_{\widehat{\theta}}(\cdot \wvert S)\right)\right] \\
&\quad{\leq \cO\biggl(\inf _{\theta^{*} \in \Theta} \sqrt{\bar{\mathbb{E}}_{\mathcal{D}_\mathtt{LLM}}\left[ D_\mathrm{KL}\left(\mathtt{LLM}(\cdot \wvert S), \mathtt{LLM}_{\theta^{*}}(\cdot \wvert S)\right)\right]}}\\
&\qquad+\frac{t_{\rm mix}^{1/4} \log \frac{1}{\delta}}{(N_{\mathrm{p}} \bar{T}_{\mathrm{p}})^{1 / 4}}+\sqrt{\frac{t_{\rm mix}}{N_{\mathrm{p}} \bar{T}_{\mathrm{p}}}}\biggl(\bar{D} \log \bigl(1+\bar{B}N_{\mathrm{p}} \bar{T}_{\mathrm{p}}\bigr)+\log\frac{1}{\delta}\biggl)\biggl),
\end{align*}
where $\bar{B}$ and $\bar{D}$ features the tranformer's architecture, $t_{\rm mix}$ denotes the mixing time of Markov chain $\{S_t\}_{t\in[T]}$\footnote{Note that $\{S^n_t\}_{t\in[T]}$ directly satisfies Markov property since $S_t^n=(\ell_1^n,\dots,\ell_t^n)$ and thus $S^n_i\subseteq S^n_{t}$ for all $i\leq t$.}, and $N_{\mathrm{p}} \bar{T}_{\mathrm{p}}$ is the size of dataset $\mathcal{D}_\mathtt{LLM}$. See \S\ref{ap:transformer} for detailed structure and definitions.
\label{thm:llmpretrain}
\end{theorem}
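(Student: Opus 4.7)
The plan is to follow the now-standard MLE analysis for density estimation with dependent data, adapted to the transformer hypothesis class parameterizing $\mathtt{LLM}_\theta$; since the result is attributed to \citet{zhang2023and}, my sketch mirrors their argument.

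First, I would reduce the target quantity from total variation to KL in two stages. By Jensen's inequality together with $D_{\rm TV}(p,q)\leq D_{\rm H}(p,q)$, one has
\begin{align*}
\bar{\mathbb{E}}_{\mathcal{D}_\mathtt{LLM}}\!\left[D_{\rm TV}\bigl(\mathtt{LLM}(\cdot\wvert S),\mathtt{LLM}_{\hat\theta}(\cdot\wvert S)\bigr)\right]\leq\sqrt{\bar{\mathbb{E}}_{\mathcal{D}_\mathtt{LLM}}\!\left[D_{\rm H}^{2}\bigl(\mathtt{LLM}(\cdot\wvert S),\mathtt{LLM}_{\hat\theta}(\cdot\wvert S)\bigr)\right]}.
\end{align*}
The squared Hellinger is in turn controlled by the excess cross-entropy risk of $\hat\theta$, which I would decompose into an \emph{approximation-error} (bias) piece---the infimum over $\theta^{*}\in\Theta$ of the population KL---plus an \emph{estimation-error} piece controlled by uniform concentration of the empirical cross-entropy loss around its population counterpart over $\Theta$.

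Second, to handle the Markov dependence in $\{S_t^{n}\}_{t}$, I would invoke a blocking/coupling argument (Berbee's lemma or Yu's $\beta$-mixing blocking) that reduces $N_{\rm p}\bar{T}_{\rm p}$ dependent tokens to roughly $N_{\rm p}\bar{T}_{\rm p}/t_{\rm mix}$ nearly-independent blocks, producing the $t_{\rm mix}$ factors in the final rate. Combining this reduction with Assumption \ref{as:boundedness} yields the transformer covering bound $\log\mathcal{N}(\varepsilon,\Theta,\|\cdot\|)\lesssim \bar{D}\log(1+\bar{B}N_{\rm p}\bar{T}_{\rm p}/\varepsilon)$ via a standard Lipschitz-in-parameters argument through the $D$ attention-plus-FF sublayers, while Assumption \ref{as:ambiguity} ensures that the conditional log-likelihood ratios are uniformly bounded by $\log(1/c_{0})$. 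This boundedness unlocks a Bernstein-type one-sided uniform deviation inequality localized around the population minimizer, in which the variance proxy is itself bounded by the KL being estimated.

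Third, I would stitch these ingredients together---the bias-variance decomposition, the blocking reduction, and the localized Bernstein bound at a covering resolution $\varepsilon\asymp 1/(N_{\rm p}\bar{T}_{\rm p})$---so as to extract simultaneously (i) the fast rate $t_{\rm mix}^{1/4}\log(1/\delta)/(N_{\rm p}\bar{T}_{\rm p})^{1/4}$, which originates from the Hellinger variance after taking the outer square root above, and (ii) the slow rate $\sqrt{t_{\rm mix}/(N_{\rm p}\bar{T}_{\rm p})}\bigl(\bar{D}\log(1+\bar{B}N_{\rm p}\bar{T}_{\rm p})+\log(1/\delta)\bigr)$, which absorbs the metric entropy of $\Theta$ and the deviation budget. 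The main obstacle I anticipate is recovering both rates simultaneously under Markov dependence: the blocking step incurs a coupling cost in total variation, so one must carefully balance the block length, the localization radius in the Bernstein step, and the transformer covering scale so that the coupling overhead, the bias from discretization, and the approximation error contributed by $\inf_{\theta^{*}}$ all align at the order of the claimed bound.
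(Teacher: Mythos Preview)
The paper does not actually prove this theorem: its entire ``proof'' reads ``Please refer to Theorem 5.3 in \cite{zhang2023and} for a detailed proof,'' so there is no in-paper argument to compare against. Your sketch is a reasonable reconstruction of the standard MLE-with-dependent-data analysis one expects in the cited work---Hellinger/TV reduction, bias-variance split of the excess cross-entropy, transformer covering via Lipschitz composition of the $D$ sublayers (yielding the $\bar D\log(1+\bar B N_{\rm p}\bar T_{\rm p})$ entropy), boundedness of log-likelihood ratios from Assumption~\ref{as:ambiguity}, and a blocking reduction for the Markov chain producing the $t_{\rm mix}$ factors---and is consistent with the form of the stated bound.
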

\begin{proof}[Proof of Theorem \ref{thm:llmpretrain}.]
	Please refer to Theorem 5.3 in \cite{zhang2023and} for a detailed proof.
\end{proof}

Theorem \ref{thm:llmpretrain} states that the total variation of the conditional distribution, with expectation taken over the average distribution of context $S$ in $\mathcal{D}_\mathtt{LLM}$ (see Table \ref{tab:notation} for definition), converges at $\mathcal{O}\left((N_{\mathrm{p}}\bar{T}_{\mathrm{p}})^{-1/2}\right)$. Note that the first two terms represent the approximation error and deep neural networks act as a universal approximator \citep{yarotsky2017error} such that the error would vanish with  increasing volume of network \citep[Proposition C.4,][]{zhang2023and}. For notational simplicity, we denote the right-hand side of theorem as $\Delta_\mathtt{LLM}(N_{\rm p},T_{\rm p},H,\delta)$.

\subsection{Pretraining Observation-to-Language Translator}
\label{sec:contrastivepretrain}
In this subsection, we focus on the pretraining of observation-to-language translators under a self-supervised learning architecture using the contrastive loss. Consider the function class 
$$
\mathcal{F}_\gamma=\{f_\gamma(\cdot,\cdot):\gamma\in\Gamma,\|f_\gamma\|_\infty\leq B_\mathcal{F},\|1/f_\gamma\|_\infty\leq B^-_\mathcal{F}\},
$$ 
with finite elements, and the contrastive loss $\mathcal{L}_{\rm CT}(\gamma;\mathcal{D}_\mathtt{Rep})$ in \eqref{eq:contrastive_loss} is then defined over $\mathcal{F}_\gamma$. Note that the contrastive loss can be equivalently written as the negative log-likelihood loss of a binary discriminator, following that $\mathcal{L}_{\rm CT}(\gamma;\mathcal{D}_\mathtt{Rep})=\mathbb{\hat E}_{\mathcal{D}_\mathtt{Rep}}\left[-\mathbb{D}_\gamma(y\wvert o,s)\right]$, where we define
\begin{align}
&\mathbb{D}_\gamma(y\wvert o,s):=\left(\frac{f_\gamma(o, s)}{1+f_\gamma(o, s)}\right)^y\left(\frac{1}{1+f_\gamma(o, s)}\right)^{1-y}.
\label{eq:implieddistribution}
\end{align}
Based on \eqref{eq:implieddistribution} and the algorithm $\hat{\gamma}={\rm argmin}_{\gamma\in\Gamma}\ \mathcal{L}_\mathrm{CT}(\gamma;\mathcal{D}_\mathtt{Rep})$, the population risk follows that
\begin{align}
&\mathcal{R}_{\rm CT}(\gamma;\mathcal{D}_\mathtt{Rep})=\mathbb{E}\left[D_{\rm KL}\left(\mathbb{D}_\gamma(\cdot| o,s)\wVert\DD(\cdot| o,s)\right)+{\rm Ent}(\DD(\cdot| o,s))\right].
\end{align}
As the minimum is attained at $\mathbb{D}_\gamma(\cdot\wvert o,s)=\DD(\cdot\wvert o,s)$, where $\DD(\cdot\wvert o,s):=\PP_{\cC}(\cdot\wvert o,s)$ is the distribution of the label conditioned on the $(o,s)$ pair in contrastive data collection, estimated $\mathbb{D}_{\hat{\gamma}}(\cdot\wvert o,s)$ and $\DD(\cdot\wvert o,s)$ are expected to converge, and thus the learning target is the ground-truth likelihood ratio $f^*(o,s)=\mathbb{O}(o\wvert s)/\mathcal{P}^-(o)$ (see Lemma \ref{lem:targetcontrastive}). Below, we assume the learning target $f^*(o,s)$ is realizable in $\cF_\gamma$, which is standard in literature \citep{qiu2022contrastive}.
\begin{assumption}[Realizability]
	Given a designated negative sampling distribution $\mathcal{P}^-\in\Delta(\cO)$, there exists $\gamma^*\in\Gamma$ such that $f_{\gamma^*}(o,s)=\mathbb{O}(o\wvert s)/\mathcal{P}^-(o)$ for all $(o,s)\in\mathcal{O}\times\mathcal{S}$.
	\label{as:realizability}
\end{assumption}
Next we present the performance of the pretrained translator.
\begin{theorem}[Pretrained Translator]
	\label{thm:translatorpretrain}
	Suppose that Assumption \ref{as:realizability} holds. With probability at least $1-\delta$, the pretrained model $\mathbb{O}_{\hat{\gamma}}$ by the algorithm in \eqref{eq:implieddistribution} satisfies that
	\begin{align*}
	&\bar{\mathbb{E}}_{\mathcal{D}_\mathtt{Rep}}\left[D_{\rm TV}\left(\mathbb{O}(\cdot\wvert s),\mathbb{O}_{\hat{\gamma}}(\cdot\wvert s)\right)\right]\leq\mathcal{O}\left(\frac{B_\mathcal{F}(B^-_\mathcal{F})^{1/2}}{(N_{\rm p}T_{\rm p}H)^{1/2}}\sqrt{\log(N_{\rm p}T_{\rm p}H|\mathcal{F}_\gamma|/\delta)}\right),
	\end{align*}
	where let $\mathbb{O}_{\hat{\gamma}}(\cdot\wvert s)=f_{\hat{\gamma}}(\cdot\wvert s)\cdot\mathcal{P}^-(\cdot)$ and $|\mathcal{F}_\gamma|$ denotes the cardinality of  the function class $\mathcal{F}_\gamma$.
\end{theorem}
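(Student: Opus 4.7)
The plan is to interpret the contrastive pretraining problem as a maximum likelihood estimation (MLE) problem over the binary discriminator family $\{\mathbb{D}_\gamma\}_{\gamma\in\Gamma}$, apply a standard MLE concentration bound for finite classes to obtain a Hellinger-distance error on $\mathbb{D}_{\hat\gamma}$, and finally convert this Hellinger bound on the discriminator into a total variation bound on the pretrained translator $\mathbb{O}_{\hat\gamma}=f_{\hat\gamma}\cdot\mathcal{P}^-$.

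The first step is the MLE reformulation. As noted in the text just after \eqref{eq:implieddistribution}, the empirical contrastive loss equals the empirical negative log-likelihood $-\hat{\mathbb{E}}_{\mathcal{D}_\mathtt{Rep}}[\log \mathbb{D}_\gamma(y\wvert o,s)]$, and Lemma \ref{lem:targetcontrastive} together with Assumption \ref{as:realizability} guarantees that the true conditional $\DD(\cdot\wvert o,s)=\PP_\cC(\cdot\wvert o,s)$ is realized by $\mathbb{D}_{\gamma^*}$ with $f_{\gamma^*}=f^*$. Because $|\mathcal{F}_\gamma|<\infty$ and $f_\gamma$ is uniformly bounded above and below (by $B_\mathcal{F}$ and $1/B_\mathcal{F}^-$), a standard MLE guarantee (e.g., the finite-class version of van de Geer's inequality or the model-based RL analogue) yields with probability at least $1-\delta$:
\begin{align*}
\bar{\mathbb{E}}_{\mathcal{D}_\mathtt{Rep}}\left[D_H^2\bigl(\mathbb{D}_{\hat\gamma}(\cdot\wvert o,s),\DD(\cdot\wvert o,s)\bigr)\right]\leq \mathcal{O}\bigl(\log(|\mathcal{F}_\gamma|/\delta)/(N_{\rm p}T_{\rm p}H)\bigr).
\end{align*}

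The second step is a pointwise translation from the Hellinger error on the Bernoulli $\mathbb{D}_\gamma$ to an error on $f_\gamma$. Using $p^\gamma:=f_\gamma/(1+f_\gamma)$ and $p^*:=f^*/(1+f^*)$, one has $p^\gamma-p^*=(f_\gamma-f^*)/[(1+f_\gamma)(1+f^*)]$, and under the uniform bounds on $f_\gamma$ the Bernoulli Hellinger distance satisfies $D_H^2(\mathbb{D}_\gamma,\DD)\geq c\cdot (p^\gamma-p^*)^2$ for an explicit constant $c$ depending on $B_\mathcal{F}$ and $B_\mathcal{F}^-$ (since $p^\gamma,p^*$ are bounded away from $0$ and $1$). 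Combining these yields $|f_\gamma-f^*|^2\lesssim B_\mathcal{F}^2\, B_\mathcal{F}^-\cdot D_H^2(\mathbb{D}_\gamma,\DD)$ pointwise in $(o,s)$, which is exactly the source of the $B_\mathcal{F}(B_\mathcal{F}^-)^{1/2}$ prefactor in the stated bound.

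The final step combines this with the identity $\mathbb{O}_{\hat\gamma}(o\wvert s)-\mathbb{O}(o\wvert s)=(f_{\hat\gamma}(o,s)-f^*(o,s))\cdot\mathcal{P}^-(o)$ and Cauchy--Schwarz to pass from $L^1$ to $L^2$:
\begin{align*}
D_{\rm TV}\bigl(\mathbb{O}(\cdot\wvert s),\mathbb{O}_{\hat\gamma}(\cdot\wvert s)\bigr)=\tfrac{1}{2}\int |f_{\hat\gamma}-f^*|\,\mathcal{P}^-(o)\,do\leq \tfrac{1}{2}\sqrt{\textstyle\int (f_{\hat\gamma}-f^*)^2 \mathcal{P}^-(o)\,do}.
\end{align*}
A change-of-measure argument then relates the integral against $\mathcal{P}^-$ to the expectation under $\PP_\cC$ used in the MLE bound: since the contrastive data is generated 50/50 from $\mathbb{O}(\cdot\wvert s)$ and $\mathcal{P}^-$, the marginal density of $o$ given $s$ is $\tfrac{1}{2}(\mathbb{O}(\cdot\wvert s)+\mathcal{P}^-(\cdot))\geq\tfrac{1}{2}\mathcal{P}^-(\cdot)$, costing only a factor of $2$. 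Chaining the three displayed inequalities and taking a final Jensen's inequality across the expectation over $s$ gives the advertised rate $\mathcal{O}\bigl(B_\mathcal{F}(B_\mathcal{F}^-)^{1/2}(N_{\rm p}T_{\rm p}H)^{-1/2}\sqrt{\log(N_{\rm p}T_{\rm p}H|\mathcal{F}_\gamma|/\delta)}\bigr)$.

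The main obstacle I expect is the careful bookkeeping in the Hellinger-to-$|f_\gamma-f^*|$ conversion: both directions of the inequality $(\sqrt{a}-\sqrt{b})^2\asymp (a-b)^2/(a+b)$ must be applied to the two Bernoulli components, and the two resulting constants combine asymmetrically into the $B_\mathcal{F}$ and $B_\mathcal{F}^-$ dependencies. Everything else (the MLE bound and the Cauchy--Schwarz/change-of-measure passage) follows standard templates and should go through with minor modifications.
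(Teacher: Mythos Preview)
Your overall architecture—recognize the contrastive loss as MLE for the Bernoulli discriminator, invoke a finite-class MLE concentration bound, and then convert the discriminator error into a bound on $\mathbb{O}_{\hat\gamma}$—matches the paper's route exactly. The paper uses the TV form of the MLE guarantee (Lemma~\ref{lem:mle}) rather than Hellinger, but that is immaterial.

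The gap is in your Step~2. The pointwise conversion you claim, $|f_\gamma-f^*|^2\lesssim B_\mathcal{F}^2 B_\mathcal{F}^-\cdot D_H^2(\mathbb{D}_\gamma,\DD)$, does not follow from the Bernoulli algebra: since $p^\gamma-p^*=(f_\gamma-f^*)/[(1+f_\gamma)(1+f^*)]$ and $(p^\gamma-p^*)^2\lesssim D_H^2$ with a \emph{universal} constant (no $B_\mathcal{F}^-$ enters here), you only obtain $|f_\gamma-f^*|^2\lesssim (1+B_\mathcal{F})^4\,D_H^2$. Carrying this through your Steps~2--3 gives a final prefactor $B_\mathcal{F}^2$, not the stated $B_\mathcal{F}(B_\mathcal{F}^-)^{1/2}$. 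Since these two constants are in general incomparable, your argument does not establish the theorem as written.

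The paper obtains the asymmetric $B_\mathcal{F}(B_\mathcal{F}^-)^{1/2}$ dependence by avoiding a pointwise bound on $|f_\gamma-f^*|$ altogether and instead working at the level of $\chi^2$ divergences. Concretely: from $D_{\rm TV}^2(\mathbb{D}_{\hat\gamma},\DD)=\bigl(\tfrac{f_{\hat\gamma}-f^*}{(1+f_{\hat\gamma})(1+f^*)}\bigr)^2$ one extracts only \emph{one} factor of $(1+B_\mathcal{F})^{-2}$ to reach the integrand of $\chi^2(\bar{\mathbb{O}}_{\hat\gamma}\|\bar{\mathbb{O}})$ with $\bar{\mathbb{O}}=\tfrac12(\mathbb{O}+\mathcal{P}^-)$. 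The passage from $\chi^2(\bar{\mathbb{O}}_{\hat\gamma}\|\bar{\mathbb{O}})$ to $\chi^2(\mathbb{O}_{\hat\gamma}\|\mathbb{O})$ is then handled via the variational (Hammersley--Chapman--Robbins) representation of $\chi^2$, and it is precisely here that the density ratio $\|\mathcal{P}^-/\mathbb{O}\|_\infty\le B_\mathcal{F}^-$ enters, contributing the single factor of $(1+B_\mathcal{F}^-)$. Your Cauchy--Schwarz step against $\mathcal{P}^-$ is replaced by $D_{\rm TV}\le\tfrac12\sqrt{\chi^2}$, which plays the same role. If you want to recover the stated constants, you should replace your pointwise Step~2 with this $\chi^2$-based comparison.
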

 \begin{proof}[Proof of Theorem \ref{thm:translatorpretrain}.]
	Please refer to \S\ref{ap:contrastive} for a detailed proof.
\end{proof}
Theorem \ref{thm:translatorpretrain} posits that the average expectation of the total variation of the translation distribution regarding $\cD_\mathtt{Rep}$ converges at $\mathcal{O}\left((N_{\mathrm{p}}{T}_{\mathrm{p}})^{-1/2}\right)$. For notational simplicity, write the right-hand side of the theorem as $\Delta_\mathtt{Rep}(N_{\rm p},T_{\rm p},H,\delta)$. Furthermore, the algorithm also ensures a more stringent convergence guarantee concerning $\chi^2$-divergence: $\bar{\mathbb{E}}_{\mathcal{D}_\mathtt{Rep}}[\chi^2(\mathbb{O}_{\hat{\gamma}}(\cdot\wvert s)\wVert\mathbb{O}(\cdot\wvert s))]\leq\Delta_\mathtt{Rep}(N_{\rm p},T_{\rm p},H,\delta)^2$. 

\subsection{Performance with Pretrained PAR System}\label{sec:imperfectregret}
In this subsection, we delve into the performance of task planning with pretrained PAR system. We first introduce the online coverage assumption, which pertains to the distribution of online planning trajectories under practical scenarios and trajectories in pretraining datasets.
\begin{assumption}[Coverage]
\label{as:onlinecoverage}
	There exists absolute constants $\lambda_S>0$ and $\lambda_R>0$ such that for all latent variable $z\in\mathcal{Z}$, $t<\bar{T}_{\rm p}$ and policy sequence $\{\pi^{i}\}_{i\leq\lceil t/2H\rceil}$ from the \planner, it holds that (\romannumeral1) $\prod_{i=1}^{\lceil t/2H\rceil}\mathbb{\hat{P}}_z^{\pi_i}(\tilde{S_i})\leq\lambda_S\cdot\bar{\mathbb{P}}_{\mathcal{D}_\mathtt{LLM}}(S_t)$ for all ordered sequence $S_t=(\tilde{S}_i)_{i\leq\lceil t/2H\rceil}\in\mathfrak{L}^*$, where $|\tilde{S}_i|=2H$ for all $k<\lceil t/2H\rceil$, and (\romannumeral2) $\bar{\mathbb{P}}_{\mathcal{D}_\mathtt{Rep}}(s)\geq \lambda_R$ for all state $s\in\mathcal{S}$.
\end{assumption}
Here, $\mathbb{\hat{P}}_z$ denotes the distribution of the dynamic system with the pretrained translator. The assumption asserts that (\romannumeral1)  distribution of the ICL prompts induced by policy sequences $\{\pi^{i}\}_{i\leq\lceil t/2H\rceil}$ from the \planner\ under practical scenarios is covered by the pretraining data, where $\lceil t/2H\rceil$ denotes the number of  episodes described in $S_t$. (\romannumeral2) all states $s\in\mathcal{S}$ are covered by the average distribution of the \reporter's pretraining dataset. Similar conditions are  adopted in ICL analysis \citep{zhang2023and}, decision pretrained transformer \citep{lee2023supervised,lin2023transformers} and offline RL \citep{munos2005error,duan2020minimax}. Intuitively, LLM and reporter cannot precisely plan or translate beyond the support of the pretraining dataset. These conditions are achievable if an explorative behavior strategy $\pi^b$ is deployed with a sufficiently large $N_{\rm p}$ when collecting data. We then present the main theorem regarding the practical performance.

\begin{theorem}[Regret under Practical Setting]
Suppose that Assumptions \ref{as:coverage}, \ref{as:boundedness}, \ref{as:ambiguity}, \ref{as:realizability} and \ref{as:onlinecoverage}. Given an $\eta$-distinguishable exploration policy $\pi_{\mathtt{exp}}$ and $T\leq T_{\rm p}$, under the practical setting, the \planner's algorithm in Algorithm \ref{alg:planner} ensures that
\begin{align*}
{\rm Reg}_z(T)&\leq\tilde{\mathcal{O}}\Big(H^\frac{3}{2}\sqrt{T/\eta\cdot\log(c_\mathcal{Z}|\mathcal{Z}|\sqrt{T})}+H^2T\cdot\Delta_{\rm p}(N_{\rm p},T_{\rm p},H,1\sqrt{T},\xi)\Big),
\end{align*}
for any $z\in\mathcal{Z}$ and $\{\omega_t\}_{t\in[T]}$. The cumulative pretraining error of PAR system follows that
\begin{align*}
	\Delta_{\rm p}&(N_{\rm p},T_{\rm p},H,\delta,\xi)=(\eta\lambda_R)^{-1}\cdot\Delta_\mathtt{Rep}(N_{\rm p},T_{\rm p},H,\delta)^2\\
	&+2\lambda_R^{-1}\cdot\Delta_\mathtt{Rep}(N_{\rm p},T_{\rm p},H,\delta)+\lambda_S\cdot\Delta_\mathtt{LLM}(N_{\rm p},T_{\rm p},H,\delta).
\end{align*} 
where $\xi=(\eta,\lambda_S,\lambda_R)$ are defined in Definition \ref{def:iden} and Assumption \ref{as:onlinecoverage}, and pretraining errors $\Delta_\mathtt{LLM}$ and $\Delta_\mathtt{Rep}$ are  defined in Theorem \ref{thm:llmpretrain} and Theorem \ref{thm:translatorpretrain}. Under the practical setting, \planner\ should explore with probability $\epsilon=\big(H\log\big(c_\mathcal{Z}|\mathcal{Z}|\sqrt{T}\big)/T\eta\big)^{1/2}+H(\eta\lambda_{\min})^{-1}\Delta_\mathtt{Rep}(N_{\rm p},T_{\rm p},H,1/\sqrt{T})^2$.
\label{thm:practical regret}
\end{theorem}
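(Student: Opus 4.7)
}
My plan is to reduce the practical regret to the perfect-setting regret of Theorem \ref{thm:regret} plus a sum of statistical errors that can be controlled by Theorem \ref{thm:llmpretrain} and Theorem \ref{thm:translatorpretrain}. The starting point is a three-term decomposition of a single-episode suboptimality gap:
\begin{align*}
\hat{\mathcal{J}}_z(\hat{\pi}_z^*,\omega^t)-\hat{\mathcal{J}}_z(\hat{\pi}^t,\omega^t)
&= \bigl[\hat{\mathcal{J}}_z(\hat{\pi}_z^*,\omega^t)-\mathcal{J}_z(\hat{\pi}_z^*,\omega^t)\bigr]
 + \bigl[\mathcal{J}_z(\pi_z^*,\omega^t)-\mathcal{J}_z(\hat{\pi}^t,\omega^t)\bigr] \\
&\quad + \bigl[\mathcal{J}_z(\hat{\pi}^t,\omega^t)-\hat{\mathcal{J}}_z(\hat{\pi}^t,\omega^t)\bigr] + \bigl[\hat{\mathcal{J}}_z(\pi_z^*,\omega^t)-\hat{\mathcal{J}}_z(\hat{\pi}_z^*,\omega^t)\bigr],
\end{align*}
where I have used $\mathcal{J}_z(\hat{\pi}_z^*,\omega^t)\le\mathcal{J}_z(\pi_z^*,\omega^t)$ and the last bracket is $\le 0$ by optimality of $\hat{\pi}_z^*$. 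The first and third brackets are Reporter-translation errors, which I would control via a standard simulation lemma: for any $\pi$, $|\hat{\mathcal{J}}_z(\pi,\omega)-\mathcal{J}_z(\pi,\omega)|\le H\sum_h \mathbb{E}[D_{\rm TV}(\mathbb{O}(\cdot|s_h),\mathbb{O}_{\hat\gamma}(\cdot|s_h))]$, and then convert the average-over-$\mathcal{D}_\mathtt{Rep}$ bound from Theorem \ref{thm:translatorpretrain} into a per-state bound through the coverage constant $\lambda_R$ in Assumption \ref{as:onlinecoverage}, producing an $H\lambda_R^{-1}\Delta_\mathtt{Rep}$ contribution per episode. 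The middle bracket is where the main argument lives.

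For the middle term, I would repeat the posterior-concentration argument that underlies Theorem \ref{thm:regret}, but applied inside the \emph{true} environment while the \planner\ acts using $\mathtt{LLM}_{\hat\theta}$ and $\mathbb{O}_{\hat\gamma}$. Concretely, for each episode the event $\{\mathcal{I}_t=1\}$ lets me invoke the BAIL identity of Proposition \ref{thm:BAIL} for the perfect LLM and relate $\hat{\pi}^t$ to a perturbed aggregated expert policy; the event $\{\mathcal{I}_t=0\}$ contributes at most $\epsilon H$ per episode and is what funds the posterior update through $\eta$-distinguishability. Two perturbations appear: (i) the LLM output $\mathtt{LLM}_{\hat\theta}(\cdot|\mathtt{pt}_h^t)$ differs from $\mathtt{LLM}(\cdot|\mathtt{pt}_h^t)$, and (ii) the prompt $\mathtt{pt}_h^t$ itself is generated under the mis-specified Reporter. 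I would quantify (i) by passing from the $\bar{\mathbb{E}}_{\mathcal{D}_\mathtt{LLM}}$-averaged TV bound of Theorem \ref{thm:llmpretrain} to the algorithm-induced online prompt distribution using the importance ratio $\lambda_S$ from Assumption \ref{as:onlinecoverage}\,(i), paying a factor of $\lambda_S$. A chain-rule/telescoping over the $2H$ tokens per episode then yields a per-episode policy gap controlled by $\lambda_S\cdot\Delta_\mathtt{LLM}$.

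The most delicate step will be handling the degradation of the distinguishability parameter $\eta$ by the Reporter error. In the perfect setting, the $\eta$-distinguishability of trajectory distributions $\mathbb{P}^{\pi_\mathtt{exp}}_z(\tau_H)$ drives exponential posterior concentration and therefore the $\sqrt{T/\eta}$ rate. Under the pretrained Reporter, the observable part of the trajectory has its distribution perturbed in TV by at most $H\lambda_R^{-1}\Delta_\mathtt{Rep}$, so the effective Hellinger gap becomes $\eta-O(\chi^2)$ where the $\chi^2$-refinement of Theorem \ref{thm:translatorpretrain} furnishes the $\Delta_\mathtt{Rep}^2$ contribution. This is exactly why the compound error carries a factor $(\eta\lambda_R)^{-1}\Delta_\mathtt{Rep}^2$ and why the boosted exploration probability must be inflated by $H(\eta\lambda_{\min})^{-1}\Delta_\mathtt{Rep}^2$: enough additional exploration must be spent to offset the worst-case degradation in the posterior's separation rate so that the posterior still concentrates at rate $\epsilon\eta$ per episode.

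Summing the decomposition across $t\in[T]$ and $h\in[H]$, the perfect-setting part contributes $\tilde{\mathcal{O}}(H^{3/2}\sqrt{T\log(c_\mathcal{Z}|\mathcal{Z}|\sqrt{T})/\eta})$ after plugging in $\epsilon=(H\log(c_\mathcal{Z}|\mathcal{Z}|\sqrt T)/T\eta)^{1/2}+H(\eta\lambda_R)^{-1}\Delta_\mathtt{Rep}^2$, exactly as in Theorem \ref{thm:regret}; the Reporter brackets contribute $H^2T\lambda_R^{-1}\Delta_\mathtt{Rep}$; and the LLM-perturbation bracket contributes $H^2T\lambda_S\Delta_\mathtt{LLM}$. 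Collecting and calling the combined statistical term $\Delta_{\rm p}$ produces the claimed bound. The hardest piece, as outlined, is the joint perturbation analysis of posterior concentration under simultaneous LLM and Reporter error, since the prompts on which the LLM is evaluated are themselves generated by the mis-specified Reporter, creating the self-referential coupling that makes the $\chi^2$-refinement of Theorem \ref{thm:translatorpretrain} (rather than a plain TV bound) essential for preserving the $\sqrt{T}$-rate.
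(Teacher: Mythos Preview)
Your proposal is essentially correct and mirrors the paper's argument: decompose the practical regret into Reporter--translation brackets (handled by the simulation lemma plus the $\lambda_R$--coverage of Assumption~\ref{as:onlinecoverage}), an LLM--pretraining bracket (handled by the $\lambda_S$--coverage and Theorem~\ref{thm:llmpretrain}), and a ``main'' PDL term whose control requires posterior concentration under environment mismatch, with the $\chi^2$--refinement of Theorem~\ref{thm:translatorpretrain} being exactly the device that keeps the effective distinguishability at $\eta-O(\Delta_\mathtt{Rep}^2)$ and forces the inflated exploration rate. All of this matches the paper's proof in \S\ref{ap:practicalregret} and Lemma~\ref{lem:practicalconcentration}.

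There is one small but genuine difference worth flagging. The paper anchors the main PDL term \emph{in the practical environment}, i.e.\ it bounds $\hat{\cJ}_z(\pi_z^*,\omega^t)-\hat{\cJ}_z(\hat\pi^t,\omega^t)$, so that both the history $\cH_t$ \emph{and} the in--episode trajectory $\tau_h^t$ entering the prompt are generated under $\hat{\PP}_z$; this matches Assumption~\ref{as:onlinecoverage}\,(i) verbatim and lets Lemma~\ref{lem:practicalconcentration} absorb the entire $\PP_z$--vs--$\hat{\PP}_z$ mismatch in a single place. Your decomposition instead places the main term in the \emph{true} environment, $\cJ_z(\pi_z^*,\omega^t)-\cJ_z(\hat\pi^t,\omega^t)$, which produces a hybrid prompt (history under $\hat{\PP}_z$, current partial trajectory under $\PP_z$); this still works, but the change--of--measure to $\bar{\PP}_{\cD_\mathtt{LLM}}$ for the LLM error then does not literally match the hypothesis of Assumption~\ref{as:onlinecoverage} and needs one extra $\Delta_\mathtt{Rep}$--bridging step. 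Also, the four--bracket identity you wrote is not an equality (the fourth bracket $[\hat{\cJ}_z(\pi_z^*)-\hat{\cJ}_z(\hat\pi_z^*)]$ is spurious); the correct version is the three--term telescoping you describe in words, with $\cJ_z(\hat\pi_z^*)\le\cJ_z(\pi_z^*)$ used once in the middle.
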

 \begin{proof}[Proof of Theorem \ref{thm:practical regret}.]
	Please refer to \S\ref{ap:practicalregret} for a detailed proof.
\end{proof}
Theorem \ref{thm:practical regret} reveals that, in comparison to perfect scenario, the \planner\ can achieve an approximate $\tilde{\mathcal{O}}(\sqrt{T})$ regret, but incorporating an additional pretraining error term that could diminishe with an increase in the volume of pretraining data. Besides, it further underscores the necessity of exploration, where the \planner\ should explore with an additional $H(\eta\lambda_{\min})^{-1}\Delta_\mathtt{Rep}(N_{\rm p},T_{\rm p},H,\delta)^2$  to handle the mismatch between the ground-truth and the pretrained environment.
\begin{remark}
	The challenge of establishing a performance guarantee in a practical setting arises from the mismatch between the ground-truth environment and the pretrained one, leading to a distributional shift in posterior probability. Besides, \ac{bail} is realized through a pretrained LLM, which introduces its pretraining error inaddition. In response, we propose a novel regret decomposition and provide the convergence rate of posterior probability with bounded pretraining errors, distinguishing ours from the previous results in \citet{lee2023supervised,liu2023reason}.
\end{remark}
\paragraph{Extentions.} We also present two extensions. In \S\ref{sec:wm}, we discuss the design of \planner\ by taking LLMs as World Model (WM). Here, the \planner\ prompts the LLM to predict the next observation rather than subgoals, alleviating the reliance on expert knowledge. By leveraging model-based RL methods like Monte Carlo Tree Search (MCTS) and Real-Time Dynamic Programming (RTDP), the \planner\ utilizes the LLM-simulated environment to optimize its strategies based on the contextual information. As shown in Proposition \ref{thm:bawm}, the simulated world model via ICL conforms to Bayesian Aggregated World Model (BAWM). Hence, the LLM \planner\ achieves a regret at rate of ${\rm Reg}_z(T)\leq\tilde{\mathcal{O}}(H\sqrt{T/\eta})+H^2T\cdot\Delta_{\rm p,wm}$ under practical setting with regularity conditions (see Corollary \ref{cor:bawm_regret}). Besides, we extend the results in \S\ref{sec:bail} to accommodate the scenario of multi-agent collaboration, i.e., $K$ \actor s. In \S\ref{sec:multiagent}, we formulate the probelm as a cooperative hierarchical Markov Game (HMG) and establish a theoretical guarantee of ${\rm Reg}_z(T)\leq\tilde{\mathcal{O}}(\sqrt{H^3TK/\eta})$ under the perfect setting (see Corollary \ref{corol:perfectregret}). These  two extention correponds to recent works on LLM planning as world model \citep[e.g.,][]{hu2023language} and muti-agent collaboration of LLM Agents \citep[e.g.,][]{mandi2023roco}.

\section{Conclusion}
In this work, we embedded the LLM-empowered decision-making problem into a hierarchical RL framework named PAR system where at the high level, the LLM \planner\ decomposes the user-specified task into subgoals, and at the low level, the \actor (s) translate the linguistic subgoals into physical realizations while also providing feedbacks for augmenting the planning process through a trained reporter. 
Under the perfect setting, we characterize the BAIL nature of the LLM-aided planning pipeline and the nessecity of exploration even under expert guidance. We also shed light on how the training errors of both LLM and reporter enter the ICL error under practical scenarios.

%%%%%%%%%%%%-- Reference --%%%%%%%%%%%
%\newpage
\bibliographystyle{apalike}
\bibliography{reference}

%%%%%%%%%%%% -- Appendix -- %%%%%%%%%%%
\newpage 
\appendix
\begin{center}
\LARGE {Appendix for  ``From Words to Actions: Unveiling the Theoretical Underpinnings of LLM-Driven Autonomous Systems''}
\end{center}
\normalsize
\section{Additional Background}

In this appendix, we present the additional background knowledge that are omitted due to the space limit. We first lay out the notations used in this paper.

\paragraph{Notations.} For some $n\in\mathbb{N}^+$, let $[n]=\{1,\dots,n\}$. Denote $\Delta(\mathcal{X})$ as the probability simplex over $\mathcal{X}$. Consider two non-negative sequence $\{a_n\}_{n\geq0}$ and $\{b_n\}_{n\geq0}$, if $\limsup a_n/b_n<\infty$, we write it as $a_n=\mathcal{O}(b_n)$ and use $\tilde{\mathcal{O}}$ to omit logarithmic terms. Else if $\liminf a_n/b_n<\infty$, we write $a_n=\Omega(b_n)$. For continuum $\mathcal{S}$, denote $|\mathcal{S}|$ as the cardinality. For matrix $X\in\RR^{m\times n}$, the $\ell_{p,q}$-norm is defined as $\|X\|_{p,q}=(\sum_{i=1}^n\|X_{:,i}\|_p^q)^{1/q}$, where $X_{:,i}$ denotes the $i$-th column of $X$.

\begin{table}[!h]
\caption{Table of Notations.}
\vspace{0.2cm}
\centering
\renewcommand*{\arraystretch}{1.2}
\begin{footnotesize}
\begin{tabular}{ >{\centering\arraybackslash}m{2.5cm} | >{\centering\arraybackslash}m{13cm} } 
\toprule\hline
Notation & Meaning \\ 
\hline
$\mathcal{J}_{z}(\cdot,\cdot)$, $\pi_z^*(\cdot)$ & value function and optimal policy $\pi_z^*(\cdot):=\argmax_\pi \cJ(\pi,\cdot)$ concerning ground-truth $\OO$\\

$\hat{\cJ}_{z}(\cdot,\cdot)$, $\hat{\pi}_z^*(\cdot)$ & value function and optimal policy $\hat{\pi}_z^*(\cdot):=\argmax_\pi \hat{\cJ}(\pi,\cdot)$ concerning pretrained $\OO_{\hat{\gamma}}$\\

$\PP_\cD(\cdot)$, $\PP_{\cC}(\cdot)$ &  probability induced by the distribution of joint and contrastive data collection  \\ 

$\pi^t_{h,\mathtt{LLM}}$, $\hat{\pi}^t_{h,\mathtt{LLM}}$ & $\pi^t_{h,\mathtt{LLM}}(\cdot\wvert \tau_{h}^t,\omega^t):=\mathtt{LLM}(\cdot\wvert\prompt_h^t)$ and $\hat{\pi}^t_{h,\mathtt{LLM}}(\cdot\wvert \tau_{h}^t,\omega^t):=\mathtt{LLM}_{\hat\theta}(\cdot\wvert\prompt_h^t)$ at step $h$\\

$\PP_z(\cdot)$, $\hat{\PP}_z(\cdot)$ & probability under environment featured by $z$, ground-truth $\OO$ or pretrained $\OO_{\hat{\gamma}}$  \\

$\PP_z^{\pi}(\cdot)$, $\hat{\PP}_z^{\pi}(\cdot)$ & probability under environment featured by $z$, policy $\pi$, ground-truth $\OO$ or pretrained $\OO_{\hat{\gamma}}$  \\

%$\PP_z^{\bm\pi_{1:t}}(\cdot)$, $\hat{\PP}_z^{\bm\pi_{1:t}}(\cdot)$ & $\PP_z^{\bm\pi_{1:t}}(\prompt_h^t):=\prod_{t'=1}^t\PP_z^{\pi_{t'}}(\breve{\tau}_{h/t}^i)$ and $\hat{\PP}_z^{\bm\pi_{1:t}}(\prompt_h^t):=\prod_{t'=1}^t\hat{\PP}_z^{\pi_{t'}}(\breve{\tau}_{h/t}^i)$\\ 

$\cP_\Omega(\cdot)$, $\cP_\cZ(\cdot)$ & prior distributions of high-level tasks and latent variables\\
\hline
$\breve{\tau}_{h/t}^i$ & $\breve{\tau}_{h/t}^i={\tau}_H$ for all $i<t$ and $\breve{\tau}_{h/t}^{t}={\tau}_{h}$\\

$\PP_{z}(\cdot|\cdot,\tdo\cdot)$ & $\PP_{z}(\cdot\wvert o_1,\tdo g_{1:h-1}):=\int_{o_{2:h-1}}\prod_{h'=1}^{h-1}\PP_{z}\left(o_{h'+1}\wvert(o,g)_{1:h'}\right)\rd o_{2:h-1}$\\

$\PP_{\mathtt{LLM}}^t(\cdot|\cdot,\tdo\cdot)$ & $\PP_{\mathtt{LLM}}^t(\cdot\wvert o_1,\tdo g_{1:h-1}):=\int_{o_{2:h-1}}\prod_{h'=1}^{h-1}\PP_{\cD}\left(o_{h'+1}\wvert(o,g)_{1:h'},\cH_t\right)\rd o_{2:h-1}$\\

$\mathcal{\hat{J}}_{t,\mathtt{LLM}}(\cdot,\cdot),\hat{\pi}_{\mathtt{LLM}}^{t,*}(\cdot)$ & value function of environment simulated by $\mathtt{LLM}_{\hat{\theta}}$ and $\hat{\pi}_{\mathtt{LLM}}^{t,*}(\cdot):=\argmax_\pi \mathcal{\hat{J}}_{t,\mathtt{LLM}}(\pi,\cdot)$\\

$\mathcal{J}_{t,\mathtt{LLM}}(\cdot,\cdot),\pi_{\mathtt{LLM}}^{t,*}(\cdot)$ & value function of environment simulated by perfect $\mathtt{LLM}$ and ${\pi}_{\mathtt{LLM}}^{t,*}(\cdot):=\argmax_\pi \mathcal{{J}}_{t,\mathtt{LLM}}(\pi,\cdot)$\\

$\PP_{\mathtt{LLM}}^t(\cdot)$, $\hat{\PP}_{\mathtt{LLM}}^t(\cdot)$ & probability of environment simulated by perfect $\mathtt{LLM}$ or pretrained $\mathtt{LLM}_{\hat{\theta}}$ with $\cH_t$\\

\hline

$D_{\rm TV}(P,Q)$ & total variation distance, $D_{\rm TV}(P,Q):=1/2\cdot\mathbb{E}_{x\sim P}[|{\rm d}Q(x)/{\rm d}P(x)-1|]$\\

$D_{\rm H}^2(P,Q)$ & Helliger distance, $D^2_{\rm H}(P,Q):=1/2\cdot\mathbb{E}_{x\sim P}\big[\big(\sqrt{{\rm d}Q(x)/{\rm d}P(x)}-1\big)^2\big]$\\

$D_{\rm KL}(P,Q)$ & KL divergence, $D_{\rm KL}(P\|Q):=\mathbb{E}_{x\sim P}[\log \rd P(x)/\rd Q(x)]$\\

$\chi^2(P,Q)$ & $\chi^2$-divergence, $\chi^2(P\|Q):=\mathbb{E}_{x\sim P}[(\rd Q(x)/\rd P(x)-1)^2]$\\

$\hat{\mathbb{E}}_\mathcal{D}[f]$ & $\bar{\EE}[f]:=1/n\cdot\sum_{t=1}^nf(\ell_t)$ given dataset $\cD=\{\ell_t\}_{t\in[n]}$\\

$\bar{\PP}_\mathcal{D}(\cdot)$, $\bar{\mathbb{E}}_\mathcal{D}[f]$ & $\bar{\PP}_{\mathcal{D}}(\cdot):=\sum_{n=1}^{N}\sum_{t=0}^{T-1}\PP_\cD(\cdot|\ell_{1:t}^n)/NT$ and $\bar{\EE}[f]:=\EE_{\ell\sim\bar{\PP}_{\mathcal{D}}}[f(\ell)]$ given $\cD=\{\ell_{1:T}^n\}_{n\in[N]}$\\

\hline \bottomrule
\end{tabular}
\end{footnotesize}
\label{tab:notation}
\end{table}

\subsection{Hierarchical Markov Decision Process} \label{sec:HMDP}
In this subsection, we present a formalized definition of the HMDP model introduced in \S\ref{sec:PAR}.
\paragraph{Low-level \ac{mdp}.}
Define $\mathcal{G}$ as the space of high-level actions. For fixed $g\in\mathcal{G}$ and high-level step $h\in[H]$, the low-level \ac{mdp} is defined as $\mathcal{M}_h(g) = (\mathcal{S}, \mathcal{A}, H_a,\TT_h, \bar{r}_g)$, where $\mathcal{S}$ is the state space, $\mathcal{A}$ is the low-level action space, $H_a$ is the number of steps, $\TT_h=\{\TT_{h,\bar{h}}\}_{\bar{h}\in[H_a]}$ is the transition kernel, and $\bar{r}=\{\bar{r}_{\bar{h}}\}_{\bar{h}\in[H_a]}$ is the reward function with $\bar{r}_{\bar{h}}:\cS\times\cA\times\cG\mapsto\RR$. The low-level agent follows policy $\mu=\{\mu_g\}_{g\in\cG}$, where $\mu_g=\{\mu_{\bar{h}}\}_{\bar{h}\in[H_a]}$ and $\mu_{\bar{h}}:\cS\times\cG\mapsto\Delta(\cA)$.
\paragraph{High-level \ac{pomdp}.} 	
Define $\Omega$ be the space of disclosed variables, and we write $z=(\TT,\mu)$ to feature the low-level environment. Each low-level episode corresponds to a single high-level action. Given fixed pair $(z,\omega) \in\mathcal{Z}\times\Omega$, the \ac{pomdp} is characterized by $\mathcal{W}(z,\omega)=(\mathcal{S},\mathcal{O},\mathcal{G},H,\mathbb{P}_z,r_\omega)$, where $\mathcal{O}$ is the observation space, $\mathbb{O}=\{\mathbb{O}_h\}_{h\in[H]}$ is the emission distribution with $\OO_h:\cO\mapsto\Delta(\cS)$, $r= \{r_{h}\}_{h\in[H]}$ is the reward function with $r_h:\cO\times\Omega\mapsto\RR$, and $\mathbb{P}_{z}=\{\mathbb{P}_{z,h}\}_{h\in[H]}$ is the high-level transition kernel following that
\begin{align*}
\PP_{z,h}(s'\wvert s,g)=\PP\big(\bar{s}_{h,H_a+1}=s'\wvert\bar{s}_{h,1}=s,a_{h,1:\bar{h}}\sim\mu_g,\bar{s}_{h,2:\bar{h}+1}\sim\TT_{h}\big),
\end{align*} 
for all $h\in[H]$. The space of state $\mathcal{S}$ and latent variable $z$ are inherited from the low-level \ac{mdp}.	

\vspace{10pt}
Please refer to Figure \ref{fig:HMDP} for the interactive protocol of \ac{hmdp}. Furthermore, for the high-level POMDP, the state value function of policy $\pi$, the state value function is defined as
\begin{equation}
V_{z,h}^\pi(s,\tau,\omega)=\mathbb{E}_\pi\left[\sum_{h'=h}^H r_{h'}(o_{h'},\omega)\Big\vert s_h=s,\tau_{h}=\tau\right],
\end{equation}
where trajectory $\tau_h\in(\mathcal{O}\times\mathcal{G})^{h-1}\times\mathcal{O}$, and similarly we define the state-action value function as
\begin{equation}
Q_{z,h}^\pi(s,\tau,g,\omega)=\mathbb{E}_\pi\left[\sum_{h'=h}^H r_{h'}(o_{h'},\omega)\Big\vert s_h=s,\tau_{h}=\tau,g_h=g\right],
\end{equation}
where expectation is taken concerning the policy $\pi$, transition kernel $\mathbb{P}_z$, and emission distribution $\mathbb{O}$. Besides, for all $h\in[H]$, denote the probability of observing trajectory $\tau_h$ under policy $\pi$ as
\begin{equation}
\begin{aligned}
&\mathbb{P}_{z}^\pi(\tau_h)=\pi(\tau_h)\cdot\mathbb{P}_{z}(\tau_{h}),\quad \mathbb{P}_{z}(\tau_{h})=\prod_{h'=1}^{h-1}\mathbb{P}(o_{h'+1}\wvert\tau_{h'},g_{h'}),\quad \pi(\tau_h)=\prod_{h'=1}^{h-1}\pi_h(g_{h'}\wvert \tau_{h'}),
\end{aligned}
\label{eq:defpomdp}
\end{equation}
where $\mathbb{P}_z(\tau_h)$ denotes the part of the probability of $\tau_h$ that is incurred by the dynamic environment independent of policies, $\pi(\tau_h)$ denotes the part that can be attributed to the randomness of policy.

\subsection{LLM Pretraining under Transformer Architecture}\label{ap:transformer}
\paragraph{Transformer and Attention Mechanism.}Consider a sequence of $N$ input vectors $\{\mathbf{h}_i\}_{i=1}^n\subset\mathbb{R}^{d}$, written as an input matrix $\mathbf{H}=[\mathbf{h}_1,\dots,\mathbf{h}_n]^\top\in\mathbb{R}^{n\times d}$, where each $\mathbf{h}_i$ is a row of $\mathbf{H}$ (also a token). Consider $\Kb\in\mathbb{R}^{n_s\times d}$ and $\Vb\in\mathbb{R}^{n_s\times d_s}$, then the (softmax) attention mechanism maps these input vectors using the function $\attn(\mathbf{H},\Kb,\Vb)=\Softmax(\Hb\Kb^\top)\Vb\in\mathbb{R}^{n\times d_s}$, where softmax function is applied row-wisely and normalize each vector via the exponential function such that $[\Softmax(\hb)]_i=\exp(\hb_i)/\sum_{j=1}^d\exp(\hb_j)$ for all $i\in[d]$. To approximate sophisticated functions, practitioners use Multi-head Attention (MHA) instead, which forwards the input vectors into $h$ attention modules in parallel with $h\in\mathbb{N}$ as a hyperparameter and outputs the sum of these sub-modules. Denote $\Wb=\{(\Wb^H_i,\Wb^K_i,\Wb^V_i)\}_{i=1}^h$ as the set of weight matrices, the MHA outputs $\mha(\Hb,\Wb)=\sum_{i=1}^h\attn(\Hb\Wb^H_i,\Hb\Wb^K_i,\Hb\Wb^V_i)$, where $\Wb^H_i\in\mathbb{R}^{d\times d_h}$, $\Wb^K_i\in\mathbb{R}^{d\times d_h}$ and $\Wb^V_i\in\mathbb{R}^{d\times d}$ for all $i\in[h]$, and $d_h$ is usually set to $d/h$ \citep{michel2019sixteen}. Based on the definitions above, we are ready to present the transformer architecture employed in LLMs like BERT and GPT \citep{devlin2018bert,brown2020language}. Detailedly, the transformer network has $D$ sub-modules, consisting of an MHA and Feed-Forward (FF) fully-connected layer. Given input matrix $\mathbf{H}^{(0)}=\Hb\in\mathbb{R}^{n\times d}$, in the $j$-th layer for $j\in[D]$, it first takes the output from the $(t-1)$-th layer $\mathbf{H}^{(t-1)}$ as the input matrix, and forwards it to the MHA module with a projection function $\proj[\cdot]$ and a residual link. After receiving intermediate $\overline{\Hb}^{(t)}\in\mathbb{R}^{n\times d}$, the FF module maps each row through a same single-hidden layer neural network with $d_F$ neurons such that $\relu(\overline{\Hb}^{(t)}\Ab_1^{(t)})\Ab_2^{(t)}$, where $\Ab_1^{(t)}\in\mathbb{R}^{d\times d_F}$, $\Ab_2^{(t)}\in\mathbb{R}^{d_F\times d}$, and $[\relu(\Xb)]_{i,j}=\max\{\Xb_{i,j},0\}$. Specifically, the output of the $t$-th layer with $t\in[D]$ can be summarized as below:
\begin{align*}
\overline{\Hb}^{(t)}=\proj\left[\mha\left({\Hb}^{(t-1)},\Wb^{(t)}\right)+\gamma_1^{(t)}{\Hb}^{(t-1)}\right],\quad{\Hb}^{(t)}=\proj\left[\relu(\overline{\Hb}^{(t)}\Ab_1^{(t)})\Ab_2^{(t)}+\gamma_2^{(t)}\overline{\Hb}^{(t)}\right],
\end{align*}
where $\gamma_1^{(t)}$ and $\gamma_2^{(t)}$ features the allocation of residual link. The final output of the transformer is the probability of the next token via a softmax distribution such that 
$$
{\Hb}^{(D+1)}=\Softmax\left(\mathbf{1}^\top{\Hb}^{(D)}\Ab^{(D+1)}/N\gamma^{(D+1)}\right),
$$
where $\Ab^{(D+1)}\in\mathbb{R}^{d\times d_E}$ denotes the weight matrix with dimension $d_E\in\mathbb{N}$ and $\gamma^{(D+1)}\in(0,1]$ is the fixed temperature parameter. Let $\boldsymbol\theta^{(t)}=\left(\Wb^{(t)},\Ab^{(t)},\boldsymbol\gamma^{(t)}\right)$ for all $t\in[D]$, where $\Ab^{(t)}=(\Ab^{(t)}_1,\Ab^{(t)}_2)$ and $\boldsymbol\gamma^{(t)}=(\gamma_1^{(t)},\gamma_2^{(t)})$, and denote $\boldsymbol\theta^{(D+1)}=(\Ab^{(D+1)},\gamma)$. Hence, the parameter of the whole transformer architecture is the concatenation of parameters in each layer such that $\boldsymbol\theta=(\boldsymbol\theta^{(1)},\dots,\boldsymbol\theta^{(D+1)})$, and we consider a bounded parameter space, which is defined as below
\begin{align*}
	\boldsymbol\Theta:=\{\boldsymbol{\theta}\wvert&\|\Ab^{(t)}_1\|_F\leq B_{A,1},\|\Ab^{(t)}_2\|_F\leq B_{A,2},\|\Ab^{(D+1),\top}\|_{1,2}\leq B_{A},|\gamma_1^{(t)}|\leq1,|\gamma_2^{(t)}|\leq1,\\
	&|\gamma^{(D+1)}|\leq1,\|\Wb_i^{H,(t)}\|\leq B_H,\|\Wb_i^{K,(t)}\|\leq B_K,\|\Wb_i^{V,(t)}\|\leq B_V,\forall(i,t)\in[h]\times[D]\}.
\end{align*}
To facilitate the expression of Theorem \ref{thm:llmpretrain}, we further define $\bar{D}=D^2d\cdot(d_h+d_F+d)+d_E\cdot d$ and $\bar{B}=\gamma^{-1}RhB_{A,1}B_{A,2}B_{A}B_HB_KB_V$, where $R$ is (almost surely) the upper bound of the magnitude of each token $\ell\in\mathfrak{L}$ in token sequence $S_t\in\mathfrak{L}^*$, which is defined in Assumption \ref{as:boundedness}.
\vspace{-0.1cm}
\paragraph{Markov Chains.}
We follow the notations used in \citet{paulin2015concentration,zhang2023and}. Let $\Omega$ be a Polish space. The transition kernel for a time-homogeneous Markov chain $\{X_{i}\}_{i=1}^{\infty}$ supported on $\Omega$ is a probability distribution $\PP(x,y)$ for every $x\in\Omega$. Given $X_{1}=x_{1},\cdots,X_{t-1}=x_{t-1}$, the conditional distribution of $X_{t}$ equals $\PP(x_{t-1},y)$. A distribution $\pi$ is said to be a stationary distribution of this Markov chain if $\int_{x\in\Omega}\PP(x,y)\cdot\pi(x)=\pi(y)$. We adopt $\PP_t(x,\cdot)$ to denote the distribution of $X_{t}$ conditioned on $X_{1}=x$. The mixing time of the chain is defined by
\begin{align}
    d(t)=\sup_{x\in\Omega}D_{\rm TV}\big(\PP_{t}(x,\cdot),\pi\big), \quad t_{\rm mix}(\varepsilon)=\min\{t\wvert d(t)\leq\varepsilon\},\quad t_{\rm mix}=t_{\rm mix}(1/4).
\end{align}
\section{Extentions}\label{sec:extention}
\subsection{LLM Planning via Bayesian Aggregated World Model}\label{sec:wm}
\begin{algorithm}[t] 
	\caption{Planning with PAR System - \planner\ with LLM as World Model} 
	\begin{algorithmic}[1]
		\renewcommand{\algorithmicrequire}{\textbf{Input:}}
		\Require Policy $\pi_{\mathtt{exp}}$ with $\eta\in(0,1)$, parameter $c_\cZ>0$, $N_{\rm s}\in\NN$, and $|\cZ|\in\NN$, 
		\renewcommand{\algorithmicrequire}{}
		\Require \qquad and reward function $r=\{r_h\}_{h\in[H]}$ specified by the human user.
		\renewcommand{\algorithmicrequire}{\textbf{Initialize:}}
		\Require $\mathcal{H}_0\leftarrow\{\}$, $\cD_t^{\rm s}\leftarrow\{\}, \forall t\in[T]$, and $\epsilon\leftarrow(\log(c_\mathcal{Z}|\mathcal{Z}|\sqrt{T})/T\eta)^{1/2}$. 
		\For{episode $t$ from $1$ to $T$}
		\State Receive the high-level task $\omega^t$ from the human user.
		\State Sample $\mathcal{I}_t\sim\text{Bernuolli}(\epsilon)$.
		\For{stimulation $n$ from 1 to $N_{\rm s}$}
		\State Sample $g_{h,n}^{t,{\rm s}}\sim{\rm Unif}(\cG)$ for all $h\in[H]$ and set $\prompt_{1,n}^t\leftarrow\cH_t\cup\{o_1^t,g_{1,n}^{t,{\rm s}}\}$.
		\For{step $h$ from $1$ to $H$}
		\State Update $\prompt_{h,n}^t\leftarrow\mathcal{H}_t\cup\big\{o_{1,n}^t,g_{1,n}^{t,{\rm s}},\dots,o_{h,n}^{t,{\rm s}},g_{h,n}^{t,{\rm s}}\big\}$.
		\State Predict $o_{h+1,n}^{t,{\rm s}}\sim\mathtt{LLM}(\cdot\wvert\prompt_{h,n}^t)$ via prompting LLM.
		\EndFor
		\State Update $\cD_t^{\rm s}\leftarrow\cD_t^{\rm s}\cup\big\{o_{1,n}^t,g_{1,n}^{t,{\rm s}},\dots,o_{H-1,n}^{t,{\rm s}},g_{H-1,n}^{t,{\rm s}},o_{H,n}^{t,{\rm s}}\big\}$.
		\EndFor
		\For{step $h$ from $1$ to $H$}
		\State Collect the observation $o_{h}^t$ from the \reporter.
		\State Calculate $\pi_\texttt{LLM}^t\leftarrow\textsc{Optimal-planning}(\omega^t,\cD_t^{\rm s},r)$
		\State Sample $g_{h}^t\sim(1-\cI_t)\cdot\pi_{h,\texttt{LLM}}^t(\cdot\wvert \omega^t,\tau_{h}^t)+\cI_t\cdot\pi_{h,\mathtt{exp}}^t(\cdot\wvert \tau_{h}^t)$.
		\State Send the subgoal $g_h^t$ to the \actor.
		\EndFor
		\State Update $\mathcal{H}_{t+1}\leftarrow\mathcal{H}_{t}\cup\left\{\omega^t,\tau_{H}^t\right\}$.
		\EndFor	
	\end{algorithmic} 
	\label{alg:WM_planner}
\end{algorithm}

Recall that the pretraining algorithm in \S\ref{sec:metric} also equips LLM with the capability to predict observation generation, i.e., $\PP_h(o_h\wvert (o,g)_{1:h-1})$. Existing literature has shown the benefits of augmenting the reasoning process with predicted world states, as it endows LLMs with a more grounded inference without reliance on expert knowledge \citep{hu2023language}. Specifically, %when regarding LLM as a simulated world model, 
the \planner\ interactively prompts LLM to internally simulate entire trajectories grounded on historical feedback. By leveraging model-based RL methods such as Monte Carlo Tree Search \citep{browne2012survey} and Real-Time Dynamic Programming \citep{barto1995learning}, the \planner\ utilizes the LLM-simulated environment to optimize its strategies. The planning protocol is as follows: at the beginning of $t$-th episode, \planner\ iteratively prompts LLM with initial observation $o_1$, history $\cH_t$, and subgoals $g_{1:H}$ sequentially to predict observations $o_{1:H}$. Subsequently, a simulation dataset $\cD_t^{\rm s}$ is collected, allowing the \planner\ to compute the optimal policy with rewards specified by the human users, using methods such as MCTS. We first show that the LLM-simulated environment conforms to a Bayesian Aggregated World Model (BAWM), and is formalized as follows.
\begin{proposition}[LLM as BAWM]
Assume that the distribution of pretraining data is given by \eqref{eq:joint_distribution}. Under the perfect setting in Definition  \ref{as:perfect}, for each $(h,t)\in[H]\times[T]$, the LLM serves as a Bayesian aggregated world model, following that
\begin{align}
&\PP_{\mathtt{LLM}}^t(\cdot\wvert o_1,\tdo g_{1:h-1})=\sum_{z\in\mathcal{Z}}\PP_{z}\left(\cdot\wvert o_1,\tdo g_{1:h-1}\right)\cdot\PP_{\cD}\left(z\wvert\cH_t\right),
\end{align}
with marginal distributions defined as $\PP_{z}(\cdot\wvert o_1,\tdo g_{1:h-1})=\int_{o_{2:h-1}}\prod_{h'=1}^{h-1}\PP_{z}\left(o_{h'+1}\wvert(o,g)_{1:h'}\right)\rd o_{2:h-1}$ and $\PP_{\mathtt{LLM}}^t(\cdot\wvert o_1,\tdo g_{1:h-1})=\int_{o_{2:h-1}}\prod_{h'=1}^{h-1}\PP_{\cD}\left(o_{h'+1}\wvert(o,g)_{1:h'},\cH_t\right)\rd o_{2:h-1}$.
\label{thm:bawm}
\end{proposition}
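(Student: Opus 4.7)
The plan is to mirror the BAIL proof of Proposition~\ref{thm:BAIL}: the perfect-setting hypothesis (Definition~\ref{as:perfect}) turns every LLM forward pass into a conditional of the joint pretraining distribution $\PP_\cD$, so each factor in the product defining $\PP_{\mathtt{LLM}}^t$ equals $\PP_\cD(o_{h'+1}\wvert(o,g)_{1:h'},\cH_t)$; the result then follows from Bayesian marginalization over the latent variable $z$, coupled with a careful treatment of the interventional ($\tdo$) subgoal inputs.

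First, I would collapse the product-and-integral in the definition of $\PP_{\mathtt{LLM}}^t$ by applying the chain rule of probability in reverse, obtaining
\[
\PP_{\mathtt{LLM}}^t(o_h\wvert o_1,\tdo g_{1:h-1})\;=\;\PP_\cD\bigl(o_h\wvert o_1,\tdo g_{1:h-1},\cH_t\bigr),
\]
and identifying the analogous RHS quantity $\PP_z(o_h\wvert o_1,\tdo g_{1:h-1})=\PP_\cD(o_h\wvert o_1,\tdo g_{1:h-1},z)$. Inserting $z$ via Bayes gives
\[
\PP_\cD\bigl(o_h\wvert o_1,\tdo g_{1:h-1},\cH_t\bigr)=\sum_{z\in\cZ}\PP_\cD\bigl(o_h\wvert o_1,\tdo g_{1:h-1},\cH_t,z\bigr)\cdot\PP_\cD\bigl(z\wvert o_1,\tdo g_{1:h-1},\cH_t\bigr).
\]
Two factorization arguments then finish the job. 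For the likelihood factor, conditional on $z$ the high-level transitions are Markov with kernel $\PP_{z,h}$, so the history $\cH_t$ adds no information beyond $z$ and the factor reduces to $\PP_z(o_h\wvert o_1,\tdo g_{1:h-1})$. For the posterior factor, I would argue that $o_1$ is drawn via $s_1\sim\rho$ and $o_1\sim\OO(\cdot\wvert s_1)$ with both distributions independent of $z$, while the intervened $g_{1:h-1}$ are set exogenously by the simulation protocol of Algorithm~\ref{alg:WM_planner} and therefore supply no Bayesian evidence about $z$; hence the posterior collapses to $\PP_\cD(z\wvert\cH_t)$, which is exactly the weighting prescribed by the claim.

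The main obstacle is the bookkeeping around the $\tdo$ operator. In the pretraining generative model~\eqref{eq:joint_distribution}, subgoals are sampled from policies that depend on $z$ through $\tau_h$ and $\omega$, so naively conditioning on observed subgoals \emph{would} update the $z$-posterior. Here, however, the subgoals fed to the LLM during simulation are user-injected rather than generated by $\pi^b$ or $\pi^*_z$, an intervention that severs the causal edge from $z$ to $g_{h'}$ in the graph; formalizing this requires viewing the LLM forward pass as evaluating $\PP_\cD(\cdot\wvert\prompt)$ on prompts whose subgoal coordinates are chosen exogenously, which by the factorization of $\PP_\cD$ coincides with the interventional distribution. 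A minor auxiliary step is to confirm that $\rho$ and $\OO$ are shared across all $z\in\cZ$ in the HMDP framework of \S\ref{sec:HMDP}, so that $o_1$ truly carries no information about $z$.
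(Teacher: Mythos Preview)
Your proposal is correct and reaches the same conclusion, but the route differs from the paper's. You marginalize first---collapsing the product-and-integral into the single interventional marginal $\PP_\cD(o_h\wvert o_1,\tdo g_{1:h-1},\cH_t)$---and then expand over $z$, invoking do-calculus to argue that neither $o_1$ (shared $\rho,\OO$) nor the intervened subgoals update the posterior beyond $\PP_\cD(z\wvert\cH_t)$. The paper instead works at the level of the \emph{joint} density $\prod_{h'=1}^{h-1}\PP_\cD(o_{h'+1}\wvert(o,g)_{1:h'},\cH_t)$: it writes the step-$h$ posterior $\PP_\cD(z\wvert(o,g)_{1:h-1},\cH_t)$ via Bayes as a likelihood ratio times $\PP_\cD(z\wvert\cH_t)$, substitutes into the law of total probability, and observes that the denominator telescopes against the accumulated product, yielding $\prod\PP_\cD=\sum_z(\prod\PP_z)\cdot\PP_\cD(z\wvert\cH_t)$ directly; only then does it integrate out $o_{2:h-1}$. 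Both arguments silently use the same facts---that $\pi^b$ and the law of $o_1$ do not depend on $z$---so the $z$-independent factors cancel in the Bayes ratio (paper) or equivalently the intervention severs no informative edge (you). Your approach is conceptually cleaner and makes the causal structure explicit; the paper's telescoping is more elementary in that it never leaves ordinary conditioning, at the cost of a slightly more opaque algebraic step.
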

 \begin{proof}[Proof of Propoition \ref{thm:bawm}.]
	Please refer to \S\ref{ap:bawm} for a detailed proof.
\end{proof}
Note that the generation distribution $\PP_{\mathtt{LLM}}^t(\cdot\wvert(o,g)_{1:h})=\texttt{LLM}(\cdot\wvert(o,g)_{1:h},\cH_t)$ is non-stationary, since $\PP_{\cD}(z\wvert(o,g)_{1:h},\cH_t)$ fluctuates with simulated part $(o,g)_{1:h}$ due to the autoregressive manner of LLMs. Instead, Proposition \ref{thm:bawm} posits that the marginal distribution has a stationary expression based on posterior aggregation. Akin to Assumption \ref{as:onlinecoverage}, we introduce the coverage assumption.
\begin{assumption}[Strong Coverage]
\label{as:onlinefullcoverage}
	There exists absolute constants $\lambda_{S,1},\lambda_{S,2}$ and $\lambda_R$ such that for all $z\in\mathcal{Z}$, length $t<\bar{T}_{\rm p}$ and policy sequence $\{\pi^{i}\}_{i\leq\lfloor t/2H\rfloor}$ from the \planner, it holds that (\romannumeral1) $\prod_{i=1}^{\lfloor t/2H\rfloor}\mathbb{\hat{P}}_z^{\pi_i}(\tilde{S}_i)\leq\lambda_{S,1}\cdot\bar{\mathbb{P}}_{\mathcal{D}_\mathtt{LLM}}((\tilde{S}_i)_{i\leq\lfloor t/2H\rfloor})$ and $\bar{\mathbb{P}}_{\mathcal{D}_\mathtt{LLM}}(\tilde{S}_{\lceil t/2H\rceil}|(\tilde{S}_i)_{i\leq\lfloor t/2H\rfloor})\geq\lambda_{S,2}$ for all ordered $S_t=(\tilde{S}_i)_{i\leq\lceil t/2H\rceil}\in\mathfrak{L}^*$, where $|\tilde{S}_i|=2H$ for all $k<\lceil t/2H\rceil$, (\romannumeral2) $\bar{\mathbb{P}}_{\mathcal{D}_\mathtt{Rep}}(s)\geq\lambda_R$ for all $s\in\mathcal{S}$.
\end{assumption}
We remark that Assumption \ref{as:onlinefullcoverage} imposes a stronger condition over the coverage, particularly on the in-episode trajectory $\tilde{S}_{\lceil t/2H\rceil}$, Here, $\lceil t/2H\rceil$ denotes the number of episodes described in $S_t$. The demand of the stronger assumption arises from LLM now serving as a WM, necessitating more extensive information across all kinds of scenarios. Suppose that the \planner\ can learn optimal policy $\hat{\pi}^{t,*}_\mathtt{LLM}={\rm argmax}_{\pi\in\Pi}\ \mathcal{\hat{J}}_\mathtt{LLM}^t(\pi,\omega)$ with sufficiently large simulation steps $|\cD_t^{\rm s}|$, where $\mathcal{\hat{J}}_\mathtt{LLM}^t$ denotes the value function concerning $\mathtt{LLM}_{\hat{\theta}}$ and history $\cH_t$. Akin to Algorithm \ref{alg:planner}, the planning algorithm by taking LLM as WM includes an $\epsilon$-greedy exploration with $\eta$-distinguishable $\pi_\mathtt{exp}$. The pseudocode is in Algorithm \ref{alg:WM_planner}. The following corollary presents the performance under practical settings. 
\begin{corollary}[Regret under Practical Setting with LLM as World Model]
	Suppose that Assumptions \ref{as:coverage}, \ref{as:boundedness}, \ref{as:ambiguity}, \ref{as:realizability} and \ref{as:onlinecoverage}. Given an $\eta$-distinguishable exploration policy $\pi_{\mathtt{exp}}$ and $T\leq T_{\rm p}$, under the practical setting, the \planner's algorithm in Algorithm \ref{alg:WM_planner} ensures that
\begin{align*}
{\rm Reg}_z(T)&\leq\tilde{\mathcal{O}}\Big(H\sqrt{T/\eta\cdot\log(c_\mathcal{Z}|\mathcal{Z}|\sqrt{T})}+H^2T\cdot\Delta_{\rm p,wm}(N_{\rm p},T_{\rm p},H,1/\sqrt{T},\xi)\Big),
\end{align*}
for any $z\in\mathcal{Z}$ and $\{\omega_t\}_{t\in[T]}$. The cumulative pretraining error of the PAR system follows 
\begin{align*}
	&\Delta_{\rm p,wm}(N_{\rm p},T_{\rm p},H,\delta,\xi)=2(\eta\lambda_R)^{-1}\cdot\Delta_\mathtt{Rep}(N_{\rm p},T_{\rm p},H,\delta)^2\\
	&\quad+2\lambda_R^{-1}\cdot\Delta_\mathtt{Rep}(N_{\rm p},T_{\rm p},H,\delta)+2\lambda_{S,1}\lambda_{S,2}^{-1}\cdot\Delta_\mathtt{LLM}(N_{\rm p},T_{\rm p},H,\delta).
\end{align*}
where $\xi=(\eta,\lambda_{S,1},\lambda_{S,2},\lambda_R)$ are defined in Definition \ref{def:iden} and Assumption \ref{as:onlinecoverage}, and errors $\Delta_\mathtt{LLM}$ and $\Delta_\mathtt{Rep}$ are  defined in Theorem \ref{thm:llmpretrain} and Theorem \ref{thm:translatorpretrain}. Under practical setting, \planner\ should explore with probability $\epsilon=(\log(c_\mathcal{Z}|\mathcal{Z}|\sqrt{T})/T\eta)^{1/2}+H(\eta\lambda_{\min})^{-1}\Delta_\mathtt{Rep}(N_{\rm p},T_{\rm p},H,1/\sqrt{T})^2$.
\label{cor:bawm_regret}
\end{corollary}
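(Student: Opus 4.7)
The plan is to mirror the proof of Theorem \ref{thm:practical regret}, adapting each step to the world-model Planner. For each episode $t$, I would decompose the instantaneous regret as
\begin{align*}
\hat{\cJ}_z\big(\hat\pi_z^*, \omega^t\big) - \hat{\cJ}_z\big(\hat\pi^t, \omega^t\big) = \underbrace{\hat{\cJ}_z\big(\hat\pi_z^*, \omega^t\big) - \hat{\cJ}_z\big(\hat\pi_{\mathtt{LLM}}^{t,*}, \omega^t\big)}_{(\text{I})} + \underbrace{\hat{\cJ}_z\big(\hat\pi_{\mathtt{LLM}}^{t,*}, \omega^t\big) - \hat{\cJ}_z\big(\hat\pi^t, \omega^t\big)}_{(\text{II})},
\end{align*}
where $\hat\pi_{\mathtt{LLM}}^{t,*}$ is the optimal policy inside the pretrained BAWM induced by $\mathtt{LLM}_{\hat\theta}$ and $\cH_t$. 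Term (II) costs at most $\epsilon H$ per episode because the Planner deviates from $\hat\pi_{\mathtt{LLM}}^{t,*}$ only on the event $\cI_t=1$ and rewards lie in $[0,1]$. For term (I), I would first replace the pretrained reporter by the ground-truth one on both sides, which by Theorem \ref{thm:translatorpretrain} together with the coverage lower bound $\lambda_R$ contributes at most $2 H \lambda_R^{-1} \Delta_\mathtt{Rep}$ per episode, and then bound the remaining gap $\cJ_z(\pi_z^*,\omega^t) - \cJ_z(\pi_{\mathtt{LLM}}^{t,*},\omega^t)$ in the perfect-reporter world.

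A standard simulation-lemma argument reduces this last quantity to the $H$-step trajectory total-variation $\sup_\pi \| \PP_z^\pi - \hat\PP_{\mathtt{LLM}}^{t,\pi} \|_{\mathrm{TV}}$, and a triangle inequality splits it into (a) the posterior-concentration error $\| \PP_z - \PP_{\mathtt{LLM}}^t \|_{\mathrm{TV}}$, where $\PP_{\mathtt{LLM}}^t$ is the exact BAWM identified by Proposition \ref{thm:bawm}, and (b) the LLM-pretraining residual $\| \PP_{\mathtt{LLM}}^t - \hat\PP_{\mathtt{LLM}}^t \|_{\mathrm{TV}}$. For (a) I would reuse the log-likelihood-ratio martingale developed in \S\ref{ap:perfectplanning}: the $\eta$-distinguishability of $\pi_\mathtt{exp}$ under the true $z$ guarantees that after $\epsilon t$ exploratory episodes the posterior $\PP_\cD(\cdot\wvert \cH_t)$ concentrates on $z$ at an $\sqrt{\log(c_\cZ|\cZ|)/(\eta\,\epsilon t)}$ rate, so that summing over $t\in[T]$ and combining with (II) gives, after tuning $\epsilon$, the main term $H\sqrt{T/\eta\cdot\log(c_\cZ|\cZ|\sqrt T)}$. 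The saving of a $\sqrt H$ factor relative to Theorem \ref{thm:practical regret} reflects the fact that the simulation lemma aggregates all $H$ in-episode steps into one trajectory mismatch, whereas BAIL accumulates a per-step imitation error.

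The main obstacle will be piece (b): Theorem \ref{thm:llmpretrain} controls only the single-token TV averaged over the pretraining prompt distribution $\bar\PP_{\cD_\mathtt{LLM}}$, whereas the simulation lemma demands a bound on the multi-step marginal $\hat\PP_{\mathtt{LLM}}^t(\cdot\wvert o_1,\tdo g_{1:H-1})$ on the specific prompt realised during an online rollout after the Planner has fixed $\cH_t$ and a candidate subgoal sequence. I would chain single-token errors through a hybrid argument across the $H$ rollout steps and then change measure from the online prompt law to the pretraining one: the across-episode prefix $\cH_t$ incurs $\lambda_{S,1}$ and the freshly simulated in-episode part $\tilde S_{\lceil t/2H\rceil}$ incurs $\lambda_{S,2}^{-1}$, jointly producing the $\lambda_{S,1}\lambda_{S,2}^{-1}\Delta_\mathtt{LLM}$ contribution to $\Delta_{\rm p,wm}$; this is precisely why Assumption \ref{as:onlinefullcoverage} strengthens Assumption \ref{as:onlinecoverage} in the WM regime. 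The reporter error also leaks into the posterior-concentration step through the observation likelihoods, and absorbing it requires inflating $\epsilon$ by an additional $H(\eta\lambda_R)^{-1}\Delta_\mathtt{Rep}^2$, which matches the second term of the prescribed $\epsilon$ and yields the $(\eta\lambda_R)^{-1}\Delta_\mathtt{Rep}^2$ component of $\Delta_{\rm p,wm}$. Collecting all contributions and summing over $t\in[T]$ delivers the advertised bound.
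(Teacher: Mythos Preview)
Your proposal is correct and takes essentially the same approach as the paper: a decomposition into reporter error, model mismatch between $\cJ_z$ and the LLM-simulated value $\hat{\cJ}_{t,\mathtt{LLM}}$ (split via Proposition~\ref{thm:bawm} into posterior concentration plus an $H$-step hybrid that produces the $\lambda_{S,1}\lambda_{S,2}^{-1}\Delta_\mathtt{LLM}$ term), and the $\epsilon$-exploration cost, with Lemma~\ref{lem:practicalconcentration} handling the posterior under the mismatched reporter. The only cosmetic difference is that you absorb the exploration cost in the practical environment $\hat{\cJ}_z$ whereas the paper does so in the simulated one $\hat{\cJ}_{t,\mathtt{LLM}}$ (its term (iii)); one small slip is that the posterior tail $\sum_{z'\ne z}\PP_\cD(z'\wvert\cH_t)$ does not decay at a $\sqrt{\log/(\eta\epsilon t)}$ rate but rather like $\min\{c_\cZ\exp(-(\eta\epsilon-\Delta)t+\log(|\cZ|/\delta)),1\}$ (or $\log(\cdot)/(\eta|\cX_\mathtt{exp}^{t-1}|)$ in the perfect setting), though since you invoke the martingale argument of \S\ref{ap:perfectplanning} the summed bound after tuning $\epsilon$ is unaffected.
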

 \begin{proof}[Proof of Corollary \ref{cor:bawm_regret}.]
	Please refer to \S\ref{ap:bawm_regret} for a detailed proof.
\end{proof}
 
\subsection{LLM-Empowered Multi-Agent Collaboration}\label{sec:multiagent}
\begin{algorithm}[t] 
	\caption{Multi-Agent Planning with PAR System - \planner} 
	\begin{algorithmic}[1]
		\renewcommand{\algorithmicrequire}{\textbf{Input:}}
		\Require Policy $\pi_{\mathtt{exp}}$ with $\eta\in(0,1)$, parameter $c_\cZ>0$, and $|\cZ|\in\NN$.
		\renewcommand{\algorithmicrequire}{\textbf{Initialize:}}
		\Require $\mathcal{H}_0\leftarrow\emptyset$, and $\epsilon\leftarrow(HK\log(c_\mathcal{Z}|\mathcal{Z}|\sqrt{T})/T\eta)^{1/2}$.	
		\For{episode $t$ from $1$ to $T$}
		\State Receive the high-level task $\omega^t$ from the human user.
		\State Sample $\mathcal{I}_t\sim\text{Bernuolli}(\epsilon)$.
		\For{step $h$ from $1$ to $H$}
		\State Collect the observation $o_{h}^t$ from \reporter.
		\For{\actor\ $k$ from $1$ to $K$}
		\State Set $\prompt_{h,k}^t\leftarrow\mathcal{H}_t\cup\left\{\omega^t,o_1^t,g_1^t,\dots,o_{h}^t,k\right\}$.
		\State Sample $g_{h,k,\mathtt{LLM}}^t\sim\mathtt{LLM}(\cdot\wvert\prompt_{h,k}^t)$ via prompting LLM.
		\EndFor
		\State \textbf{If} $\mathcal{I}_t=1$ \textbf{then} $g_{h}^t\leftarrow g_{h,\mathtt{LLM}}^t$, \textbf{else} sample $g_{h}^t\sim\pi_{h,\mathtt{exp}}(\cdot\wvert \tau_{h}^t)$.
		\EndFor
		\State Send the subgoal $g_h^t$ to the \actor s.
		\State Update $\mathcal{H}_{t+1}\leftarrow\mathcal{H}_{t}\cup\left\{\omega^t,\tau_{H}^t\right\}$.
		\EndFor	
	\end{algorithmic} 
	\label{alg:multi_planner}
\end{algorithm}
To characterize the multi-agent interactive process, i.e., several \actor s, of task planning, we consider a turn-based \emph{cooperative}  hierarchical Markov Game (HMG), corresponding to HMDP in \S\ref{sec:PAR}. Instead, HMG consists of a low-level language-conditioned Markov Game (MG) and a high-level language-conditioned cooperative Partially Observable Markov Game (POMG). To extend this framework, we introduce the following modifications: (\romannumeral 1) low-level MG: let $\cK=[K]$ be the set of \actor s, and $\mathcal{G}=\mathcal{G}_1\times\dots\times\mathcal{G}_K$ and $\mathcal{A}=\mathcal{A}_1\times\dots\times\mathcal{A}_K$ be the space of subgoals and low-level actions. Low-level \actor s conduct planning following a joint policy $\mu=\{\mu_{h}\}_{h\in[H]}$ with $\mu_{h}:\mathcal{S}\times\mathcal{G}\mapsto\Delta(\mathcal{A})$, where $\{\mu_{h,k}\}_{k\in\cK}$ can be correlated, e.g., within zero-sum game, Stackelberg game \citep{bacsar1998dynamic}. (\romannumeral 2) high-level POMG: under cooperation, assume that policies can be factorized as 
$$
\pi_h(\mathbf{g}_h\wvert\tau_{h-1},\omega)=\prod_{k=1}^K\pi_{h,k}(g_{h,k}\wvert\tau_{h-1},\omega),\quad\forall h\in[H].
$$ 
The remaining concepts are consistent with HMDP. Here, the \planner\ assumes the role of \emph{central controller} and solves a fully-cooperative POMG that aims to maximize a shared value function. Thus, the \planner\ should infer both the \actor s' intentions, i.e., joint policy $\mu$, and the environment, i.e., transition kernel $\TT$, from the historical context, and then assign subgoal for each \actor. 

Specifically, the LLM's recommendations are obtained by invoking the ICL ability of LLMs with the history-dependent prompt akin to \eqref{eq:prompt} sequentially for each \actor. For the $k$-th \actor, prompt LLM with $\prompt_{h,k}^t=\mathcal{H}_t\cup\{\omega^t,\tau_{h}^t,k\}$, where denote $\mathcal{H}_t=\bigcup_{i=1}^{t-1}\{\omega^i,\tau_{H}^i\}$ and $\tau_h^t=\{o_h^1,\mathbf{g}_h^1,\dots,o_h^t\}$. Under the perfect setting (see Definition \ref{as:perfect}), LLM's joint policy for recommendations follows:	
\begin{align}
		\pi_{h,\mathtt{LLM}}^t\big(\mathbf{g}_h^t\wvert  \tau_{h}^t,\omega^t\big)&=\prod_{k\in\cK}\left(\sum_{z\in\mathcal{Z}}\pi^*_{z,h,k}\left(g_{h,k}^t\wvert \tau_{h}^t,\omega^t\right)\cdot\PP_{\cD}\left(z\wvert\prompt_h^t\right)\right),
		\label{eq:multiagent_BAIL}
	\end{align}
	which is akin to Proposition \ref{thm:BAIL} and the proof of the statement is provided in \S\ref{ap:multiagent_regret}. The pseudocode is presented in Algorithm \ref{alg:multi_planner}. Then, we give the performance guarantee under multi-agent scenarios with the perfect PAR system.
\begin{corollary}[Multi-agent Collaboration Regret under Perfect Setting]
Suppose that Assumptions \ref{as:perfect} and \ref{as:coverage} hold. Given an $\eta$-distinguishable exploration policy $\pi_{\mathtt{exp}}$ and $T\leq T_{\rm p}$, the \planner's algorithm in Algorithm \ref{alg:multi_planner} guarantees that
\begin{align*}
{\rm Reg}_z(T)&\leq\tilde{\mathcal{O}}\left(H^\frac{3}{2}\sqrt{TK/\eta\cdot \log\left(c_\mathcal{Z}|\mathcal{Z}|\sqrt{T}\right)}\right),
\end{align*}
for any $z\in\mathcal{Z}$ and $\{\omega^t\}_{t\in[T]}$, if \planner\ explores with $\epsilon=(HK\log\left(c_\mathcal{Z}|\mathcal{Z}|\sqrt{T}\right)/T\eta)^{1/2}$.
\label{corol:perfectregret}
 \end{corollary}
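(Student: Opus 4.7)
The plan is to mirror the proof of Theorem \ref{thm:regret} and track carefully where the factor of $K$ enters. First, I would establish the multi-agent analogue of Proposition \ref{thm:BAIL}, namely \eqref{eq:multiagent_BAIL}: under the perfect setting, the LLM invoked with $\prompt_{h,k}^t=\mathcal{H}_t\cup\{\omega^t,\tau_h^t,k\}$ returns a marginal recommendation that coincides with a posterior-weighted aggregation of the $k$-th expert policy. Since the agent index $k$ is the only additional token compared to the single-agent prompt, the Bayesian-aggregation derivation of Proposition \ref{thm:BAIL} transfers verbatim for each agent. The turn-based construction then yields the joint recommendation as the product $\pi_{h,\mathtt{LLM}}^t(\mathbf{g}_h^t\wvert\tau_h^t,\omega^t)=\prod_{k=1}^K\pi_{h,k,\mathtt{LLM}}^t(g_{h,k}^t\wvert\tau_h^t,\omega^t)$.

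Second, I would apply the same $\epsilon$-greedy regret decomposition: split episodes by $\mathcal{I}_t\sim\mathrm{Bernoulli}(\epsilon)$ and bound exploration episodes trivially by $\epsilon T H$, while for exploitation episodes bound the per-step suboptimality of the BAIL-induced joint policy relative to $\pi_z^*$ by the total variation distance between the two joint policies. Because both the LLM-recommended policy and $\pi_z^*$ factorise as products of per-agent policies sharing the same history, a standard hybrid (telescoping) argument gives $D_{\rm TV}(\pi_{h,\mathtt{LLM}}^t,\pi_{z,h}^*)\le\sum_{k=1}^K D_{\rm TV}(\pi_{h,k,\mathtt{LLM}}^t,\pi_{z,h,k}^*)$, and this is the sole source of the extra $K$ factor.

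Third, I would bound each per-agent TV distance by the concentration of the LLM's posterior $\PP_\cD(\cdot\wvert\prompt_{h,k}^t)$ around a Dirac mass at the true latent $z$. The argument is the same as in the single-agent proof: the exploration subroutine contributes $\sim\epsilon t$ informative episodes by episode $t$, and $\eta$-distinguishability of $\pi_\mathtt{exp}$ together with a Hellinger-distance martingale argument (leveraging Assumption \ref{as:coverage} to handle the prior ratio $c_\cZ$) shows that the expected posterior TV distance is at most $\tilde{\mathcal{O}}(\sqrt{\log(c_\cZ|\cZ|)/(\eta\epsilon t)})$. Summing over exploitation episodes and the $K$ agents gives an overall BAIL contribution of $\tilde{\mathcal{O}}\bigl(KH\sqrt{TH\log(c_\cZ|\cZ|\sqrt T)/(\eta\epsilon)}\bigr)$.

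Finally, I would balance the two contributions. Setting $\epsilon T H\asymp KH\sqrt{TH\log(c_\cZ|\cZ|\sqrt T)/(\eta\epsilon)}$ yields the choice $\epsilon=(HK\log(c_\cZ|\cZ|\sqrt T)/T\eta)^{1/2}$ prescribed by Algorithm \ref{alg:multi_planner}, and the total regret evaluates to the claimed $\tilde{\mathcal{O}}\bigl(H^{3/2}\sqrt{TK/\eta\cdot\log(c_\cZ|\cZ|\sqrt T)}\bigr)$. The main obstacle I anticipate is the careful bookkeeping for the hybrid decomposition when the $K$ sub-prompts $\prompt_{h,1}^t,\dots,\prompt_{h,K}^t$ differ only in the trailing token $k$: one must verify that the per-agent posteriors entering \eqref{eq:multiagent_BAIL} share a common informative portion of the prompt, so that a single posterior-concentration argument suffices and the product structure does not distort the telescoping bound.
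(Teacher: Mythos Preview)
Your overall skeleton matches the paper's proof exactly: verify the product-form BAIL in \eqref{eq:multiagent_BAIL}, apply the performance-difference decomposition to split into an $HT\epsilon$ exploration term and a BAIL term, telescope the product over $k\in[K]$ to extract a factor of $K$, bound each summand by the posterior mass on wrong latents, and balance. Your observation that all $K$ sub-prompts share the same informative history, so that a single posterior bound $\sum_{z'\neq z}\PP_\cD(z'\wvert\prompt_h^t)$ suffices, is precisely what the paper uses.

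However, there is a quantitative error in your third step that makes the final balancing inconsistent. The Hellinger martingale argument (Lemma~\ref{online guarantee}) combined with $\eta$-distinguishability yields
\[
\sum_{z'\neq z}\PP_\cD(z'\wvert\prompt_h^t)\;\le\;\min\Bigl\{2\log(c_\cZ|\cZ|/\delta)\big/\bigl(\eta\,|\cX_{\mathtt{exp}}^{t-1}|\bigr),\,1\Bigr\},
\]
i.e.\ a rate of order $1/(\eta\epsilon t)$, \emph{not} $\sqrt{1/(\eta\epsilon t)}$. With a finite model class and a uniform Hellinger gap $\eta$, every exploration episode contributes an additive $\eta$ to the cumulative Hellinger sum, and the Donsker--Varadhan step converts this directly into a bound linear in $1/|\cX_{\mathtt{exp}}^{t-1}|$. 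Summing via Lemma~\ref{lem:epsilon_concen} gives $\sum_t\min\{1/|\cX_{\mathtt{exp}}^{t-1}|,1\}=\tilde\cO(\epsilon^{-1})$, so the BAIL term is $\tilde\cO\bigl(H^2K\log(c_\cZ|\cZ|)/(\eta\epsilon)\bigr)$, and balancing against $HT\epsilon$ gives exactly the stated $\epsilon=(HK\log(c_\cZ|\cZ|\sqrt T)/(T\eta))^{1/2}$ and regret.

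By contrast, your square-root rate summed over $t$ would give a BAIL term of order $\sqrt{T/(\eta\epsilon)}$, and equating $\epsilon TH\asymp KH\sqrt{TH\log/(\eta\epsilon)}$ actually yields $\epsilon\propto T^{-1/3}$ and a $T^{2/3}$-type regret, not the claimed bound. So the arithmetic in your final paragraph does not close; once you replace the rate with the correct linear-in-$1/|\cX_{\mathtt{exp}}^{t-1}|$ bound, everything goes through as in the paper.
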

\begin{proof}[Proof of Corollary \ref{corol:perfectregret}.]
	Please refer to \S\ref{ap:multiagent_regret} for a detailed proof.
\end{proof}
Corollary \ref{corol:perfectregret} is akin to Theorem \ref{thm:regret} with an additional $\sqrt{K}$ in regret. Besides, the multi-agent space of latent variable $|\mathcal{Z}|=|\cZ_\TT|\times|\cZ_{\mu,\rm m}|$, where $\cZ_{\mu,\rm m}$ is the space of joint policy, is generally larger than the single-agent space. Specifically, if responses are uncorrelated, then we have $\log|\cZ_{\mu,\rm m}|=K\log|\cZ_{\mu,\rm s}|$, resulting in a $\sqrt{K}$ times larger regret. The proof of extension to practical setting is akin to Corollary \ref{corol:perfectregret} based on derivations in Theorem \ref{thm:practical regret}, and is omitted. 

\section{Proof for Section \ref{sec:bail}: Perfect Setting}
\subsection{Proof of Proposition \ref{thm:BAIL}}
\label{ap:bil}
\emph{Proof of Proposition \ref{thm:BAIL}.} Note that for all $h\in[H]$ and $t\in[T]$, we have
\begin{align}
	\pi_{h,\mathtt{LLM}}^t\left(g_{h}^t\wvert \tau_{h}^t,\omega^t\right)&=\sum_{z\in\mathcal{Z}}\PP_{\cD}\left(g_{h}^t\wvert\prompt_h^t,z\right)\cdot\PP_{\cD}\left(z\wvert\prompt_h^t\right)\nonumber\\
	&=\sum_{z\in\mathcal{Z}}\PP_{\cD}\left(g_{h}^t\wvert\mathcal{H}_t,\tau_{h}^t,\omega^t,z\right)\cdot\PP_{\cD}\left(z\wvert\prompt_h^t\right)\nonumber\\
	&=\sum_{z\in\mathcal{Z}}\pi^*_{z,h}\left(\cdot\wvert \tau_{h}^t,\omega^t\right)\cdot\PP_{\cD}\left(z\wvert\prompt_h^t\right),
\end{align}
where the second equation results from the law of total probability, the third equation follows the definition of prompts in \eqref{eq:prompt}, and the last equation results from the generation distribution.\hfill$\Box$

\subsection{Proof of Theorem \ref{thm:regret}}\label{ap:perfectplanning}
\emph{Proof of Thereom \ref{thm:regret}.} Recall that the \planner\ takes a mixture policy of $\pi_\mathtt{exp}$ and $\pi_\mathtt{LLM}$ such that
\begin{equation}
\pi_h^t(\cdot\wvert \tau_{h}^t,\omega^t)\sim(1-\epsilon)\cdot\pi_{h,\mathtt{LLM}}^t(\cdot\wvert \tau_{h}^t,\omega^t)+\epsilon\cdot\pi_{h,\mathtt{exp}}(\cdot\wvert\tau_h^t),
\label{eq:mix_policy}
\end{equation}
and  Proposition \ref{thm:BAIL} indicates that LLM's recommended policies take the form:
\begin{align}
&\pi_{h,\mathtt{LLM}}^t\left(\cdot\wvert \tau_{h}^t,\omega^t\right)=\sum_{z\in\mathcal{Z}}\pi^*_{z,h}\left(\cdot\wvert \tau_{h}^t,\omega^t\right)\cdot\PP_{\cD}\left(z\wvert\prompt_h^t\right),\text{~where~}\prompt_h^t=\cH_t\cup\tau_h^t, \mathcal{H}_t=\left\{\omega^i,\tau_{H}^i\right\}_{i\in[t-1]},
\label{eq:perfect_llmpolicy}
\end{align}
for all $(h,t)\in[H]\times[T]$. Following \eqref{eq:mix_policy}, given $z\in\cZ$ and $\{\omega^t\}_{t\in[T]}$, the regret is decomposed as
\begin{align}
\text{Reg}(T)&=\underbrace{\sum_{t=1}^T\sum_{h=1}^H\EE_{\cH_t\sim\bigotimes_{i=1}^{t-1}\mathbb{P}_{z}^{\pi_i}}\mathbb{E}_{(s_h^t,\tau_{h}^t)\sim\mathbb{P}_{z}^{\pi^t}}\left[\left(\pi_{z,h}^*-\pi_{h,\mathtt{exp}}\right)Q_{z,h}^*(s_h^t,\tau_{h}^t,\omega^t)\right]\cdot\epsilon}_{\textbf{(\romannumeral1)}}\nonumber\\
&\hspace{0.4cm}+\underbrace{\sum_{t=1}^T\sum_{h=1}^H\EE_{\cH_t\sim\bigotimes_{i=1}^{t-1}\mathbb{P}_{z}^{\pi_i}}\mathbb{E}_{(s_h^t,\tau_{h}^t)\sim\mathbb{P}_{z}^{\pi^t}}\left[\left(\pi_{z,h}^*-{\pi^t_{h,\mathtt{LLM}}}\right)Q_{z,h}^*(s_h^t,\tau_{h}^t,\omega^t)\right]\cdot(1-\epsilon)}_{\textbf{(\romannumeral2)}}\nonumber\\
&\leq\sum_{t=1}^T\sum_{h=1}^H\EE_{\cH_t\sim\bigotimes_{i=1}^{t-1}\mathbb{P}_{z}^{\pi_i}}\mathbb{E}_{(s_h^t,\tau_{h}^t)\sim\mathbb{P}_{z}^{\pi^t}}\left[\left(\pi_{z,h}^*-{\pi^t_{h,\mathtt{LLM}}}\right)Q_{z,h}^*(s_h^t,\tau_{h}^t,\omega^t)\right]+HT\epsilon ,
\label{eq:regretdecom}
\end{align}
where the second equation results from performance difference lemma (PDL, see Lemma \ref{lem:pdl}), and we write $\pi_hQ_{h}(s_h,\tau_{h},\omega)=\langle \pi_h(\cdot|\tau_h,\omega),Q_{h}(s_h,\tau_{h},\cdot,\omega)\rangle_\cG$, and $\mathbb{P}_{z}^\pi(\tau_h)$ is defined in \eqref{eq:defpomdp}. Based on Lemma \ref{online guarantee}, with probability at least $1-\delta$, the following event $\cE_1$ holds: for all $(h,t)\in[H]\times[T]$, 
\begin{equation}
	\sum_{z'\in\mathcal{Z}}\sum_{i\in[t]}D_{\rm H}^2\big(\mathbb{P}_{z}^{\pi^i}(\breve{\tau}_{h/t}^{i}),\mathbb{P}_{z'}^{\pi^i}(\breve{\tau}_{h/t}^{i})\big)\cdot\PP_{\cD}(z'\wvert\prompt_h^t)\leq 2\log\left(c_\mathcal{Z}|\mathcal{Z}|/\delta\right),
	\label{eq:online}
\end{equation}
where the randomness is incurred by $\prompt_h^t$ and define $\breve{\tau}_{h/t}^{i}={\tau}_H$ for all $i\in[t-1]$ and $\breve{\tau}_{h/t}^{t}={\tau}_{h}$ for notational simplicity. Suppose that event $\cE_1$ in \eqref{eq:online} holds, and denote $\mathcal{X}^t_{\mathtt{exp}}=\{i\in[t]:\pi^i=\pi_{\mathtt{exp}}\}$ as the set of exploration episodes. Note that for all $(h,t,z')\in[H]\times[T]\times\cZ$, it holds that
\begin{align}
\sum_{i\in[t]}D_{\rm H}^2\big(\mathbb{P}_{z}^{\pi^i}(\breve{\tau}_{h/t}^{i}),\mathbb{P}_{z'}^{\pi^i}(\breve{\tau}_{h/t}^{i})\big)\geq\sum_{i\in\mathcal{X}^{t-1}_{\mathtt{exp}}}D_{\rm H}^2\left(\mathbb{P}_{z}^{\pi_\mathtt{exp}}({\tau}_H),\mathbb{P}_{z'}^{\pi_\mathtt{exp}}({\tau}_H)\right)\geq{\eta\cdot|\mathcal{X}^{t-1}_{\mathtt{exp}}|},
\label{eq:cumerrorlb}
\end{align}
 where the last inequality results from $\pi_{\mathtt{exp}}$ is $\eta$-distinguishable (see Definition \ref{def:iden}) and the fact that $D_{\rm H}^2(P,Q)\leq1$ for all $P,Q\in\Delta(\mathcal{X})$. Combine \eqref{eq:online} and \eqref{eq:cumerrorlb}, we can get
\begin{equation}
\sum_{z'\ne z}\PP_{\cD}(z'\wvert\prompt_h^t)\leq\min\left\{2\log\left(c_\mathcal{Z}|\mathcal{Z}|/\delta\right)\eta^{-1}/|\mathcal{X}^{t-1}_{\mathtt{exp}}|,1\right\},
\label{eq:nonoptimal}
\end{equation}
for all $(h,t)\in[H]\times[T]$. Recall that \eqref{eq:perfect_llmpolicy} indicates that for all $(h,t)\in[H]\times[T]$, we have
$$
\left(\pi_{z,h}^*-{\pi^t_{h,\mathtt{LLM}}}\right)(\cdot\wvert \tau_h,\omega)=\sum_{z'\neq z}\left(\pi_{z,h}^*-\pi_{z',h}^*\right)(\cdot\wvert \tau_h,\omega)\cdot\PP_{\cD}(z'\wvert\prompt_h^t).
$$
Based on Proposition \ref{thm:BAIL} and conditioned on $\cE_1$, it holds that
\begin{align}
	\sum_{t=1}^T&\sum_{h=1}^H\EE_{\cH_t\sim\bigotimes_{i=1}^{t-1}\mathbb{P}_{z}^{\pi_i}}\mathbb{E}_{(s_h^t,\tau_{h}^t)\sim\mathbb{P}_{z}^{\pi^t}}\left[\left(\pi_{z,h}^*-{\pi^t_{h,\mathtt{LLM}}}\right)Q_{z,h}^*(s_h^t,\tau_{h}^t,\omega^t)\right]\nonumber\\
	%&\leq H\cdot\sum_{t=1}^T\sum_{h=1}^H\sum_{z'\neq z}\EE_{\cH_t\sim\bigotimes_{i=1}^{t-1}\mathbb{P}_{z}^{\pi_i}}\mathbb{E}_{\tau_{h}^t\sim\mathbb{P}_{z}^{\pi^t}}\Biggl[D_{\rm TV}(\pi_{z,h}^*,\pi_{z',h}^*)\cdot\PP_{\cD}(z'\wvert\prompt_h^t)\Biggl]\nonumber\\
	&\leq H\cdot\sum_{t=1}^T\sum_{h=1}^H\sum_{z'\neq z}\EE_{\cH_t\sim\bigotimes_{i=1}^{t-1}\mathbb{P}_{z}^{\pi_i}}\mathbb{E}_{\tau_{h}^t\sim\mathbb{P}_{z}^{\pi^t}}\Biggl[\PP_{\cD}(z'\wvert\prompt_h^t)\Biggl]\nonumber\\
	&\leq2\log(c_\mathcal{Z}|\mathcal{Z}|/\delta)H\eta^{-1}\cdot\sum_{t=1}^T\sum_{h=1}^H\EE\left[\min\left\{1/|\mathcal{X}^{t-1}_{\mathtt{exp}}|,1\right\}\right],
	\label{eq:llmpdl}
\end{align}
Note that $\mathds{1}(\pi^t=\pi_\mathtt{exp})\overset{\rm iid}{\sim}\text{Bernuolli}(\epsilon)$ for all $t\in[T]$. Besides,the following event $\cE_2$ holds:
\begin{align}
	\sum_{t=1}^{T}\min\left\{1/|\mathcal{X}^{t-1}_{\mathtt{exp}}|,1\right\}\leq\cO(\epsilon^{-1}\log(T\log T/\delta)).
	\label{eq:planningerr}
\end{align}
with probability at least $1-\delta$ based on Lemma \ref{lem:epsilon_concen}. Combine \eqref{eq:regretdecom}, \eqref{eq:llmpdl} and \eqref{eq:planningerr}, we have
\begin{align*}
	{\rm Reg}_z(T)&\leq \sum_{t=1}^T\sum_{h=1}^H\EE_{\cH_t\sim\bigotimes_{i=1}^{t-1}\mathbb{P}_{z}^{\pi_i}}\mathbb{E}_{(s_h^t,\tau_{h}^t)\sim\mathbb{P}_{z}^{\pi^t}}\left[\left(\pi_{z,h}^*-{\pi^t_{h,\mathtt{LLM}}}\right)Q_{z,h}^*(s_h,\tau_{h},\omega^t)\ind\left(\cE_1\cap\cE_2\text{~holds}\right)\right]\\
	&\quad+ \sum_{t=1}^T\sum_{h=1}^H\EE_{\cH_t\sim\bigotimes_{i=1}^{t-1}\mathbb{P}_{z}^{\pi_i}}\mathbb{E}_{(s_h^t,\tau_{h}^t)\sim\mathbb{P}_{z}^{\pi^t}}\left[\left(\pi_{z,h}^*-{\pi^t_{h,\mathtt{LLM}}}\right)Q_{z,h}^*(s_h,\tau_{h},\omega^t)\ind\left(\cE_1\cap\cE_2\text{~fails}\right)\right]+ HT\epsilon\\
	&\leq\mathcal{O}\Big(\log(c_\mathcal{Z}|\mathcal{Z}|/\delta)H^2\log(T\log T/\delta)\cdot(\eta\epsilon)^{-1}+ HT\epsilon+2HT\delta\Big)\\
	&\leq\tilde{\mathcal{O}}\left(H^\frac{3}{2}\sqrt{\log\left(c_\mathcal{Z}|\mathcal{Z}|/\delta\right)T/\eta}\right),
\end{align*}  
where we choose to expolre with probability $\epsilon=(H\log\left(c_\mathcal{Z}|\mathcal{Z}|/\delta\right)/T\eta)^{1/2}$. If we take $\delta=1/\sqrt{T}$ in the arguments above, then we can conclude the proof of Theorem \ref{thm:regret}.\hfill$\Box$

\subsection{Proof of Lemma \ref{online guarantee}}\label{ap:online}
\begin{lemma}\label{online guarantee}
Suppose that Assumptions \ref{as:perfect} and \ref{as:coverage} hold. Given $\delta\in(0,1)$ and ground-truth $z\in\mathcal{Z}$, for all $(h,t)\in[H]\times[T]$, with probability at least $1-\delta$, it holds that
$$
\sum_{z'\in\mathcal{Z}}\sum_{i\in[t]}D_{\rm H}^2\big(\mathbb{P}_{z}^{\pi^i}(\breve{\tau}_{h/t}^{i}),\mathbb{P}_{z'}^{\pi^i}(\breve{\tau}_{h/t}^{i})\big)\cdot\PP_{\cD}(z'\wvert\prompt_h^t)\leq 2\log\left(c_\mathcal{Z}|\mathcal{Z}|/\delta\right),
$$
where denote $\breve{\tau}_{h/t}^i={\tau}_H$ for all $i<t$ and $\breve{\tau}_{h/t}^{t}={\tau}_{h}$, and $\mathbb{P}_{z}^\pi(\tau_h)$ is defined in \eqref{eq:defpomdp}.
\end{lemma}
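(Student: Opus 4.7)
The plan is to combine a Bayesian online martingale argument indexed by each candidate latent $z'\in\cZ$ with posterior-algebra and the prior-coverage Assumption \ref{as:coverage}. Because neither the pretraining behavior policy $\pi^b$ nor the adaptive online policies $\{\pi^i\}$ depend on $z$, their contributions cancel in the Bayes ratio, so under the pretraining law $\PP_\cD(z'\wvert\prompt_h^t)=\cP_\cZ(z')\breve L_{h/t}(z')/Z_{h/t}$, where $\breve L_{h/t}(z'):=\prod_{i<t}\bigl[\rd\mathbb{P}_{z'}^{\pi^i}/\rd\mathbb{P}_z^{\pi^i}\bigr](\tau_H^i)\cdot\bigl[\rd\mathbb{P}_{z'}^{\pi^t}/\rd\mathbb{P}_z^{\pi^t}\bigr](\tau_h^t)$ and $Z_{h/t}:=\sum_{z''}\cP_\cZ(z'')\breve L_{h/t}(z'')$. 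This identification also lets me carry out the subsequent martingale analysis under the true online dynamics $\mathbb{P}_z^{\pi^{1:t}}$.

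Next, for each $z'\in\cZ$ I would define $M_{h/t}(z'):=\sqrt{\breve L_{h/t}(z')}\cdot\exp(\breve A_{h/t}(z'))$, where $\breve A_{h/t}(z')=\sum_{i\leq t}D_{\rm H}^2(\mathbb{P}_z^{\pi^i}(\breve\tau_{h/t}^i),\mathbb{P}_{z'}^{\pi^i}(\breve\tau_{h/t}^i))$ is the cumulative trajectory-level Hellinger appearing in the lemma. Using the Hellinger identity $\mathbb{E}_z[\sqrt{\rd\mathbb{P}_{z'}^{\pi^i}/\rd\mathbb{P}_z^{\pi^i}}\,|\,\cG_{i-1}]=1-D_{\rm H}^2(\mathbb{P}_z^{\pi^i}(\breve\tau_{h/t}^i),\mathbb{P}_{z'}^{\pi^i}(\breve\tau_{h/t}^i))$ (with $\cG_{i-1}$ the episode-level filtration) together with the elementary inequality $(1-x)e^x\leq 1$, iterated tower conditioning---with the last block absorbing the partial trajectory $\tau_h^t$---gives $\mathbb{E}_z[M_{h/t}(z')]\leq 1$. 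Markov's inequality at level $|\cZ|/\delta$ plus a union bound over $z'\in\cZ$ then yields, with probability at least $1-\delta$, the per-hypothesis inequality $\breve A_{h/t}(z')+\tfrac12\log\breve L_{h/t}(z')\leq\log(|\cZ|/\delta)$ for every $z'$. Multiplying by $\PP_\cD(z'\wvert\prompt_h^t)$ and summing over $z'$, I would use $\log\breve L_{h/t}(z')=\log\PP_\cD(z'\wvert\prompt_h^t)-\log\cP_\cZ(z')+\log Z_{h/t}$ to collapse the weighted log-likelihood into $\log Z_{h/t}+D_{\rm KL}(\PP_\cD(\cdot\wvert\prompt_h^t)\wVert\cP_\cZ)$ (the posterior entropy cancels against the cross-entropy with $\cP_\cZ$). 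Discarding the nonnegative KL and using $Z_{h/t}\geq\cP_\cZ(z)\geq 1/(c_\cZ|\cZ|)$---where the latter follows from Assumption \ref{as:coverage} via $1=\sum_{z'}\cP_\cZ(z')\leq c_\cZ|\cZ|\cP_\cZ(z)$---yields $\sum_{z'}\PP_\cD(z'\wvert\prompt_h^t)\breve A_{h/t}(z')\leq\log(|\cZ|/\delta)+\tfrac12\log(c_\cZ|\cZ|)\leq 2\log(c_\cZ|\cZ|/\delta)$, as claimed.

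The main obstacle is establishing the mixed-filtration bound $\mathbb{E}_z[M_{h/t}(z')]\leq 1$ under the adaptive $\{\pi^i\}$ together with the partial-episode likelihood at $i=t$: one must verify that $\pi^t$ is measurable with respect to the $\sigma$-field at the start of episode $t$ so that $\mathbb{P}_z^{\pi^t}(\tau_h^t)$ is well-defined and the Hellinger identity closes block-by-block via the tower property, with $d_t(z')$ predictable and the $(1-x)e^x\leq 1$ contraction applied once per block. The subsequent posterior-weighted algebra and the coverage step are essentially routine; obtaining uniformity over $(h,t)\in[H]\times[T]$ (as used in the event $\cE_1$ in the proof of Theorem \ref{thm:regret}) can be arranged either by an additional union bound absorbed into $\log(1/\delta)$ or by Ville's maximal inequality applied to the step-level version of the supermartingale, with no change in the leading constants.
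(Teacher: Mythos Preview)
Your proposal is correct and follows essentially the same route as the paper's proof: a per-hypothesis martingale bound on $\tfrac12\log\breve L_{h/t}(z')+\breve A_{h/t}(z')$ via the Hellinger identity, a union bound over $z'\in\cZ$, and then posterior-weighting combined with the prior-coverage bound $\cP_\cZ(z)\geq 1/(c_\cZ|\cZ|)$. The only cosmetic difference is that the paper invokes Lemma~\ref{lem:exptrick} and then the Donsker--Varadhan variational identity (Lemma~\ref{lem:dvrepresent}) to lower-bound $\sum_{z'}\PP_\cD(z'\wvert\prompt_h^t)\log\breve L_{h/t}(z')$, whereas you obtain the identical inequality by direct Bayes algebra $\log\breve L_{h/t}(z')=\log\PP_\cD(z'\wvert\prompt_h^t)-\log\cP_\cZ(z')+\log Z_{h/t}$ and $Z_{h/t}\geq\cP_\cZ(z)$; both yield the same constant.
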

\noindent\emph{Proof of Lemma \ref{online guarantee}.} The proof is rather standard (e.g., see \cite{geer2000empirical}). Let $\mathfrak{F}_{t}$ be the filtration induced by $\{\omega^i,\tau_H^i\}_{i<t}\cup\{\ind(\pi^i=\pi_\texttt{exp})\}_{i\in[t]}$. For all $(h,t,z')\in[H]\times[T]\times\mathcal{Z}$, with probability at least $1-\delta$, the information gain concerning $z'$ satisfies that
\begin{align}
	L_{h,t}(z')=\sum_{i=1}^t \log\left(\frac{\mathbb{P}_{z'}(\breve{\tau}_{h/t}^{i})}{\mathbb{P}_{z}(\breve{\tau}_{h/t}^{i})}\right)&\leq2\log\mathbb{E}_{\mathfrak{F}_{1:t}}\left[\exp\left(\frac{1}{2}\sum_{i=1}^t \log\frac{\mathbb{P}_{z'}(\breve{\tau}_{h/t}^{i})}{\mathbb{P}_{z}(\breve{\tau}_{h/t}^{i})}\right)\right]+2\log(|\mathcal{Z}|/\delta),
	\label{eq:info_gain_concen}
\end{align}
where the inequality follows Lemma \ref{lem:exptrick} with $\lambda=1/2$ and a union bound taken over $\mathcal{Z}$. Besides,
\begin{align}
\mathbb{E}_{\mathfrak{F}_{1:t}}\left[\exp\left(\frac{1}{2}\sum_{i=1}^t \log\frac{\mathbb{P}_{z'}(\breve{\tau}_{h/t}^{i})}{\mathbb{P}_{z}(\breve{\tau}_{h/t}^{i})}\right)\right]=\prod_{i=1}^t\left(1-D_{\rm H}^2\big(\mathbb{P}_{z}^{\pi^i}(\breve{\tau}_{h/t}^{i}),\mathbb{P}_{z'}^{\pi^i}(\breve{\tau}_{h/t}^{i})\big)\right).
	\label{eq:onlinehellbound}
\end{align}
Combine \eqref{eq:info_gain_concen}, \eqref{eq:onlinehellbound} and fact that $\log(1-x)\leq -x$ for all $x\leq1$, it holds that
\begin{equation}
	L_{h,t}(z')\leq-2\sum_{i=1}^tD_{\rm H}^2\big(\mathbb{P}_{z}^{\pi^i}(\breve{\tau}_{h/t}^{i}),\mathbb{P}_{z'}^{\pi^i}(\breve{\tau}_{h/t}^{i})\big)+2\log(|\mathcal{Z}|/\delta),
\end{equation}
with probability greater than $1-\delta$. Based on the Donsker-Varadhan representation in Lemma \ref{lem:dvrepresent} and duality principle, we have
$\log\mathbb{E}_{Q}[e^f]=\sup_{P\in\Delta(\cX)}\left\{\mathbb{E}_{P}\left[f\right]-D_{\rm KL}(P\wVert Q)\right\}$, where the supremum is taken at $P(x)\propto\exp(f(x))\cdot Q(x)$. Please refer to Lemma 4.10 in \cite{van2014probability} for detailed proof. Based on the arguments above, for all $(h,t,P)\in[H]\times[T]\times\Delta(\cZ)$, it holds 
\begin{align}
	&\sum_{z'\in\mathcal{Z}}L_{h,t}(z')\cdot P(z')-D_{\rm KL}\big{(}P\wVert\mathcal{P}_\mathcal{Z}\big{)}\leq\sum_{z'\in\mathcal{Z}}L_{h,t}(z')\cdot\PP_{\cD}(z'\wvert\prompt_h^t)-D_{\rm KL}\big{(}\PP_{\cD}(\cdot\wvert\prompt_h^t)\wVert\mathcal{P}_\mathcal{Z}\big{)}.
	\label{eq:posterioroptimizer}
\end{align}
since $\PP_{\cD}(z'\wvert\prompt_h^t)\propto\exp\left(L_{h,t}(z')\right)\cdot\mathcal{P}_\mathcal{Z}(z')$ for all $(h,t)\in[H]\times[T]$. Let $\delta_z(\cdot)$ bs the Dirac distribution over the singleton $z$. Following this, by taking $P=\delta_z$ in \eqref{eq:posterioroptimizer}, we have 
\begin{equation}
	\sum_{z'\in\mathcal{Z}}L_{h,t}(z')\cdot\PP_{\cD}(z'\wvert\prompt_h^t)\geq D_{\rm KL}\big{(}\PP_{\cD}(\cdot\wvert\prompt_h^t)\wVert\mathcal{P}_\mathcal{Z}\big)+\log\cP_\mathcal{Z}(z)\geq\log\cP_\mathcal{Z}(z),
	\label{eq:dv_cor}
\end{equation}
where the first inequality uses $D_{\rm KL}(\delta_z(\cdot)\wVert\mathcal{P}_\mathcal{Z}(\cdot))=-\log\cP_\mathcal{Z}(z)$ based on the definitions. Therefore, for all $(h,t)\in[H]\times[T]$, with probability at least $1-\delta$, it holds that
\begin{align}
	\sum_{z'\in\mathcal{Z}}\sum_{i\in[t]}D_{\rm H}^2\big(\mathbb{P}_{z}^{\pi^i}(\breve{\tau}_{h/t}^{i}),\mathbb{P}_{z'}^{\pi^i}(\breve{\tau}_{h/t}^{i})\big)\cdot\PP_{\cD}(z'\wvert\prompt_h^t)&\leq-\sum_{z'\in\mathcal{Z}}L_{h,t}(z')/2\cdot\PP_{\cD}(z'\wvert\prompt_h^t)+\log\left(|\mathcal{Z}|/\delta\right)\nonumber\\
	&\leq2\log\left(c_\mathcal{Z}|\mathcal{Z}|/\delta\right),
	\label{eq:onlinedecom}
\end{align}
where the first inequality results from \eqref{eq:onlinehellbound}, and the last inequality follows \eqref{eq:dv_cor} and Assumption \ref{as:coverage}, which indicates that $1/\mathcal{P}_\mathcal{Z}(z)\leq c_\mathcal{Z}|\mathcal{Z}|$. Thus, we conclude the proof of Lemma \ref{online guarantee}.\hfill$\Box$

\subsection{Proof of Proposition \ref{prop:hardexample}}\label{ap:hard_exp}
Our construction of the hard-to-distinguish example is a natural extension to the hard instance for the contextual bandit problem in Proposition 1 \citep{zhang2022feel}.

\begin{proof}[Proof of Proposition \ref{prop:hardexample}.] Suppose that the high-level POMDP is fully observable, i.e., $\mathbb{O}(s)=s$, with $H=2$ and $|\Omega|$=1. Consider $\mathcal{S}=\{s_1,s_2,s_3\}$ with rewards $r(s_1)=0.5$, $ r(s_2)=1$, $r(s_3)=0$, $\mathcal{G}=\{g_1,g_2\}$, and $\mathcal{Z}=\{z_1,\dots,z_N\}$. Starting from initial state $s_1$, the transition kernel follows
$$
\left\{
\begin{aligned}
	&\mathbb{P}_{z_i}(s_1\wvert s_1,g_1)=1,\quad\mathbb{P}_{z_i}(s_2\wvert s_1,g_1)=0,\quad\mathbb{P}_{z_i}(s_3\wvert s_1,g_1)=0,\hspace{1.2cm}\forall i\in [N],\\
	&\mathbb{P}_{z_1}(s_1\wvert s_1,g_2)=0,\quad\mathbb{P}_{z_1}(s_1\wvert s_1,g_2)=1,\quad\mathbb{P}_{z_1}(s_3\wvert s_1,g_2)=0,\hspace{1.15cm}\text{if~}i=1,\\
	&\mathbb{P}_{z_i}(s_1\wvert s_1,g_2)=0,\quad\mathbb{P}_{z_i}(s_2\wvert s_1,g_2)=p_i,\quad\mathbb{P}_{z_i}(s_3\wvert s_1,g_2)=1-p_i,\quad\text{if~}i\neq1,
\end{aligned}\right.
$$
where $p_i=0.5(1-\frac{i}{N})$ for all $i\in[N]$. 
For latent environment $z_1$, the optimal policy is $\pi^*_{z_1,1}(s_1)=g_2$ and $\pi^*_{z_i,1}(s_1)=g_1$ if $i\neq1$. Suppose that prior distribution $\mathcal{P}_\mathcal{Z}$ is uniform. At $t=1$, without any information, the posterior $\mathbb{P}(\cdot\wvert\prompt_1)$ degenerates to prior $\mathcal{P}_\mathcal{Z}(\cdot)={\rm Unif}_\mathcal{Z}(\cdot)$. Hence, the LLM's policy at first step follows that
$
\pi_{\mathtt{LLM}}(\cdot\wvert s_1)=\left(1-\frac{1}{N}\right)\cdot\delta_{g_1}(\cdot)+\frac{1}{N}\cdot\delta_{g_2}(\cdot).
$
Since $\mathbb{P}_{z_i}(s_1\wvert s_1,g_1)=1$ and $\mathbb{P}_{z_i}(s_2\wvert s_1,g_1)=\mathbb{P}_{z_i}(s_3\wvert s_1,g_1)=0$ for all $i\in[N]$, taking subgoal $g_1$ provides no information to differentiate $z_i$ from others, and the posterior remains uniform. 
Such situation, i.e., $\mathbb{P}(\cdot\wvert\prompt_t)={\rm Unif}_\mathcal{Z}(\cdot)$, ends only if the LLM suggests taking $g_2$ at some epsiode $t$. Consider the hard trajectory $\tau_{\rm hard}=\{s_1,g_1,s_1\}_{t\in[T]}$, where LLM consistently adheres to the initial $\pi_{\mathtt{LLM}}$ and keeps recommending subgoal $g_1$. Thus, we have $\mathbb{P}_{z_1}(\tau_{\rm hard})=(1-1/N)^T$, indicating ${\rm Reg}_{z_1}(T)\geq 0.5T\cdot(1-1/N)^T$. 
\end{proof}

\section{Proof for Section~\ref{sec:pretrain}: Practical Setting}
\subsection{Proof of Theorem \ref{thm:translatorpretrain}}\label{ap:contrastive}
\emph{Proof of Theorem \ref{thm:translatorpretrain}.} Recall that the binary discriminator for label $y\in\{0,1\}$ is defined as
$$
\mathbb{D}_\gamma(y\wvert o,s):=\left(\frac{f_\gamma(o, s)}{1+f_\gamma(o, s)}\right)^y\left(\frac{1}{1+f_\gamma(o, s)}\right)^{1-y},
$$
and the contrastive learning algorithm in \eqref{eq:contrastive_loss} follows $\hat{\gamma}={\rm argmax}_{\gamma\in\Gamma}\ \mathbb{\hat E}_{\mathcal{D}_\mathtt{Rep}}\big[\log \mathbb{D}_\gamma(y\wvert o,s)\big]$, and thus $f_{\hat{\gamma}}$ is the maximum likelihood estimator (MLE) concerning the dataset $\mathcal{D}_\mathtt{Rep}$. Based on Lemma \ref{lem:mle}, the MLE-type algorithm ensures that, with probability at least $1-\delta$, it holds that
\begin{equation}
	\bar{\mathbb{E}}_{(o,s)\sim\mathcal{D}_\mathtt{Rep}}\left[D_{\rm TV}^2\left(\mathbb{D}_{\hat{\gamma}}(\cdot\wvert o,s),\mathbb{D}(\cdot\wvert o,s)\right)\right]\leq {2\log(N_{\rm p}T_{\rm p}H|\mathcal{F}_\gamma|/\delta)}/{N_{\rm p}T_{\rm p}H},
	\label{eq:MLEguarantee}
\end{equation}
where $\mathbb{D}(\cdot\wvert o,s)=\mathbb{D}_{\gamma^*}(\cdot\wvert o,s)$  with $f_{\gamma^*}=f^*\in\mathcal{F}_\gamma$ denotes the ground-truth discriminator based on the realizability in Assumption \ref{as:realizability}. Based on the definition of total variation, it holds that
\begin{align}
	&D_{\rm TV}^2\left(\mathbb{D}_{\hat{\gamma}}(\cdot\wvert o,s),\mathbb{D}(\cdot\wvert o,s)\right)\nonumber\\
	&\quad=\left(\frac{f_{\hat{\gamma}}(o, s)-f^*(o, s)}{(1+f_{\hat{\gamma}}(o, s))(1+f^*(o, s))}\right)^2\leq\frac{1}{(1+R_\mathcal{F})^2}\left(\frac{f_{\hat{\gamma}}(o, s)-f^*(o, s)}{1+f^*(o, s)}\right)^2\nonumber\\
	&\quad=\frac{1}{(1+R_\mathcal{F})^2}\left(\frac{\mathbb{O}_{\hat{\gamma}}(o\wvert s)-\mathbb{O}(o\wvert s)}{\mathcal{P}^-(o)+\mathbb{O}(o\wvert s)}\right)^2=\frac{1}{(1+R_\mathcal{F})^2}\left(\frac{\mathbb{\bar{O}}_{\hat{\gamma}}(o\wvert s)-\mathbb{\bar{O}}(o\wvert s)}{\mathbb{\bar{O}}(o\wvert s)}\right)^2,
	\label{eq:tvreporter}
\end{align}
where the first inequality results from $\|f\|_\infty\leq R_\mathcal{F}$ for all $f\in\mathcal{F}_\gamma$, the third equation arise from the definition that $\mathbb{O}_\gamma(\cdot|s)=f_\gamma(\cdot,s)\cdot\mathcal{P}^-(\cdot)$, and we write $\mathbb{\bar{O}}(\cdot\wvert s)=\frac{1}{2}\left(\mathbb{O}(\cdot\wvert s)+\mathcal{P}^-(\cdot)\right),\mathbb{\bar{O}}_{{\gamma}}(\cdot\wvert s)=\frac{1}{2}\left(\mathbb{O}_{{\gamma}}(\cdot\wvert s)+\mathcal{P}^-(\cdot)\right)$. Moreover, $\mathbb{\bar O}(\cdot\vert s)$ represents the marginal distribution derived from the joint distribution $\PP_\cC$ of collected dataset $\mathcal{D}_\mathtt{Rep}$ (see data collection process in \S\ref{sec:metric}), as follows:
\begin{align}
	\mathbb{P}_{\cC}(o\wvert s)&=\mathbb{P}_{\cC}(o\wvert s,y=0)\cdot \mathbb{P}_{\cC}(y=0\wvert s)+\mathbb{P}_{\cC}(o\wvert s,y=1)\cdot \mathbb{P}_{\cC}(y=1\wvert s)\nonumber\\
	&=\mathbb{P}_{\cC}(o\wvert s,y=0)\cdot \mathbb{P}_{\cC}(y=0)+\mathbb{P}_{\cC}(o\wvert s,y=1)\cdot \mathbb{P}_{\cC}(y=1):=\mathbb{\bar{O}}(o\wvert s),
	\label{eq:bartranslation}
\end{align}
where the second equation results from the fact that contrastive data are labeled independent of data itself such that $\mathbb{P}_\cC(s\wvert y)=\mathbb{P}_\cC(s)$ for all $y\in\{0,1\}$. Based on \eqref{eq:bartranslation},  we can get
\begin{align}
	\bar{\mathbb{E}}_{(o,s)\sim\mathcal{D}_\mathtt{Rep}}\left[\left(\frac{\mathbb{\bar{O}}_{\hat{\gamma}}(o\wvert s)-\mathbb{\bar{O}}(o\wvert s)}{\mathbb{\bar{O}}(o\wvert s)}\right)^2\right]=\bar{\mathbb{E}}_{s\sim\mathcal{D}_\mathtt{Rep}}\left[\mathbb{E}_{o\sim\mathbb{\bar O}(\cdot\wvert s)}\left[\left(\frac{\mathbb{\bar{O}}_{\hat{\gamma}}(\cdot\wvert s)-\mathbb{\bar{O}}(\cdot\wvert s)}{\mathbb{\bar{O}}(\cdot\wvert s)}\right)^2\right]\right],
	\label{eq:repmargin}
\end{align}
where equations results from the fact that $\PP_{\cC}(o,s)=\mathbb{\bar{O}}(o\wvert s)\cdot\PP_{\cC}(s)$ and definition of $\chi^2$-divergence. Therefore, combine \eqref{eq:tvreporter} and \eqref{eq:repmargin}, it holds that
\begin{align}
\bar{\mathbb{E}}_{(o,s)\sim\mathcal{D}_\mathtt{Rep}}\left[D_{\rm TV}^2\left(\mathbb{D}_{\hat{\gamma}}(\cdot\wvert o,s),\mathbb{D}(\cdot\wvert o,s)\right)\right]\leq\frac{1}{(1+R_\mathcal{F})^2}\cdot\bar{\mathbb{E}}_{s\sim\mathcal{D}_\mathtt{Rep}}\left[\chi^2\left(\mathbb{\bar{O}}_{\hat{\gamma}}(\cdot\wvert s)\wVert\mathbb{\bar{O}}(\cdot\wvert s)\right)\right].
\label{eq:tvchisquare}
\end{align}
Based on the variational representation of $f$-divergenve \citep[\S 7.13,][]{polyanskiy2022information}, we have
\begin{align}
	\chi^2\left(\mathbb{\bar{O}}_{\hat{\gamma}}(\cdot\wvert s)\wVert\mathbb{\bar{O}}(\cdot\wvert s)\right)&=\sup_{g:\mathcal{O}\mapsto\mathbb{R}}\left\{\frac{\left(\mathbb{E}_{\mathbb{\bar{O}}_{\hat\gamma}}[g(o)\wvert s]-\mathbb{E}_{\mathbb{\bar{O}}}[g(o)\wvert s]\right)^2}{{\rm Var}_{\mathbb{\bar{O}}}[g(o)\wvert s]}\right\}\nonumber\\
	&=\sup_{g:\mathcal{O}\mapsto\mathbb{R}}\left\{\frac{\left(\mathbb{E}_{\mathbb{O}_{\hat\gamma}}[g(o)\wvert s]-\mathbb{E}_{\mathbb{O}}[g(o)\wvert s]\right)^2}{4\cdot{\rm Var}_{\mathbb{O}}[g(o)\wvert s]}\cdot\frac{{\rm Var}_{\mathbb{O}}[g(o)\wvert s]}{{\rm Var}_{\mathbb{\bar{O}}}[g(o)\wvert s]}\right\}\nonumber\\
	&\geq\sup_{\substack{g:\mathcal{O}\mapsto\mathbb{R},\\ \mathbb{E}_{\mathbb{O}}[g(o)|s]=0}}\left\{\frac{\left(\mathbb{E}_{\mathbb{O}_{\hat\gamma}}[g(o)|s]-\mathbb{E}_\mathbb{O}[g(o)\wvert s]\right)^2}{4\cdot{\rm Var}_{\mathbb{O}}[g(o)\wvert s]}\cdot\frac{\mathbb{E}_{\mathbb{O}}[g(o)^2\wvert s]}{\mathbb{E}_{\mathbb{\bar{O}}}[g(o)^2\wvert s]}\right\},
	\label{eq:chidivvariation}
\end{align}
where the second equation follows the defintions of $\mathbb{\bar{O}}(\cdot\wvert s)$ and $\mathbb{\bar{O}}_{\hat{\gamma}}(\cdot\wvert s)$, and the inequality results from ${\rm Var}_{\mathbb{\bar{O}}}[g(o)|s]=\mathbb{E}_{\mathbb{\bar{O}}}[g(o)^2|s]$ if $\mathbb{E}_{\mathbb{O}_{\hat\gamma}}[g(o)|s]$=0. Furthermore, note that
\begin{align}
\frac{\mathbb{E}_{\mathbb{O}}[g(o)^2\wvert s]}{\mathbb{E}_{\mathbb{\bar{O}}}[g(o)^2\wvert s]}&=2\left(1+\frac{\mathbb{E}_{\mathcal{P}^{-}}[g(o)^2\wvert s]}{\mathbb{E}_{\mathbb{O}}[g(o)^2\wvert s]}\right)^{-1}\leq2\left(1+\left\|\frac{\mathcal{P}^-(\cdot)}{\mathbb{O}(\cdot\wvert s)}\right\|_\infty\right)^{-1}\leq2(1+B^-_\mathcal{F})^{-1},
\label{eq:inverseratiobound}
\end{align}
as $\mathcal{P}^-(\cdot)/\mathbb{P}(\cdot|s)=f^*\in\mathcal{F}_\gamma$ and $\|1/f\|_\infty\leq B^-_\mathcal{F}$ for all $f\in\mathcal{F}$ under the realizability in Assumption \ref{as:realizability}. Besides, it holds that
\begin{align}
	\sup_{\substack{g:\mathcal{O}\mapsto\mathbb{R},\\ \mathbb{E}_{\mathbb{O}}[g(o)|s]=0}}\left\{\frac{\left(\mathbb{E}_{\mathbb{O}_{\hat\gamma}}[g(o)\wvert s]-\mathbb{E}_\mathbb{O}[g(o)\wvert s]\right)^2}{{\rm Var}_{\mathbb{O}}[g(o)\wvert s]}\right\}&=\sup_{g:\mathcal{O}\mapsto\mathbb{R}}\left\{\frac{\left(\mathbb{E}_{\mathbb{O}_{\hat\gamma}}[g(o)\wvert s]-\mathbb{E}_\mathbb{O}[g(o)\wvert s]\right)^2}{{\rm Var}_{\mathbb{O}}[g(o)\wvert s]}\right\}\nonumber\\
	&=\chi^2\left(\mathbb{O}_{\hat{\gamma}}(\cdot\wvert s)\wVert\mathbb{O}(\cdot\wvert s)\right),
	\label{eq:chisquarediv}
\end{align} 
Based on \eqref{eq:MLEguarantee}, \eqref{eq:tvchisquare}, \eqref{eq:chidivvariation}, \eqref{eq:inverseratiobound} and \eqref{eq:chisquarediv}, then we have
 \begin{equation}
 	\bar{\mathbb{E}}_{\mathcal{D}_\mathtt{Rep}}\left[\chi^2\left(\mathbb{O}_{\hat{\gamma}}(\cdot\wvert s)\wVert\mathbb{O}(\cdot\wvert s)\right)\right]\leq\mathcal{O}\left(\frac{(1+B^-_\mathcal{F})(1+B_\mathcal{F})^2}{N_{\rm p}T_{\rm p}H}\cdot\log(N_{\rm p}T_{\rm p}H|\mathcal{F}|/\delta)\right).
 	\label{eq:chisquareerror}
 \end{equation}
Combine \eqref{eq:chisquareerror} and the divergence inequalities \citep[\S 7.6,][]{polyanskiy2022information}, we have
\begin{align*}
 &\bar{\mathbb{E}}_{\mathcal{D}_\mathtt{Rep}}\left[D_{\rm TV}\left(\mathbb{O}_{\hat{\gamma}}(\cdot\wvert s)\wVert\mathbb{O}(\cdot\wvert s)\right)\right]\leq\frac{1}{2}\cdot\bar{\mathbb{E}}_{\mathcal{D}_\mathtt{Rep}}\left[\sqrt{\chi^2\left(\mathbb{O}_{\hat{\gamma}}(\cdot\wvert s)\wVert\mathbb{O}(\cdot\wvert s)\right)}\right]\\
 &\hspace{0.5cm}\leq\frac{1}{2}\cdot\sqrt{\bar{\mathbb{E}}_{\mathcal{D}_\mathtt{Rep}}\left[\chi^2\left(\mathbb{O}_{\hat{\gamma}}(\cdot\wvert s)\wVert\mathbb{O}(\cdot\wvert s)\right)\right]}\leq\mathcal{O}\left(\frac{B_\mathcal{F}(B^-_\mathcal{F})^{1/2}}{(N_{\rm p}T_{\rm p}H)^{1/2}}\sqrt{\log(N_{\rm p}T_{\rm p}H|\mathcal{F}_\gamma|/\delta)}\right),
\end{align*}
where the second inequality follows $\mathbb{E}[X]\leq\sqrt{\mathbb{E}[X^2]}$ and we finish the proof of Theorem \ref{thm:translatorpretrain}.\hfill$\Box$

\subsection{Proof of Theorem \ref{thm:practical regret}}\label{ap:practicalregret}
\paragraph{Notations.} Denote $(\mathcal{J},\mathcal{\hat{J}})$, $({\pi}_z^*,\hat{\pi}_z^*)$, and $(\mathbb{P}_{z,h},\mathbb{\hat{P}}_{z,h})$ as the value functions, optimal policies, and probability distributions under the environment concerning the ground-truth $\mathbb{O}$ and the pretrained $\mathbb{O}_{\hat{\gamma}}$. Furthermore, $(\pi^t,\hat{\pi}^t)$ are the \planner's policy empowered by perfect $\mathtt{LLM}$ or pretrained $\mathtt{LLM}_{\hat{\theta}}$.  \\[10pt]
\emph{Proof of Theorem \ref{thm:practical regret}.} Conditioned on the event $\cE_1$ that both Theorem \ref{thm:llmpretrain} and \ref{thm:translatorpretrain} hold, the regret under the practical setting can be decomposed as
\begin{align}
	{\rm Reg}_z(T)&\leq\underbrace{\sum_{t=1}^T\mathcal{\hat{J}}_z(\hat{\pi}_z^*,\omega^t)-\mathcal{{J}}_z(\hat{\pi}_z^*,\omega^t)}_{\textbf{(\romannumeral1})}+\underbrace{\sum_{t=1}^T\mathcal{{J}}_z(\hat{\pi}_z^*,\omega^t)-\mathcal{{J}}_z({\pi}_z^*,\omega^t)}_{\textbf{(\romannumeral2})}\nonumber\\
	&\quad+\underbrace{\sum_{t=1}^T\mathcal{{J}}_z({\pi}_z^*,\omega^t)-\mathcal{\hat{J}}_z({\pi}_z^*,\omega^t)}_{\textbf{(\romannumeral3})}+\underbrace{\sum_{t=1}^T\EE_{\cH_t}\left[\mathcal{\hat{J}}_z({\pi}_z^*,\omega^t)-\mathcal{\hat{J}}_z(\hat{\pi}^t,\omega^t)\right]}_{\textbf{(\romannumeral4})},
	\label{eq:regretdecomposition}
\end{align} 
and $\textbf{(\romannumeral2})\leq0$ results from the optimality such that $\mathcal{{J}}_z(\hat{\pi}_z^*,\omega^t)\leq\mathcal{{J}}_z({\pi}_z^*,\omega^t)$ for all $t\in[T]$.
\vspace{10pt}

\noindent\textbf{Step 1. Bound (\romannumeral1) and (\romannumeral3) with Translator's Pretraining Error.}\\[5pt]
For any policy sequence $\{\pi_t\}_{t\leq T}\subseteq\Pi$ and length $T\in\mathbb{N}$, based on PDL in Lemma \ref{lem:pdl}, we have
\begin{align}
	&\sum_{t=1}^T\mathcal{\hat{J}}_z(\pi_t,\omega^t)-\mathcal{{J}}_z(\pi_t,\omega^t)\nonumber\\
	&\quad=\sum_{t=1}^T\sum_{h=1}^H\mathbb{E}_{(s_h^t,\tau_{h}^t,g_h^t)\sim\mathbb{P}_{z}^{\pi_t}}\left[(\mathbb{P}_{z,h}\hat{V}_{h}^{\pi_t}-\mathbb{\hat{P}}_{z,h}\hat{V}_{h}^{\pi_t})(s_h^t,\tau_{h}^t,g_h^t,\omega^t)\right]\nonumber\\
	&\quad\leq H\sum_{t=1}^T\sum_{h=1}^H\mathbb{E}_{(s_h^t,\tau_{h}^t,g_h^t)\sim\mathbb{P}_{z}^{\pi_t}}\left[D_{\rm TV}\left(\mathbb{P}_{z,h}(\cdot,\cdot\wvert s_h^t,\tau_{h}^t,g_h^t),\mathbb{\hat{P}}_{z,h}(\cdot,\cdot\wvert s_h^t,g_h^t,\tau_{h}^t)\right)\right]\nonumber\\
	%&\quad\leq H\sum_{t=1}^T\sum_{h=1}^H\mathbb{E}_{(s_h^t,g_h^t)\sim\mathbb{P}_{z}^{\pi_t}}\left[D_{\rm TV}\left(\mathbb{P}_{z,h}(\cdot\wvert s_h^t,g_h^t)\cdot\OO(\cdot|\cdot),\mathbb{P}_{z,h}(\cdot\wvert s_h^t,g_h^t)\cdot\mathbb{O}_{\hat{\gamma}}(\cdot|\cdot)\right)\right]\nonumber\\
	&\quad\leq H\sum_{t=1}^T\sum_{h=1}^H\mathbb{E}_{(s_h^t,g_h^t)\sim\mathbb{P}_{z}^{\pi_t}}\EE_{s_{h+1}^t\sim\mathbb{P}_{z,h}(\cdot\wvert s_h^t,g_h^t)}\left[D_{\rm TV}\left(\OO(\cdot|s_{h+1}^t),\mathbb{O}_{\hat{\gamma}}(\cdot|s_{h+1}^t)\right)\right],
	\label{eq:practical13}
\end{align}
where the last inequality results from the fact that for any $f$-divergence, it holds that
$$
D_f(\PP_{Y|X}\otimes\PP_X,\QQ_{Y|X}\otimes\PP_X)=\EE_{X\sim \PP_X}[D_f(\PP_{Y|X},\QQ_{Y|X})].
$$ 
Based on \eqref{eq:practical13}, by taking policies $\pi=\hat{\pi}_z^*$ and $\pi={\pi}_z^*$ respectively, we have
\begin{align}
	\textbf{\small(\romannumeral1})+\textbf{\small(\romannumeral3})&=\sum_{t=1}^T\mathcal{\hat{J}}_z(\hat{\pi}_z^*,\omega^t)-\mathcal{{J}}_z(\hat{\pi}_z^*,\omega^t)+\sum_{t=1}^T\mathcal{{J}}_z({\pi}_z^*,\omega^t)-\mathcal{\hat{J}}_z({\pi}_z^*,\omega^t)\nonumber\\
	&\leq2H^2T\cdot\max_{s\in\mathcal{S}}\left\{ D_{\rm TV}\left(\mathbb{O}(\cdot\wvert s),\mathbb{O}_{\hat{\gamma}}(\cdot\wvert s)\right)\right\}\leq2H^2T\lambda_R^{-1}\cdot\Delta_\mathtt{Rep}(N_{\rm p},T_{\rm p},H,\delta),
	\label{eq:term13}
\end{align}
where the last inequality results from Assumption \ref{as:onlinecoverage} and Theorem \ref{thm:translatorpretrain}.\\[10pt]
\textbf{Step 2. Bound (\romannumeral4) with LLM's and Translator's Pretraining Errors}\\[5pt]
Recall that the \planner\ follows a mixture policy of $\pi_\mathtt{exp}$ and $\hat{\pi}_\mathtt{LLM}$ as
\begin{equation}
\pi_h^t(\cdot\wvert \tau_{h}^t,\omega^t)\sim(1-\epsilon)\cdot\hat{\pi}_{h,\mathtt{LLM}}^t(\cdot\wvert \tau_{h}^t,\omega^t)+\epsilon\cdot\pi_{h,\mathtt{exp}}(\cdot|\tau_h^t).
\end{equation}
Based on PDL in Lemma \ref{lem:pdl}, the performance difference in term \textbf{\small(\romannumeral4}) can be decomposed as
\begin{align}
	\textbf{\small(\romannumeral4})&=\sum_{t=1}^T\sum_{h=1}^H\EE_{\cH_t\sim\bigotimes_{i=1}^{t-1}\mathbb{\hat{P}}_{z}^{\hat\pi_i}}\mathbb{E}_{(s_h^t,\tau_{h}^t)\sim\mathbb{\hat{P}}_{z}^{\hat{\pi}^t}}\left[(\pi^*_{z,h}-\hat{\pi}_h^t)\hat{Q}_h^{\pi_z^*}(s_h^t,\tau_{h}^t,\omega^t)\right]\nonumber \\
	&=\sum_{t=1}^T\sum_{h=1}^H\EE_{\cH_t\sim\bigotimes_{i=1}^{t-1}\mathbb{\hat{P}}_{z}^{\hat\pi_i}}\mathbb{E}_{(s_h^t,\tau_{h}^t)\sim\mathbb{\hat{P}}_{z}^{\hat{\pi}^t}}\left[\left(\pi^*_{z,h}-{\hat{\pi}^t_{h,\mathtt{LLM}}}\right)\hat{Q}_h^{\pi_z^*}(s_h^t,\tau_{h}^t,\omega^t)\right]\cdot(1-\epsilon)\nonumber\\
	&\quad+\sum_{t=1}^T\sum_{h=1}^H\EE_{\cH_t\sim\bigotimes_{i=1}^{t-1}\mathbb{\hat{P}}_{z}^{\hat\pi_i}}\mathbb{E}_{(s_h^t,\tau_{h}^t)\sim\mathbb{\hat{P}}_{z}^{\hat{\pi}^t}}\left[\left(\pi^*_{z,h}-\pi_{h,\mathtt{exp}}\right)\hat{Q}_h^{\pi_z^*}(s_h^t,\tau_{h}^t,\omega^t)\right]\cdot\epsilon\nonumber\\
	&\leq H\sum_{t=1}^T\sum_{h=1}^H\EE_{\cH_t\sim\bigotimes_{i=1}^{t-1}\mathbb{\hat{P}}_{z}^{\hat\pi_i}}\mathbb{E}_{\tau_{h}^t\sim\mathbb{\hat{P}}_{z}^{\hat{\pi}^t}}\left[ D_{\rm TV}\left(\pi_{z,h}^*(\cdot\wvert \tau_{h}^t,\omega^t),\mathtt{LLM}_{\hat{\theta}}(\cdot\wvert\prompt_h^t)\right)\right]+HT\epsilon
	\label{eq:term4bound}
\end{align}
where we write $\pi_hQ_{h}(s_h,\tau_{h},\omega)=\langle \pi_h(\cdot|\tau_h,\omega),Q_{h}(s_h,\tau_{h},\cdot,\omega)\rangle_\cG$ for all $h\in[H]$, and $\hat{Q}_h^{\pi}$ denotes the action value function under the practical setting. Furthermore, we have
\begin{align}
	\sum_{t=1}^T&\sum_{h=1}^H\EE_{\cH_t\sim\bigotimes_{i=1}^{t-1}\mathbb{\hat{P}}_{z}^{\hat\pi_i}}\mathbb{E}_{\tau_{h}^t\sim\mathbb{\hat{P}}_{z}^{\hat{\pi}^t}}\left[D_{\rm TV}\left(\pi_{z,h}^*(\cdot\wvert \tau_{h}^t,\omega^t),\mathtt{LLM}_{\hat{\theta}}(\cdot\wvert\prompt_h^t)\right)\right]\nonumber\\
	&\leq\sum_{t=1}^T\sum_{h=1}^H\EE_{\cH_t\sim\bigotimes_{i=1}^{t-1}\mathbb{\hat{P}}_{z}^{\hat\pi_i}}\mathbb{E}_{\tau_{h}^t\sim\mathbb{\hat{P}}_{z}^{\hat{\pi}^t}}\left[D_{\rm TV}\left(\mathtt{LLM}_{\hat{\theta}}(\cdot\wvert\prompt_h^t),\mathtt{LLM}(\cdot\wvert\prompt_h^t)\right)\right]\nonumber\\
	&\quad+\sum_{t=1}^T\sum_{h=1}^H\EE_{\cH_t\sim\bigotimes_{i=1}^{t-1}\mathbb{\hat{P}}_{z}^{\hat\pi_i}}\mathbb{E}_{\tau_{h}^t\sim\mathbb{\hat{P}}_{z}^{\hat{\pi}^t}}\left[D_{\rm TV}\left(\pi_{z,h}^*(\cdot\wvert \tau_{h}^t,\omega^t),\mathtt{LLM}(\cdot\wvert\prompt_h^t)\right)\right]\nonumber\\
	&\leq\sum_{t=1}^T\sum_{h=1}^H\EE_{\cH_t\sim\bigotimes_{i=1}^{t-1}\mathbb{\hat{P}}_{z}^{\hat\pi_i}}\mathbb{E}_{\tau_{h}^t\sim\mathbb{\hat{P}}_{z}^{\hat{\pi}^t}}\left[D_{\rm TV}\left(\mathtt{LLM}_{\hat{\theta}}(\cdot\wvert\prompt_h^t),\mathtt{LLM}(\cdot\wvert\prompt_h^t)\right)\right]\nonumber\\
	&\quad+\sum_{t=1}^T\sum_{h=1}^H\EE_{\cH_t\sim\bigotimes_{i=1}^{t-1}\mathbb{\hat{P}}_{z}^{\hat\pi_i}}\mathbb{E}_{\tau_{h}^t\sim\mathbb{\hat{P}}_{z}^{\hat{\pi}^t}}\left[\sum_{z'\ne z}\PP_{\cD}(z'\wvert\prompt_h^t)\right],
	\label{eq:term4concenbound}
	\end{align}
where the first inequality arises from the triangle inequality, and the second inequality results from Thoerem \ref{thm:BAIL}. Furthermore, the first term can be bounded by the pretraining error, following 
	\begin{align}
	&\sum_{t=1}^T\sum_{h=1}^H\EE_{\cH_t\sim\bigotimes_{i=1}^{t-1}\mathbb{\hat{P}}_{z}^{\hat\pi_i}}\mathbb{E}_{\tau_{h}^t\sim\mathbb{\hat{P}}_{z}^{\hat{\pi}^t}}\left[D_{\rm TV}\left(\mathtt{LLM}_{\hat{\theta}}(\cdot\wvert\prompt_h^t),\mathtt{LLM}(\cdot\wvert\prompt_h^t)\right)\right]\nonumber\\
	&\quad\leq\lambda_S\cdot\sum_{t=1}^T\sum_{h=1}^H\bar{\mathbb{E}}_{\prompt_h^t\sim\mathcal{D}_\mathtt{LLM}}\left[D_{\rm TV}\left(\mathtt{LLM}_{\hat{\theta}}(\cdot\wvert\prompt_h^t),\mathtt{LLM}(\cdot\wvert\prompt_h^t)\right)\right],\nonumber\\
	&\quad=\lambda_SHT\cdot\Delta_\mathtt{LLM}(N_{\rm p},T_{\rm p},H,\delta),
	\label{eq:term4llmbound}
\end{align}
where the last inequality follows  Theorem \ref{thm:llmpretrain} and Assumption \ref{as:onlinecoverage}. Under practical setting, $\prompt_h^t$  is generated from practical transition $\mathbb{\hat{P}}_{z}$, mismatching $\PP_{\cD}(z\wvert\prompt_h^t)$ in pretraining. Let $\mathcal{X}^t_{\mathtt{exp}}=\{i\in[t]:\hat{\pi}^i=\pi_{\mathtt{exp}}\}$ and write $\breve{\tau}_{h/t}^{i}={\tau}_H^i$ for all $i<t$ and $\breve{\tau}_{h/t}^{t}={\tau}_{h}^t$. Define the information gains as
\begin{equation}
	L_{h,t}^\mathtt{exp}(z')=\sum_{i\in\mathcal{X}^t_{\mathtt{exp}}}\log\left(\frac{\mathbb{P}_{z'}(\breve{\tau}_{h/t}^{i})}{\mathbb{P}_{z}(\breve{\tau}_{h/t}^{i})}\right),\quad L_{h,t}^\mathtt{LLM}(z')=\sum_{i\in[t]\backslash\mathcal{X}^t_{\mathtt{exp}}}\log\left(\frac{\mathbb{P}_{z'}(\breve{\tau}_{h/t}^{i})}{\mathbb{P}_{z}(\breve{\tau}_{h/t}^{i})}\right),
	\label{eq:logprob}
\end{equation}
where $\mathbb{P}_{z}(\tau_h)$ is defined in \eqref{eq:defpomdp}. Based on the law of total probability, we have
\begin{equation}
	\PP_{\cD}(z'\wvert\prompt_h^t)=\frac{\mathbb{P}_{z'}(\prompt_h^t)\cdot\mathcal{P}_\mathcal{Z}(z')}{\sum_{\tilde{z}\in\cZ}\mathbb{P}_{\tilde{z}}(\prompt_h^t)\cdot\mathcal{P}_\mathcal{Z}(\tilde{z})}\leq\frac{\mathbb{P}_{z'}(\prompt_h^t)}{\mathbb{P}_z(\prompt_h^t)}\cdot\frac{\mathcal{P}_\mathcal{Z}(z')}{\mathcal{P}_\mathcal{Z}(z)}.
	\label{eq:bayes_uppperbound}
\end{equation}
Let $\cE_2$ be the event that Lemma \ref{lem:practicalconcentration} holds. Based on \eqref{eq:bayes_uppperbound}, \eqref{eq:logprob} and conditioned on event $\cE_2$, it holds that
\begin{align}
 	&\sum_{z'\ne z}\PP_{\cD}(z'\wvert\prompt_h^t)\leq\min\left\{\sum_{z'\ne z}\frac{\mathbb{P}_{z'}(\prompt_h^t)}{\mathbb{P}_z(\prompt_h^t)}\cdot\frac{\mathcal{P}_\mathcal{Z}(z')}{\mathcal{P}_\mathcal{Z}(z)},1\right\}\nonumber\\
 	&\leq\min\left\{c_\mathcal{Z}\sum_{z'\ne z}\exp\left(L_{h,t}^\mathtt{exp}(z')+L_{h,t}^\mathtt{LLM}(z')\right),1\right\}\nonumber\\
 	&\leq\min\left\{c_\mathcal{Z}\sum_{z'\ne z}\exp\Big(t\cdot H\lambda^{-1}_{R}\Delta_\mathtt{Rep}(N_{\rm p},T_{\rm p},H,\delta)^2-2\eta|\mathcal{X}^t_{\mathtt{exp}}|+8\log(|\mathcal{Z}|/\delta)+2\eta\Big),1\right\}\nonumber\\
 	&\leq\min\left\{c_\mathcal{Z}\sum_{z'\ne z}\exp\Big(-\left(\eta\epsilon-H\lambda^{-1}_{R}\Delta_\mathtt{Rep}(N_{\rm p},T_{\rm p},H,\delta)^2\right)t+8\log(|\mathcal{Z}|/\delta)+2\eta\Big),1\right\}\nonumber\\
 	&\leq\min\left\{c_\mathcal{Z}\cdot\exp\Big(-\left(\eta\epsilon-H\lambda^{-1}_{R}\Delta_\mathtt{Rep}(N_{\rm p},T_{\rm p},H,\delta)^2\right)t+9\log(|\mathcal{Z}|/\delta)+2\eta\Big),1\right\}
 	\label{eq:term4concentration}
\end{align}
for all $(h,t)\in[H]\times[T]$, where the second inequality follows Assumption \ref{as:coverage}. Here, we suppose that $|\mathcal{X}^t_{\mathtt{exp}}|/t=\epsilon$ for simplicity, which is attainable if we explore at a fixed fraction during episodes. Assume that $\eta\epsilon\geq H\lambda^{-1}_{R}\Delta_\mathtt{Rep}(N_{\rm p},T_{\rm p},H,\delta)^2$ holds temporarily. Following \eqref{eq:term4concentration} and condition on event $\cE_2$, there exists a large constant $c_0>0$ such that
\begin{align}
\sum_{t=1}^T&\sum_{h=1}^H\sum_{z'\ne z}\PP_{\cD}(z'\wvert\prompt_h^t)\leq c_0\cdot H\log(c_\mathcal{Z}|\mathcal{Z}|/\delta)\cdot\left(\eta\epsilon-H\lambda^{-1}_{R}\Delta_\mathtt{Rep}(N_{\rm p},T_{\rm p},H,\delta)^2\right)^{-1},
	\label{eq:term4concentrationsum}
\end{align}
where we use the fact that there exists constant $c_0>0$ such that $\sum_{t=1}^T \min\{c_3\exp(-c_1t+c_2),1\}\leq c_0\cdot c_1^{-1}(c_2+\log c_3)$ for $c_1\leq1$. Furthermore, based on \eqref{eq:term4concentrationsum}, we can show that
\begin{align}
	\sum_{t=1}^T&\sum_{h=1}^H\EE_{\cH_t\sim\bigotimes_{i=1}^{t-1}\mathbb{\hat{P}}_{z}^{\hat\pi_i}}\mathbb{E}_{\tau_{h}^t\sim\mathbb{\hat{P}}_{z}^{\hat{\pi}^t}}\left[\sum_{z'\ne z}\PP_{\cD}(z'\wvert\prompt_h^t)\right]\nonumber\\
	&\leq\sum_{t=1}^T\sum_{h=1}^H\sum_{z'\ne z}\EE_{\cH_t\sim\bigotimes_{i=1}^{t-1}\mathbb{\hat{P}}_{z}^{\hat\pi_i}}\mathbb{E}_{\tau_{h}^t\sim\mathbb{\hat{P}}_{z}^{\hat{\pi}^t}}\left[\PP_{\cD}(z'\wvert\prompt_h^t)\ind\left(\cE_2\text{~holds}\right)\right]+2HT\delta\nonumber\\
	&\leq c_0\cdot H\log(c_\mathcal{Z}|\mathcal{Z}|/\delta)\cdot\left(\eta\epsilon-H\lambda^{-1}_{R}\Delta_\mathtt{Rep}(N_{\rm p},T_{\rm p},H,\delta)^2\right)^{-1}+2HT\delta.
	\label{eq:term4conditionconcen}
\end{align} 
Combine \eqref{eq:term4bound}, \eqref{eq:term4concentration}, \eqref{eq:term4llmbound} and \eqref{eq:term4conditionconcen}, it holds that
\begin{align}
	\textbf{\small(\romannumeral4})&\leq \underbrace{c_0\cdot H^2\log(c_\mathcal{Z}|\mathcal{Z}|/\delta)\cdot\left(\eta\epsilon-H\lambda^{-1}_{R}\cdot\Delta_\mathtt{Rep}(N_{\rm p},T_{\rm p},H,\delta)^2\right)^{-1}}_{\textbf{(\romannumeral5})}\nonumber\\
	&\qquad+\underbrace{HT\eta^{-1}\left(\eta\epsilon-H\lambda^{-1}_{R}\cdot\Delta_\mathtt{Rep}(N_{\rm p},T_{\rm p},H,\delta)^2\right)}_{\textbf{(\romannumeral6})}+\lambda_SH^2T\cdot\Delta_\mathtt{LLM}(N_{\rm p},T_{\rm p},H,\delta)\nonumber\\
	&\qquad+H^2T(\eta\lambda_R)^{-1}\cdot\Delta_\mathtt{Rep}(N_{\rm p},T_{\rm p},H,\delta)^2+2HT\delta,
	\label{eq:term4}
\end{align} 
If we explore with probability $\epsilon=H(\eta\lambda_R)^{-1}\cdot\Delta_\mathtt{Rep}(N_{\rm p},T_{\rm p},H,\delta)^2+(H\log\left(c_\mathcal{Z}|\mathcal{Z}|/\delta\right)/T\eta)^{1/2}$, which satisfies the condition that $\eta\epsilon\geq H\lambda^{-1}_{R}\Delta_\mathtt{Rep}(N_{\rm p},T_{\rm p},H,\delta)^2$ assumed in \eqref{eq:term4concentration}, then we have
\begin{align}
	\textbf{\small(\romannumeral5})+\textbf{\small(\romannumeral6})\leq\cO\left(H^\frac{3}{2}\sqrt{\log(c_\mathcal{Z}|\mathcal{Z}|/\delta)\cdot T/\eta}\right).
	\label{eq:term56}
\end{align}
\noindent\textbf{Step 3. Conclude the Proof based on Step 1 and Step 2.}\\[5pt]
Combine \eqref{eq:regretdecomposition}, \eqref{eq:term13}, \eqref{eq:term4} and \eqref{eq:term56}, the regret under the practical setting follows
\begin{align}
	{\rm Reg}_z(T)&\leq\textbf{\small(\romannumeral1})+\textbf{\small(\romannumeral3})+\textbf{\small(\romannumeral4})+HT\cdot\PP(\cE_1\text{~fails})\nonumber\\
	&=\mathcal{O}\underbrace{\Big(H^\frac{3}{2}\sqrt{\log(c_\mathcal{Z}|\mathcal{Z}|/\delta)\cdot T/\eta}}_{\displaystyle\text{Planning error}}+\underbrace{H^2T\cdot\Delta_{\rm p}(N_{\rm p},T_{\rm p},H,\delta,\xi)\Big)}_{\displaystyle\text{Pretraining error}}+4HT\delta,
	\label{eq:practicalregret}
\end{align}
where the cumulative pretraining error of the imperfectly pretrained PAR system follows 
\begin{align*}
	\Delta_{\rm p}&(N_{\rm p},T_{\rm p},H,\delta,\xi)=(\eta\lambda_R)^{-1}\cdot\Delta_\mathtt{Rep}(N_{\rm p},T_{\rm p},H,\delta)^2\\
	&+2\lambda_R^{-1}\cdot\Delta_\mathtt{Rep}(N_{\rm p},T_{\rm p},H,\delta)+\lambda_S\cdot\Delta_\mathtt{LLM}(N_{\rm p},T_{\rm p},H,\delta).
\end{align*} 
Here, $\xi=(\eta,\lambda_S,\lambda_R)$ denotes the set of distinguishability and coverage coefficients in Definition \ref{def:iden} and Assumption \ref{as:onlinecoverage}, and $\Delta_\mathtt{LLM}(N_{\rm p},T_{\rm p},H,\delta)$ and $\Delta_\mathtt{Rep}(N_{\rm p},T_{\rm p},H,\delta)$ are  pretraining errors defined in Theorem \ref{thm:llmpretrain} and Theorem \ref{thm:translatorpretrain}. By taking $\delta=1/\sqrt{T}$, we complete the proof of Theorem \ref{thm:practical regret}. \hfill$\Box$

\subsection{Proof of Lemma \ref{lem:practicalconcentration}}
In this subsection, we provide a detailed examination of posterior concentration when there exists a mismatch between the ground-truth environment and the pretrained environment. 
\begin{lemma}\label{lem:practicalconcentration}
Suppose that Assumption \ref{as:coverage} and Theorem \ref{thm:translatorpretrain} hold. For all $(z',h,t)\in\mathcal{Z}\times[H]\times[T]$, with probability at least $1-2\delta$, it holds that
\vspace{2pt}
\begin{flalign*}
	&\qquad\text{(\romannumeral1).}\ L_{h,t}^\mathtt{LLM}(z')\leq\left(t-|\mathcal{X}^t_{\mathtt{exp}}|\right)H\lambda^{-1}_{R}\cdot\Delta_\mathtt{Rep}(N_{\rm p},T_{\rm p},H,\delta)^2+4\log(|\mathcal{Z}|/\delta),&\\[2pt]
	&\qquad\text{(\romannumeral2).}\ L_{h,t}^\mathtt{exp}(z')\leq|\mathcal{X}^t_{\mathtt{exp}}|H\lambda^{-1}_{R}\cdot\Delta_\mathtt{Rep}(N_{\rm p},T_{\rm p},H,\delta)^2+4\log(|\mathcal{Z}|/\delta)-2\eta\cdot|\mathcal{X}^t_{\mathtt{exp}}|+2\eta,
\end{flalign*}
\vspace{2pt}
where $L_{h,t}^\mathtt{LLM}(z')$ and $L_{h,t}^\mathtt{exp}(z')$ are the information gain defined in \eqref{eq:logprob}.
\end{lemma}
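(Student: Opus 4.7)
The plan is to extend the exponential-tilting argument used in Lemma \ref{online guarantee} to the practical setting, where trajectories are drawn from $\hat{\mathbb{P}}_z$ (the dynamics induced by the pretrained translator $\mathbb{O}_{\hat{\gamma}}$) rather than the ground truth $\mathbb{P}_z$. For any fixed $z' \in \mathcal{Z}$, applying Lemma \ref{lem:exptrick} with $\lambda = 1/2$ under $\hat{\mathbb{P}}_z$, then taking a union bound over $\mathcal{Z}$ and intersecting with the high-probability event of Theorem \ref{thm:translatorpretrain}, I obtain with probability at least $1 - 2\delta$
\begin{align*}
L_{h,t}^{\star}(z') \leq 2\log \mathbb{E}_{\hat{\mathbb{P}}_z}\!\left[\exp\!\left(\tfrac{1}{2} L_{h,t}^{\star}(z')\right)\right] + 2\log(|\mathcal{Z}|/\delta), \qquad \star \in \{\mathtt{exp}, \mathtt{LLM}\},
\end{align*}
so the task reduces to controlling each tilted expectation by a ground-truth Hellinger sum plus a distributional-shift correction absorbed into $\Delta_\mathtt{Rep}^2$.

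Let $\mathcal{I}_t \in \{[t]\setminus \mathcal{X}^t_{\mathtt{exp}},\, \mathcal{X}^t_{\mathtt{exp}}\}$ index the episodes in either case, and write $M_t = \prod_{i \in \mathcal{I}_t} \sqrt{\mathbb{P}_{z'}(\breve{\tau}_{h/t}^i)/\mathbb{P}_z(\breve{\tau}_{h/t}^i)}$. Iterating conditional expectations along the episode-indexed filtration $\{\mathcal{F}_i\}$ factors $\mathbb{E}_{\hat{\mathbb{P}}_z}[M_t]$ into per-episode terms, and for each such term I add and subtract its ground-truth counterpart:
\begin{align*}
\mathbb{E}_{\hat{\mathbb{P}}_z}\!\left[\sqrt{\mathbb{P}_{z'}/\mathbb{P}_z}\,\Big|\, \mathcal{F}_{i-1}\right] = \underbrace{1 - D_{\rm H}^2\bigl(\mathbb{P}_z^{\pi^i}(\breve{\tau}^i),\mathbb{P}_{z'}^{\pi^i}(\breve{\tau}^i)\bigr)}_{\text{Hellinger identity under ground truth}} + \underbrace{\bigl(\mathbb{E}_{\hat{\mathbb{P}}_z} - \mathbb{E}_{\mathbb{P}_z}\bigr)\!\left[\sqrt{\mathbb{P}_{z'}/\mathbb{P}_z}\,\Big|\, \mathcal{F}_{i-1}\right]}_{=:\,\Xi_i}.
\end{align*}
Cauchy--Schwarz together with $\mathbb{E}_{\mathbb{P}_z}[\mathbb{P}_{z'}/\mathbb{P}_z \mid \mathcal{F}_{i-1}] = 1$ gives $|\Xi_i| \leq \sqrt{\chi^2(\hat{\mathbb{P}}_z^{\pi^i}\|\mathbb{P}_z^{\pi^i})}$, and a step-wise chain-rule argument reduces the trajectory-level $\chi^2$ to a sum of $H$ single-step $\chi^2$'s between $\mathbb{O}(\cdot\wvert s)$ and $\mathbb{O}_{\hat{\gamma}}(\cdot\wvert s)$. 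The $\chi^2$ side of Theorem \ref{thm:translatorpretrain} bounds the pretraining-averaged error by $\Delta_\mathtt{Rep}^2$, while Assumption \ref{as:onlinecoverage}(ii) converts this average to a pointwise bound at cost $\lambda_R^{-1}$, yielding $\chi^2(\hat{\mathbb{P}}_z^{\pi^i}\|\mathbb{P}_z^{\pi^i}) \leq H\lambda_R^{-1}\Delta_\mathtt{Rep}^2$. Multiplying per-episode factors and using $\log(1+x) \leq x$ then gives
\begin{align*}
\log \mathbb{E}_{\hat{\mathbb{P}}_z}[M_t] \leq -\sum_{i \in \mathcal{I}_t} D_{\rm H}^2\bigl(\mathbb{P}_z^{\pi^i}(\breve{\tau}^i), \mathbb{P}_{z'}^{\pi^i}(\breve{\tau}^i)\bigr) + |\mathcal{I}_t|\cdot H\lambda_R^{-1}\Delta_\mathtt{Rep}^2.
\end{align*}

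For part (i), I take $\mathcal{I}_t = [t] \setminus \mathcal{X}^t_{\mathtt{exp}}$ and drop the non-positive Hellinger sum; plugging back into the exponential-trick inequality and absorbing the outer union bound into the $4\log(|\mathcal{Z}|/\delta)$ budget (factor $4 = 2$ from Lemma \ref{lem:exptrick} $\times\, 2$ from intersecting with the reporter event) yields the claim. For part (ii), I take $\mathcal{I}_t = \mathcal{X}^t_{\mathtt{exp}}$ and additionally invoke $\eta$-distinguishability of $\pi_\mathtt{exp}$ (Definition \ref{def:iden}) to lower bound $D_{\rm H}^2\bigl(\mathbb{P}_z^{\pi_\mathtt{exp}}(\tau_H), \mathbb{P}_{z'}^{\pi_\mathtt{exp}}(\tau_H)\bigr) \geq \eta$ on every full exploration episode, producing the $-2\eta|\mathcal{X}^t_{\mathtt{exp}}|$ contribution; the $+2\eta$ buffer absorbs the corner case $i = t$ where $\breve{\tau}_{h/t}^t = \tau_h^t$ is only partial and the $\eta$-gap need not hold at intermediate steps. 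The main obstacle I expect is propagating the reporter's pretraining guarantee---stated as an expectation over $\bar{\mathbb{P}}_{\mathcal{D}_\mathtt{Rep}}$---to a pointwise $\chi^2$ bound along online trajectories generated by an adaptive planner; this is precisely where Assumption \ref{as:onlinecoverage}(ii) with parameter $\lambda_R$ enters, and the step-wise $\chi^2$ chain rule (in contrast to the trajectory-level KL arguments used elsewhere) must be handled carefully so that the linear-in-$H$ factor appears without introducing higher-order $\Delta_\mathtt{Rep}$ cross terms.
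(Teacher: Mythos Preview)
Your additive decomposition does not deliver the $\Delta_\mathtt{Rep}^2$ scaling the lemma claims. With $\lambda=1/2$ and the ``add-and-subtract'' step, your per-episode factor is $1-D_{\rm H}^2+\Xi_i$ with $|\Xi_i|\leq\sqrt{\chi^2(\hat{\mathbb P}_z^{\pi^i}\Vert\mathbb P_z^{\pi^i})}$; after taking logs and summing, the distribution-shift contribution is $|\mathcal I_t|\cdot\sqrt{\chi^2}$, i.e.\ of order $|\mathcal I_t|\sqrt{H\lambda_R^{-1}}\,\Delta_\mathtt{Rep}$, not $|\mathcal I_t|\,H\lambda_R^{-1}\Delta_\mathtt{Rep}^2$. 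The square root is unavoidable here because Cauchy--Schwarz against the test function $\sqrt{\mathbb P_{z'}/\mathbb P_z}$ (whose $L^2(\mathbb P_z)$ norm is exactly $1$) can only extract the $L^2$ norm of $\hat{\mathbb P}_z/\mathbb P_z-1$, which is $\sqrt{\chi^2}$. So the final displayed bound you wrote, with $H\lambda_R^{-1}\Delta_\mathtt{Rep}^2$, does not follow from your argument. This is not merely cosmetic: in the downstream regret decomposition (Theorem~\ref{thm:practical regret}) the exploration rate is tuned against precisely this $\Delta_\mathtt{Rep}^2$ term, and a $\Delta_\mathtt{Rep}$ scaling would propagate a strictly worse pretraining dependence into the final bound.

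The paper avoids this by choosing $\lambda=1/4$ and applying Cauchy--Schwarz \emph{multiplicatively} after a change of measure: writing the expectation under $\hat{\mathbb P}_z$ as $\mathbb E_{\mathbb P_z}\bigl[(\mathbb P_{z'}/\mathbb P_z)^\lambda\cdot\hat{\mathbb P}_z/\mathbb P_z\bigr]$, Cauchy--Schwarz splits this into $\mathbb E_{\mathbb P_z}\bigl[(\mathbb P_{z'}/\mathbb P_z)^{2\lambda}\bigr]^{1/2}\cdot\mathbb E_{\mathbb P_z}\bigl[(\hat{\mathbb P}_z/\mathbb P_z)^2\bigr]^{1/2}$. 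With $\lambda=1/4$ the first factor is $(1-D_{\rm H}^2)^{1/2}$ and the second is $(1+\chi^2)^{1/2}$; taking logs then yields $\tfrac12\log(1+\chi^2)\leq\tfrac12\chi^2\leq\tfrac12 H\lambda_R^{-1}\Delta_\mathtt{Rep}^2$ via $\chi^2$-tensorization across the $H$ emission steps and the reporter's $\chi^2$ guarantee. The constant $4$ in $4\log(|\mathcal Z|/\delta)$ is simply $1/\lambda$ with $\lambda=1/4$, not a product of two $2$'s as you suggested; the second $\delta$ in the probability statement accounts separately for the event of Theorem~\ref{thm:translatorpretrain}. Your treatment of the $\eta$-distinguishability and the $+2\eta$ buffer for the possibly partial final episode is correct and matches the paper.
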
 
\noindent\emph{Proof of Lemma \ref{lem:practicalconcentration}}. Let $\mathfrak{F}_{t}$ be the filtration induced by $\{\omega^i,\tau_H^i\}_{i<t}\cup\{\ind(\pi^i=\pi_\texttt{exp})\}_{i\in[t]}$. Consider a fixed tuple $(z',h,t)\in\mathcal{Z}\times[H]\times[T]$, it holds that\
\begin{align*}
	&\mathbb{\hat{P}}_z(L_{h,t}^\mathtt{LLM}(z')\geq\beta_{h,t}^\mathtt{LLM})\leq\underset{\lambda\geq0}{\inf}\ \mathbb{E}_{\mathfrak{F}_{1:t}}\left[\exp(\lambda\cdot(L_{h,t}^\mathtt{LLM}(z')-\beta_{h,t}^\mathtt{LLM}))\right]\nonumber\\
	&\quad=\underset{\lambda\geq0}{\inf}\ \mathbb{E}_{\bigotimes_{i\in[t]\backslash\mathcal{X}^t_{\mathtt{exp}}}\mathbb{\hat{P}}_z^{\hat{\pi}_i}}\left[\exp\left(\sum_{i\in[t]\backslash\mathcal{X}^t_{\mathtt{exp}}}\lambda\cdot\log\left(\frac{\mathbb{P}_{z'}^{\hat{\pi}^i}(\breve{\tau}_{h/t}^{i})}{\mathbb{P}_{z}^{\hat{\pi}^i}(\breve{\tau}_{h/t}^{i})}\right)-\lambda\cdot\beta_{h,t}^\mathtt{LLM}\right)\right]\nonumber\\
	&\quad=\underset{\lambda\geq0}{\inf}\ \prod_{i\in[t]\backslash\mathcal{X}^t_{\mathtt{exp}}}\mathbb{E}_{\mathbb{{P}}_z^{\hat{\pi}^i}}\left[\left(\frac{\mathbb{P}_{z'}^{\hat{\pi}^i}(\breve{\tau}_{h/t}^{i})}{\mathbb{{P}}_{z}^{\hat{\pi}^i}(\breve{\tau}_{h/t}^{i})}\right)^\lambda\cdot\frac{\mathbb{\hat{P}}_{z'}^{\hat{\pi}^i}(\breve{\tau}_{h/t}^{i})}{\mathbb{P}_{z}^{\hat{\pi}^i}(\breve{\tau}_{h/t}^{i})}\right]\cdot\exp\left(-\lambda\cdot\beta_{h,t}^\mathtt{LLM}\right)\nonumber\\
	&\quad\leq\underset{\lambda\geq0}{\inf}\ \prod_{i\in[t]\backslash\mathcal{X}^t_{\mathtt{exp}}}\mathbb{E}_{\mathbb{{P}}_z^{\hat{\pi}^i}}\left[\left(\frac{\mathbb{P}_{z'}^{\hat{\pi}^i}(\breve{\tau}_{h/t}^{i})}{\mathbb{{P}}_{z}^{\hat{\pi}^i}(\breve{\tau}_{h/t}^{i})}\right)^{2\lambda}\right]^{1/2}\mathbb{E}_{\mathbb{{P}}_z^{\hat{\pi}^i}}\left[\left(\frac{\mathbb{\hat{P}}_{z'}^{\hat{\pi}^i}(\breve{\tau}_{h/t}^{i})}{\mathbb{P}_{z}^{\hat{\pi}^i}(\breve{\tau}_{h/t}^{i})}\right)^2\right]^{1/2}\cdot\exp\left(-\lambda\cdot\beta_{h,t}^\mathtt{LLM}\right),
\end{align*}
where the first inequality is a natural corollary to Lemma \ref{lem:exptrick}, and the last inequality follows the Cauchy-Swartz inequality. By taking $\lambda=\frac{1}{4}$, for all $(h,t)\in[H]\times[T]$, we have
\begin{align}
&\mathbb{E}_{\mathbb{{P}}_z^{\hat{\pi}^i}}\left[\left(\frac{\mathbb{P}_{z'}^{\hat{\pi}^i}(\breve{\tau}_{h/t}^{i})}{\mathbb{{P}}_{z}^{\hat{\pi}^i}(\breve{\tau}_{h/t}^{i})}\right)^{1/2}\right]^{1/2}\mathbb{E}_{\mathbb{{P}}_z^{\hat{\pi}^i}}\left[\left(\frac{\mathbb{\hat{P}}_{z'}^{\hat{\pi}^i}(\breve{\tau}_{h/t}^{i})}{\mathbb{P}_{z}^{\hat{\pi}^i}(\breve{\tau}_{h/t}^{i})}\right)^2\right]^{1/2}\leq\sqrt{1+\chi^2\big(\mathbb{P}_{z'}^{\hat{\pi}^i}(\breve{\tau}_{h/t}^{i})\wVert\mathbb{\hat{P}}_{z}^{\hat{\pi}^i}(\breve{\tau}_{h/t}^{i})\big)}.
\label{eq:concentrationbound}
\end{align}
Based on Theorem \ref{thm:translatorpretrain} and Assumption \ref{as:coverage}, for any policy $\pi\in\Pi$, it holds that
\begin{align}
	1&+\chi^2\big(\mathbb{P}_{z'}^\pi(\tau_h)\wVert\mathbb{\hat{P}}_{z}^{\pi}(\tau_h)\big)\leq 1+\chi^2\big(\mathbb{P}_{z'}^\pi(\tau_h,s_{1:h})\wVert\mathbb{\hat{P}}_{z}^{\pi}(\tau_h,s_{1:h})\big)\nonumber\\
	&\leq 1+\chi^2\left(\prod_{h'=1}^h\mathbb{P}_{z'}^\pi(g_h,s_{h+1}\wvert\tau_h,s_h)\cdot\OO(o_h\wvert s_h)\wVert\prod_{h'=1}^h\mathbb{P}_{z'}^\pi(g_h,s_{h+1}\wvert\tau_h,s_h)\cdot\mathbb{O}_{\hat{\gamma}}(o_h\wvert s_h)\right)\nonumber\\
	&\leq\big(1+\max_{s\in\mathcal{S}}\big\{\chi^2\big(\mathbb{O}(\cdot\wvert s)\wVert\mathbb{O}_{\hat{\gamma}}(\cdot\wvert s)\big)\big\}\big)^H\leq\left(1+\lambda^{-1}_{R}\cdot\Delta_\mathtt{Rep}(N_{\rm p},T_{\rm p},H,\delta)^2\right)^H,
	\label{eq:concentrationboundllmchi}
\end{align}
where the first inequality follows data processing inequality and the second inequality arises from the tensorization \cite[Theorem 7.32 and \S7.12,][]{polyanskiy2022information}. To ensure that $L_{h,t}^\mathtt{LLM}(z')\leq\beta_{h,t}^\mathtt{LLM}$ holds for all $(z',h,t)\in\mathcal{Z}\times[H]\times[T]$ with probability at least $1-\delta$, we let 
$$
\prod_{i\in[t]\backslash\mathcal{X}^t_{\mathtt{exp}}}\sqrt{1+\chi^2\big(\mathbb{P}_{z'}^{\hat{\pi}^i}(\breve{\tau}_{h/t}^{i})\wVert\mathbb{\hat{P}}_{z}^{\hat{\pi}^i}(\breve{\tau}_{h/t}^{i})\big)}\cdot\exp\left(-\frac{\beta_{h,t}^\mathtt{LLM}}{4}\right)=\frac{\delta}{|\cZ|},
$$
with a union bound taken over $\mathcal{Z}$, since Lemma \ref{lem:exptrick} has ensured the inequality holds for all $(h,t)\in[H]\times[T]$. Thus, the constant $\beta_{h,t}^\mathtt{LLM}$ is then chosen as
\begin{align*}
	\beta_{h,t}^\mathtt{LLM}&=2\sum_{i\in[t]\backslash\mathcal{X}^t_{\mathtt{exp}}}\log\left(1+\chi^2\big(\mathbb{P}_{z'}^{\hat{\pi}^i}(\breve{\tau}_{h/t}^{i})\wVert\mathbb{\hat{P}}_{z}^{\hat{\pi}^i}(\breve{\tau}_{h/t}^{i})\big)\right)+4\log(|\mathcal{Z}|/\delta)\\
	&\leq\left(t-|\mathcal{X}^t_{\mathtt{exp}}|\right)\cdot H\log\left(1+\lambda^{-1}_{R}\cdot\Delta_\mathtt{Rep}(N_{\rm p},T_{\rm p},H,\delta)^2\right)+4\log(|\mathcal{Z}|/\delta)\nonumber\\
	&\leq\left(t-|\mathcal{X}^t_{\mathtt{exp}}|\right)\cdot H\lambda^{-1}_{R}\cdot\Delta_\mathtt{Rep}(N_{\rm p},T_{\rm p},H,\delta)^2+4\log(|\mathcal{Z}|/\delta),
\end{align*}
which is based on \eqref{eq:concentrationbound}, \eqref{eq:concentrationboundllmchi} by taking a union bound over $\mathcal{Z}$, and the last inequality results from $\log(1+x)\leq x$ for all $x\geq0$. Similarly, for the exploration episodes, we let
\begin{align*}
	\mathbb{\hat{P}}_z&(L_{h,t}^\mathtt{exp}(z')\geq\beta_{h,t}^\mathtt{exp})\leq\underset{\lambda\geq0}{\inf}\ \mathbb{E}\left[\exp(\lambda\cdot(L_{h,t}^\mathtt{exp}-\beta_{h,t}^\mathtt{exp}))\right]\nonumber\\
	%&\quad\leq\underset{\lambda\geq0}{\inf}\ \prod_{i\in\mathcal{X}^t_{\mathtt{exp}}}\mathbb{E}_{\mathbb{{P}}_z^{\hat{\pi}^i}}\left[\left(\frac{\mathbb{P}_{z'}^{\hat{\pi}^i}(\breve{\tau}_{h/t}^{i})}{\mathbb{{P}}_{z}^{\hat{\pi}^i}(\breve{\tau}_{h/t}^{i})}\right)^{2\lambda}\right]^{1/2}\mathbb{E}_{\mathbb{{P}}_z^{\hat{\pi}^i}}\left[\left(\frac{\mathbb{\hat{P}}_{z'}^{\hat{\pi}^i}(\breve{\tau}_{h/t}^{i})}{\mathbb{P}_{z}^{\hat{\pi}^i}(\breve{\tau}_{h/t}^{i})}\right)^2\right]^{1/2}\cdot\exp\left(-\lambda\cdot\beta_{h,t}^\mathtt{LLM}\right)\nonumber\\
	&\quad\leq\prod_{i\in\mathcal{X}^t_{\mathtt{exp}}}\sqrt{1-D^2_{\rm H}\big(\mathbb{P}^{\hat{\pi}^i}_{z'}(\breve{\tau}_{h/t}^{i}),\mathbb{{P}}_{z}^{\hat{\pi}^i}(\breve{\tau}_{h/t}^{i})\big)}\cdot\sqrt{1+\chi^2\big(\mathbb{P}_{z'}^{\hat{\pi}^i}(\breve{\tau}_{h/t}^{i})\wVert\mathbb{\hat{P}}_{z}^{\hat{\pi}^i}(\breve{\tau}_{h/t}^{i})\big)}\cdot\exp\left(-\frac{1
	}{4}\beta_{h,t}^\mathtt{exp}\right).
\end{align*}
Furthermore, based on Definition \ref{def:iden}, the expolration episodes satisfies that
\begin{align}
	\sum_{i\in\mathcal{X}^t_{\mathtt{exp}}}&D^2_{\rm H}\big(\mathbb{P}_{z'}^{\hat{\pi}^i}(\breve{\tau}_{h/t}^{i}),\mathbb{P}_{z}^{\hat{\pi}^i}(\breve{\tau}_{h/t}^{i})\big)\geq\sum_{i\in\mathcal{X}^{t-1}_{\mathtt{exp}}}D^2_{\rm H}\big(\mathbb{P}_{z'}^{\hat{\pi}^i}(\tau_H),\mathbb{P}_{z}^{\hat{\pi}^i}(\tau_H)\big)\geq{\eta}\cdot|\mathcal{X}^{t-1}_{\mathtt{exp}}|.
	\label{eq:concentrationboundexphell}
\end{align}
To ensure that $L_{h,t}^\mathtt{exp}(z')\leq\beta_{h,t}^\mathtt{exp}$ holds for all $(z',h,t)\in\mathcal{Z}\times[H]\times[T]$ with high probability, we take 
$$
\prod_{i\in[t]\backslash\mathcal{X}^t_{\mathtt{exp}}}\sqrt{1-D^2_{\rm H}\big(\mathbb{P}^{\hat{\pi}^i}_{z'}(\breve{\tau}_{h/t}^{i}),\mathbb{{P}}_{z}^{\hat{\pi}^i}(\breve{\tau}_{h/t}^{i})\big)}\cdot\sqrt{1+\chi^2\big(\mathbb{P}_{z'}^{\hat{\pi}^i}(\breve{\tau}_{h/t}^{i})\wVert\mathbb{\hat{P}}_{z}^{\hat{\pi}^i}(\breve{\tau}_{h/t}^{i})\big)}\cdot\exp\left(-\frac{\beta_{h,t}^\mathtt{exp}}{4}\right)=\frac{\delta}{|\cZ|},
$$
with a union bound taken over $\mathcal{Z}$, and thus the constant $\beta_{h,t}^\mathtt{exp}$ is chosen as
\begin{align*}
	\beta_{h,t}^\mathtt{exp}&=2\sum_{i\in\mathcal{X}^t_{\mathtt{exp}}}\log\left(1-D^2_{\rm H}\left(\mathbb{P}_{z'}^{\hat{\pi}^i}(\breve{\tau}_{h/t}^{i}),\mathbb{{P}}_{z}^{\hat{\pi}^i}(\breve{\tau}_{h/t}^{i})\right)\right)\nonumber\\
	&\quad+2\sum_{i\in\mathcal{X}^t_{\mathtt{exp}}}\log\left(1+\chi^2\big(\mathbb{P}_{z'}^{\hat{\pi}^i}(\breve{\tau}_{h/t}^{i})\wVert\mathbb{\hat{P}}_{z}^{\hat{\pi}^i}(\breve{\tau}_{h/t}^{i})\big)\right)+4\log(|\mathcal{Z}|/\delta)\\
	&\leq|\mathcal{X}^t_{\mathtt{exp}}|\cdot H\log\left(1+\lambda^{-1}_{R}\cdot\Delta_\mathtt{Rep}(N_{\rm p},T_{\rm p},H,\delta)^2\right)+4\log(|\mathcal{Z}|/\delta)-2\eta\cdot|\mathcal{X}^{t-1}_{\mathtt{exp}}|\\
	&\leq|\mathcal{X}^t_{\mathtt{exp}}|\cdot H\lambda^{-1}_{R}\cdot\Delta_\mathtt{Rep}(N_{\rm p},T_{\rm p},H,\delta)^2+4\log(|\mathcal{Z}|/\delta)-2\eta\cdot(|\mathcal{X}^t_{\mathtt{exp}}|-1),
\end{align*}
where the first inequality results from \eqref{eq:concentrationboundllmchi}, \eqref{eq:concentrationboundexphell} and facts that $\log(1-x)\leq -x$ for all $x\leq1$ and $\log(1+x)\leq x$ for all $x\geq0$, and then we complete the proof of Lemma \ref{lem:practicalconcentration}.\hfill$\Box$

\subsection{Proof of Lemma \ref{lem:targetcontrastive}}
\begin{lemma}[Learning Target of Contrastive Loss] 
\label{lem:targetcontrastive}
For any observation-state pair $(o,s)\in\mathcal{O}\times\mathcal{S}$ sampled from the contrastive collection process, the learning target is 
$
f^*(o,s)={\mathbb{O}(o\wvert s)}/{\mathcal{P}^-(o)}.
$
\end{lemma}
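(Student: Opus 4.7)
The plan is to identify the population-level minimizer of the contrastive loss with the true Bayes posterior over the label $y$, and then read off $f^*$ as the ratio of posterior odds. Since the contrastive objective $\mathcal{L}_{\rm CT}(\gamma;\mathcal{D}_\mathtt{Rep})$ can be rewritten as the negative log-likelihood $-\hat{\mathbb{E}}[\log \mathbb{D}_\gamma(y\wvert o,s)]$ of the binary discriminator in \eqref{eq:implieddistribution}, its population risk is minimized precisely when $\mathbb{D}_\gamma(\cdot\wvert o,s)$ equals the true conditional $\mathbb{P}_{\mathcal{C}}(\cdot\wvert o,s)$ induced by the contrastive data-generating process. This reduces the problem to computing $\mathbb{P}_{\mathcal{C}}(y\wvert o,s)$ in closed form.

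First, I would unpack the joint distribution $\mathbb{P}_{\mathcal{C}}$ implied by the data collection protocol. Positive and negative pairs are drawn with equal probability $1/2$; conditional on $y=1$, the pair $(o,s)$ is sampled with $o\sim\mathbb{O}(\cdot\wvert s)$ and $s$ drawn from whatever marginal the dataset $\mathcal{D}$ induces on states, call it $p(s)$; conditional on $y=0$, the observation is replaced by a negative sample so that $o\sim\mathcal{P}^-$ independently of $s\sim p(s)$. Hence
\begin{align*}
\mathbb{P}_{\mathcal{C}}(o,s,y=1)&=\tfrac{1}{2}\,\mathbb{O}(o\wvert s)\,p(s),\\
\mathbb{P}_{\mathcal{C}}(o,s,y=0)&=\tfrac{1}{2}\,\mathcal{P}^-(o)\,p(s).
\end{align*}
Applying Bayes' rule and dividing yields the posterior odds
\begin{align*}
\frac{\mathbb{P}_{\mathcal{C}}(y=1\wvert o,s)}{\mathbb{P}_{\mathcal{C}}(y=0\wvert o,s)}=\frac{\mathbb{O}(o\wvert s)}{\mathcal{P}^-(o)}.
\end{align*}

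Second, I would match this against the parametric form of $\mathbb{D}_\gamma$. Directly from \eqref{eq:implieddistribution}, the discriminator's odds ratio is
\begin{align*}
\frac{\mathbb{D}_\gamma(y=1\wvert o,s)}{\mathbb{D}_\gamma(y=0\wvert o,s)}=f_\gamma(o,s).
\end{align*}
Setting the two odds ratios equal identifies the population minimizer: any $\gamma^*$ achieving $\mathbb{D}_{\gamma^*}(\cdot\wvert o,s)=\mathbb{P}_{\mathcal{C}}(\cdot\wvert o,s)$ must satisfy $f_{\gamma^*}(o,s)=\mathbb{O}(o\wvert s)/\mathcal{P}^-(o)$, which is the claimed target $f^*$.

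This argument is essentially routine once the posterior computation is carried out, so I do not expect a substantive obstacle. The only subtle point is establishing that cross-entropy minimization over the full conditional distribution $\mathbb{D}_\gamma(\cdot\wvert o,s)$ genuinely pins down $f_\gamma$ at the density-ratio value for every $(o,s)$ in the support of $\mathbb{P}_{\mathcal{C}}$; this is ensured by the realizability hypothesis (Assumption \ref{as:realizability}) and the strict monotonicity of the map $f\mapsto f/(1+f)$, so there is no identifiability gap. Consequently, under the realizable, well-specified setting used throughout the pretraining analysis, $f^*$ as defined is indeed the unique minimizer of the population contrastive risk, justifying its use as the learning target referenced in Theorem \ref{thm:translatorpretrain}.
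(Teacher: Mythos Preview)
The proposal is correct and follows essentially the same route as the paper: both compute the Bayes posterior $\mathbb{P}_{\mathcal{C}}(y\wvert o,s)$ induced by the contrastive data-generating process and match it against the parametric form of $\mathbb{D}_\gamma$ to read off $f^*$. Your formulation via the posterior odds ratio is slightly cleaner than the paper's, which equates the full conditionals directly and then solves \eqref{eq:learningtarget}, but the substance is identical.
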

\noindent\emph{Proof of Lemma \ref{lem:targetcontrastive}.} For any $(o,s)\in\mathcal{O}\times\mathcal{S}$, the posterior probability of label $y$ follows that
\begin{align*}
	&\DD(y\wvert o,s):=\mathbb{P}_{\cC}(y\wvert o,s)=\frac{\mathbb{P}_{\cC}(o\wvert s,y)\cdot\mathbb{P}_{\cC}(s\wvert y)}{\sum_{y\in\{0,1\}}\mathbb{P}_{\cC}(o\wvert s,y)\cdot\mathbb{P}_{\cC}(s\wvert y)},
\end{align*}
where the equation follows the Baye's Theorem and $\mathbb{P}_{\cC}(y=0)=\mathbb{P}_{\cC}(y=1)=1/2$. Moreover, the contrastive data collection process in \S\ref{sec:metric} indicates that
\begin{equation}
\mathbb{P}_{\cC}(\cdot\wvert s,y=0)=\mathbb{O}(\cdot\wvert s),\quad\mathbb{P}_{\cC}(\cdot\wvert s,y=1)=\mathcal{P}^-(\cdot),
\end{equation}
and data are labeled independent of data itself, such that $\mathbb{P}_{\cC}(s\wvert y)=\mathbb{P}_{\cC}(s)$. Thus,
$\mathbb{P}_{\cC}(y\wvert o,s)=\mathbb{P}_{\cC}(o\wvert s,y)/(\mathcal{P}^-(o)+\mathbb{O}(o\wvert s))$.
Recall that the population risk is 
$$
\mathcal{R}_{\rm CT}(\gamma;\mathcal{D}_\mathtt{Rep})=\mathbb{E}\left[D_{\rm KL}\left(\mathbb{D}_\gamma(\cdot| o,s)\wVert\DD(\cdot| o,s)\right)+\mathrm{Ent}(\DD(\cdot| o,s))\right].$$ 
As the minimum is attained at $\mathbb{D}_\gamma(\cdot\wvert o,s)=\DD(\cdot\wvert o,s)$. Following \eqref{eq:implieddistribution}, the learning target follows
\begin{equation}
\frac{\mathbb{P}_{\cC}(o\wvert s,y)}{\mathcal{P}^-(o)+\mathbb{O}(o\wvert s)}=\left(\frac{f^*(o, s)}{1+f^*(o, s)}\right)^y\left(\frac{1}{1+f^*(o, s)}\right)^{1-y}.
\label{eq:learningtarget}
\end{equation}
By solving the equation in \eqref{eq:learningtarget}, the learning target follows that $f^*(o,s)=\mathbb{O}(o\wvert s)/\mathcal{P}^-(o)$ for the contrastive loss in \eqref{eq:contrastive_loss},
and then we conclude the proof of Lemma \ref{lem:targetcontrastive}.\hfill$\Box$

\section{Proof for Section \ref{sec:extention}: Extentions}

\subsection{Proof of Proposition \ref{thm:bawm}}\label{ap:bawm}
\begin{proof}[Proof of Proposition \ref{thm:bawm}] Based on the law of total probability, it holds that
\begin{align}
\PP_{\cD}\left(o_{h}\wvert(o,g)_{1:h-1},\cH_t\right)&=\sum_{z\in\cZ}\PP_z\left(o_h\wvert (o,g)_{1:h-1}\right)\cdot\PP_{\cD}\left(z\wvert (o,g)_{1:h-1},\cH_t\right)
	\label{eq:dogo_equiv}
\end{align}
Furthermore, based on Baye's theorem, we have
\begin{align}
\PP_{\cD}\left(z\wvert (o,g)_{1:h-1},\cH_t\right)=\frac{\prod_{h'=1}^{h-2}\PP_z\left(o_{h'+1}\wvert(o,g)_{1:h'}\right)}{\prod_{h'=1}^{h-2}\PP_{\cD}\left(o_{h'+1}\wvert(o,g)_{1:h'},\cH_t\right)}\cdot\PP_{\cD}(z\wvert\cH_t),
	\label{eq:dogo_equiv_2}
\end{align}
Hence, \eqref{eq:dogo_equiv} and \eqref{eq:dogo_equiv_2} jointly indicates that
\begin{align}
\prod_{h'=1}^{h-1}\PP_{\cD}\left(o_{h'+1}\wvert(o,g)_{1:h'},\cH_t\right)&=\PP_{\cD}\left(o_{h}\wvert(o,g)_{1:h-1},\cH_t\right)\cdot\prod_{h'=1}^{h-2}\PP_{\cD}\left(o_{h'+1}\wvert(o,g)_{1:h'},\cH_t\right)\nonumber\\
	&=\sum_{z\in\cZ}\PP_z\left(o_h\wvert (o,g)_{1:h-1}\right)\cdot\prod_{h'=1}^{h-2}\PP_z\left(o_{h'+1}\wvert(o,g)_{1:h'}\right)\cdot\PP_{\cD}(z\wvert\cH_t)\nonumber\\
	&=\sum_{z\in\cZ}\left(\prod_{h'=1}^{h-1}\PP_z\left(o_{h'+1}\wvert(o,g)_{1:h'}\right)\right)\cdot\PP_{\cD}(z\wvert \cH_t).
	\label{eq:world_bayes_decom}
\end{align}
Following the definition of marginal distributions, it holds that
\begin{align*}
	\PP_{\mathtt{LLM}}^t\left(o_h\wvert  o_1,\tdo g_{1:h-1}\right)
	&=\int_{o_{2:h-1}}\prod_{h'=1}^{h-1}\PP_{\cD}\left(o_{h'+1}\wvert(o,g)_{1:h'},\cH_t\right)\rd o_{2:h-1}\\
	&=\sum_{z\in\cZ}\left(\int_{o_{2:h-1}}\prod_{h'=1}^{h-1}\PP_z\left(o_{h'+1}\wvert(o,g)_{1:h'}\right)\rd o_{2:h-1}\right)\cdot\PP_{\cD}(z\wvert \cH_t)\\
	&=\sum_{z\in\cZ}\PP_{z}\left(o_h\wvert o_{1},\tdo g_{1:h-1}\right)\cdot\PP_{\cD}(z\wvert \cH_t),
\end{align*}
where the second equation follows \eqref{eq:world_bayes_decom} and then we complete the proof of Proposition \ref{thm:bawm}.
\end{proof}

\subsection{Proof of Corollary \ref{cor:bawm_regret}}\label{ap:bawm_regret}
\paragraph{Notations.} Denote $(\mathcal{J},\mathcal{\hat{J}})$ and $({\pi}_z^*,\hat{\pi}_z^*)$, and $(\mathbb{P}_{z,h},\mathbb{\hat{P}}_{z,h})$ as the value functions, optimal policies, and probability under the environment concerning the ground-truth $\mathbb{O}$ and the pretrained $\mathbb{O}_{\hat{\gamma}}$. Let $(\mathcal{\hat{J}}_{t,\mathtt{LLM}},\hat{\pi}_{\mathtt{LLM}}^{t,*})$ be the value function of the environment simulated by pretrained $\mathtt{LLM}_{\hat{\theta}}$ and its optimal policy; $\mathcal{J}_{t,\mathtt{LLM}}$ denote the value function of the environment simulated by perfect $\mathtt{LLM}$; $(\PP_{\mathtt{LLM}}^t,\hat{\PP}_{\mathtt{LLM}}^t)$ are the probability under environment simulated by perfect $\mathtt{LLM}$ or pretrained $\mathtt{LLM}_{\hat{\theta}}$.
\begin{proof}[Proof of Corollary \ref{cor:bawm_regret}]
Condition on the event $\cE_1$ that both Theorem \ref{thm:llmpretrain} and \ref{thm:translatorpretrain} hold, the regret under the practical setting can be decomposed as

\begin{align}
	{\rm Reg}_z(T)&\leq\underbrace{\sum_{t=1}^T\mathcal{\hat{J}}_z(\hat{\pi}_z^*,\omega^t)-\mathcal{{J}}_z(\hat{\pi}_z^*,\omega^t)}_{\textbf{(\romannumeral1})}+\underbrace{\sum_{t=1}^T\EE_{\cH_t}\left[\mathcal{{J}}_z(\hat{\pi}_z^*,\omega^t)-\mathcal{\hat{J}}_{t,\mathtt{LLM}}(\hat{\pi}_z^*,\omega^t)\right]}_{\textbf{(\romannumeral2})}\nonumber\\
	&\quad+\underbrace{\sum_{t=1}^T\EE_{\cH_t}\left[\mathcal{\hat{J}}_{t,\mathtt{LLM}}(\hat{\pi}_z^*,\omega^t)-\mathcal{\hat{J}}_{t,\mathtt{LLM}}(\hat{\pi}^t,\omega^t)\right]}_{\textbf{(\romannumeral3})}\nonumber\\
	&\quad+\underbrace{\sum_{t=1}^T\EE_{\cH_t}\left[\mathcal{\hat{J}}_{t,\mathtt{LLM}}(\hat{\pi}^t,\omega^t)-\mathcal{J}_z(\hat{\pi}^t,\omega^t)\right]}_{\textbf{(\romannumeral4})}+\underbrace{\sum_{t=1}^T\EE_{\cH_t}\left[\mathcal{J}_z(\hat{\pi}^t,\omega^t)-\mathcal{\hat{J}}_z(\hat{\pi}^t,\omega^t)\right]}_{\textbf{(\romannumeral5})}.
	\label{eq:wm_term15bound}
\end{align}
\noindent\textbf{Step 1. Bound (\romannumeral1) and (\romannumeral5) with Translator's Pretraining Error.}\\[5pt]
Similar to \eqref{eq:practical13} in the proof of Theorem \ref{thm:practical regret}, it holds that
\begin{align}
	\textbf{\small(\romannumeral1})+\textbf{\small(\romannumeral6})\leq2H^2T\lambda_R^{-1}\cdot\Delta_\mathtt{Rep}(N_{\rm p},T_{\rm p},H,\delta),
	\label{eq:bawm_term16}
\end{align}
following the pretraining error in Theorem \ref{thm:translatorpretrain}.\\[10pt]
\noindent\textbf{Step 2. Bound (\romannumeral3) via Optimality in \planner's Algorithm.}\\[5pt] Recall that \planner\ conducts task planning via the mixture policy:
\begin{equation}
\pi_h^t(\cdot\wvert \tau_{h}^t,\omega^t)\sim(1-\epsilon)\cdot\hat{\pi}_{h,\mathtt{LLM}}^{t,*}(\cdot\wvert \tau_{h}^t,\omega^t)+\epsilon\cdot\pi_{h,\mathtt{exp}}(\cdot|\tau_h^t),
\end{equation} 
Following this, it holds that
 \begin{align}
	\textbf{\small(\romannumeral3})&=\sum_{t=1}^T\EE_{\cH_t}\left[\mathcal{\hat{J}}_{t,\mathtt{LLM}}(\hat{\pi}_z^*,\omega^t)-\mathcal{\hat{J}}_{t,\mathtt{LLM}}(\hat{\pi}_{\mathtt{LLM}}^{t,*},\omega^t)\right]+\sum_{t=1}^T\EE_{\cH_t}\left[\mathcal{\hat{J}}_{t,\mathtt{LLM}}(\hat{\pi}_{\mathtt{LLM}}^{t,*},\omega^t)-\mathcal{\hat{J}}_{t,\mathtt{LLM}}(\hat{\pi}^t,\omega^t)\right]\nonumber\\
	&\leq\sum_{t=1}^T\EE_{\cH_t}\left[\mathcal{\hat{J}}_{t,\mathtt{LLM}}(\hat{\pi}_{\mathtt{LLM}}^{t,*},\omega^t)-(1-\epsilon)\cdot\mathcal{\hat{J}}_{t,\mathtt{LLM}}(\hat{\pi}_{\mathtt{LLM}}^{t,*},\omega^t)-\epsilon\cdot\mathcal{\hat{J}}_{t,\mathtt{LLM}}(\pi_\mathtt{exp},\omega^t)\right]\leq2HT\epsilon,
	\label{eq:bawm_term3bound}
\end{align}
where the the first inequality results from the optimality of $\hat{\pi}_{\mathtt{LLM}}^{t,*}$ under simulated environment.\\[10pt]
\noindent\textbf{Step 3. Bound (\romannumeral2) and (\romannumeral4) with LLM's Pretraining Error.}\\[5pt]
For any policy $\pi\in\Pi$, given history $\cH_t$, the performance difference follows
\begin{align}
	\mathcal{\hat{J}}_{t,\mathtt{LLM}}(\pi,\omega^t)-\mathcal{J}_z(\pi,\omega^t)&=\mathcal{\hat{J}}_{t,\mathtt{LLM}}(\pi,\omega^t)-\mathcal{J}_{t,\mathtt{LLM}}(\pi,\omega^t)+\mathcal{J}_{t,\mathtt{LLM}}(\pi,\omega^t)-\mathcal{J}_z(\pi,\omega^t)\nonumber\\
	&\leq\underbrace{\EE\left[\sum_{h=1}^H\int_{o_h}\left(\hat{\PP}_{\mathtt{LLM}}^t\left(o_h\wvert  o_1,\tdo g_{1:h-1}\right)-\PP_{\mathtt{LLM}}^t\left(o_h\wvert  o_1,\tdo g_{1:h-1}\right)\right)\rd o_{h}\right]}_{\textbf{(\romannumeral6})}\nonumber\\
	&\quad+\underbrace{\sup_{g_{1:H-1}}\sum_{h=1}^H\int_{o_h}\left(\PP_{\mathtt{LLM}}^t\left(o_h\wvert  o_1,\tdo g_{1:h-1}\right)-\PP_z\left(o_h\wvert  o_1,\tdo g_{1:h-1}\right)\right)\rd o_{h}}_{\textbf{(\romannumeral7})},\nonumber
\end{align}
where the inequality arises from $\|r_h\|_\infty\leq1$ depending solely on $o_h$. Furthermore, we have
\begin{align}
	&\int_{o_h}\hat{\PP}_{\mathtt{LLM}}^t\left(o_h\wvert  o_1,\tdo g_{1:h-1}\right)-\PP_{\mathtt{LLM}}^t\left(o_h\wvert  o_1,\tdo g_{1:h-1}\right)\rd o_{h}\nonumber\\
	&\quad=\int_{o_{2:h}}\left(\prod_{h'=1}^{h-1}\hat{\PP}_{\mathtt{LLM}}^t\left(o_{h'+1}\wvert  (o,g)_{1:h'}\right)-\prod_{h'=1}^{h-1}\PP_{\mathtt{LLM}}^t\left(o_{h'+1}\wvert  (o,g)_{1:h'}\right)\right)\rd o_{2:h}.
	\label{eq:bawm_llmerror}
	\end{align}
Following the arguments above, the difference can be decomposed as
\begin{align}
	&\prod_{h'=1}^{h-1}\hat{\PP}_{\mathtt{LLM}}^t\left(o_{h'+1}\wvert  (o,g)_{1:h'}\right)-\prod_{h'=1}^{h-1}\PP_{\mathtt{LLM}}^t\left(o_{h'+1}\wvert  (o,g)_{1:h'}\right)\nonumber\\
	&\quad=\sum_{h'=1}^{h-1}\left(\hat{\PP}_{\mathtt{LLM}}^t\left(o_{h'+1}\wvert  (o,g)_{1:h'}\right)-\PP_{\mathtt{LLM}}^t\left(o_{h'+1}\wvert  (o,g)_{1:h'}\right)\right)\nonumber\\
	&\qquad\cdot\prod_{k=h'+1}^{h-1}\hat{\PP}_{\mathtt{LLM}}^t\left(o_{k+1}\wvert  (o,g)_{1:k}\right)\cdot\prod_{k=1}^{h'-1}\PP_{\mathtt{LLM}}^t\left(o_{k+1}\wvert  (o,g)_{1:k}\right)\nonumber\\
	&\quad=\sum_{h'=1}^{h-1}\left(\mathtt{LLM}_{\hat{\theta}}\left(o_{h'+1}\wvert  (o,g)_{1:h'},\cH_t\right)-\mathtt{LLM}\left(o_{h'+1}\wvert  (o,g)_{1:h'},\cH_t\right)\right)\nonumber\\
	&\qquad\cdot\prod_{k=h'+1}^{h-1}\hat{\PP}_{\mathtt{LLM}}^t\left(o_{k+1}\wvert  (o,g)_{1:k}\right)\cdot\prod_{k=1}^{h'-1}\PP_{\mathtt{LLM}}^t\left(o_{k+1}\wvert  (o,g)_{1:k}\right).
	\label{eq:bawm_llmerror_decom}
\end{align}
Combine \eqref{eq:bawm_llmerror} and \eqref{eq:bawm_llmerror_decom}, it holds that
\begin{align}
	\textbf{(\romannumeral6})&\leq\sum_{h=1}^H\int_{o_{2:h}}\sum_{h'=1}^{h-1}\left(\mathtt{LLM}_{\hat{\theta}}\left(o_{h'+1}\wvert  (o,g)_{1:h'},\cH_t\right)-\mathtt{LLM}\left(o_{h'+1}\wvert  (o,g)_{1:h'},\cH_t\right)\right)\nonumber\\
	&\qquad\cdot\prod_{k=h'+1}^{h-1}\hat{\PP}_{\mathtt{LLM}}^t\left(o_{k+1}\wvert  (o,g)_{1:k}\right)\cdot\prod_{k=1}^{h'-1}\PP_{\mathtt{LLM}}^t\left(o_{k+1}\wvert  (o,g)_{1:k}\right)\rd o_{2:h}\nonumber\\
	&\leq\sum_{h=1}^H\sum_{h'=1}^{h-1}\EE_{o_{1:h'}|\cH_t}\left[D_{\rm TV}\left(\mathtt{LLM}_{\hat{\theta}}\left(o_{h'+1}\wvert(o,g)_{1:h'},\cH_t\right),\mathtt{LLM}\left(o_{h'+1}\wvert(o,g)_{1:h'},\cH_t\right)\right)\right].
	\label{eq:bawm_llmerror_all}
\end{align}
Following \eqref{eq:bawm_llmerror_all}, for any policy $\pi\in\Pi$, we have
\begin{align}
	&\sum_{t=1}^T\EE_{\cH_t}\left[\mathcal{\hat{J}}_{t,\mathtt{LLM}}(\pi,\omega^t)-\mathcal{J}_{t,\mathtt{LLM}}(\pi,\omega^t)\right]\nonumber\\
	&\quad\leq\sum_{t=1}^T\sum_{h=1}^H\sum_{h'=1}^{h-1}\EE_{\cH_t}\EE_{(o,g)_{1:h'}|\cH_t}\left[D_{\rm TV}\left(\mathtt{LLM}_{\hat{\theta}}\left(o_{h'+1}\wvert(o,g)_{1:h'},\cH_t\right),\mathtt{LLM}\left(o_{h'+1}\wvert(o,g)_{1:h'},\cH_t\right)\right)\right]\nonumber\\
	&\quad\leq\sum_{t=1}^T\sum_{h=1}^H\sum_{h'=1}^{h-1}\lambda_{S,1}\lambda_{S,2}^{-1}\cdot\bar{\EE}_{\cD_\mathtt{LLM}}\left[D_{\rm TV}\left(\mathtt{LLM}_{\hat{\theta}}\left(o_{h'+1}\wvert(o,g)_{1:h'},\cH_t\right),\mathtt{LLM}\left(o_{h'+1}\wvert(o,g)_{1:h'},\cH_t\right)\right)\right]\nonumber\\
	&\quad\leq H^2T\lambda_{S,1}\lambda_{S,2}^{-1}\cdot\Delta_\mathtt{LLM}(N_{\rm p},T_{\rm p},H,\delta)
	\label{eq:bawm_performance_diff_llm}
\end{align}
where the first inequality follows Theorem \ref{thm:llmpretrain} and Assumption \ref{as:onlinefullcoverage}. Based on Proposition \ref{thm:bawm}, the term \textbf{(\romannumeral7}) can be upper bounded using the Bayesian aggregated arguments such that
\begin{align*}
	\textbf{(\romannumeral7})&=\sup_{g_{1:H-1}}\sum_{z'\neq z}\sum_{h=1}^H\int_{o_h}(\PP_{z'}\left(o_h\wvert  o_1,\tdo g_{1:h-1}\right)-\PP_z\left(o_h\wvert  o_1,\tdo g_{1:h-1}\right))\cdot\PP_{\cD}(z'\wvert\cH_t)\rd o_{h}\leq H\sum_{z'\neq z}\PP_{\cD}(z'\wvert\cH_t).
\end{align*}
 Following the arguments above, for any policy $\pi\in\Pi$, it holds that
\begin{align}
	&\sum_{t=1}^T\EE_{\cH_t}\left[\mathcal{\hat{J}}_{t,\mathtt{LLM}}(\pi,\omega^t)-\mathcal{J}_z(\pi,\omega^t)\right]\leq H\sum_{t=1}^T\sum_{z'\neq z}\EE_{\cH_t}\left[\PP_{\cD}(z'\wvert\cH_t)\right],
	\label{eq:bawm_performance_diff_concen}
\end{align}
Combine \eqref{eq:bawm_performance_diff_llm}, \eqref{eq:bawm_performance_diff_concen} and the similar concentration arguments of posterior probability in \eqref{eq:term4concentrationsum}, denoted by event $\cE_2$ (see proof of Theorem \ref{thm:practical regret} in \S\ref{ap:practicalregret}), it holds that
\begin{align}
	\textbf{(\romannumeral2)}+\textbf{(\romannumeral4)}&\leq\sum_{t=1}^T\EE_{\cH_t}\left[\left(\mathcal{{J}}_z(\hat{\pi}_z^*,\omega^t)-\mathcal{\hat{J}}_{t,\mathtt{LLM}}(\hat{\pi}_z^*,\omega^t)\right)\cdot\ind\left(\cE_2\text{~holds}\right)\right]\nonumber\\
	&\quad+\sum_{t=1}^T\EE_{\cH_t}\left[\left(\mathcal{\hat{J}}_{t,\mathtt{LLM}}(\hat{\pi}^t,\omega^t)-\mathcal{J}_z(\hat{\pi}^t,\omega^t)\right)\cdot\ind\left(\cE_2\text{~holds}\right)\right]+2HT\delta\nonumber\\
	&\leq2H^2T\lambda_{S,1}\lambda_{S,2}^{-1}\cdot\Delta_\mathtt{LLM}(N_{\rm p},T_{\rm p},H,\delta)+2HT\delta\nonumber\\
	&\qquad+c_0\cdot2H\log(c_\mathcal{Z}|\mathcal{Z}|/\delta)\cdot\left(\eta\epsilon-H\lambda^{-1}_{R}\cdot\Delta_\mathtt{Rep}(N_{\rm p},T_{\rm p},H,\delta)^2\right)^{-1}
	\label{eq:bawm_term24bound}
\end{align}
\noindent\textbf{Step 4. Conclude the Proof based on Step 1, Step 2, and Step 3.}\\[5pt]
Combine \eqref{eq:bawm_term16}, \eqref{eq:bawm_term3bound} and \eqref{eq:bawm_term24bound}, we have 
\begin{align}
	{\rm Reg}_z(T)&\leq \underbrace{c_0\cdot2H\log(c_\mathcal{Z}|\mathcal{Z}|/\delta)\cdot\left(\eta\epsilon-H\lambda^{-1}_{R}\cdot\Delta_\mathtt{Rep}(N_{\rm p},T_{\rm p},H,\delta)^2\right)^{-1}}_{\textbf{(\romannumeral8})}+4HT\delta\nonumber\\
	&\qquad+\underbrace{2HT\eta^{-1}\left(\eta\epsilon-H\lambda^{-1}_{R}\cdot\Delta_\mathtt{Rep}(N_{\rm p},T_{\rm p},H,\delta)^2\right)}_{\textbf{(\romannumeral9})}+2H^2T\lambda_{S,1}\lambda_{S,2}^{-1}\cdot\Delta_\mathtt{LLM}(N_{\rm p},T_{\rm p},H,\delta)\nonumber\\
	&\qquad+2H^2T(\eta\lambda_R)^{-1}\cdot\Delta_\mathtt{Rep}(N_{\rm p},T_{\rm p},H,\delta)^2+2H^2T\lambda_R^{-1}\cdot\Delta_\mathtt{Rep}(N_{\rm p},T_{\rm p},H,\delta)\nonumber\\
	&\leq\mathcal{O}\Big(H\sqrt{\log(c_\mathcal{Z}|\mathcal{Z}|/\delta)\cdot T/\eta}+H^2T\cdot\Delta_{\rm p,wm}(N_{\rm p},T_{\rm p},H,\delta,\xi)\Big)+4HT\delta,
\end{align}
if we choose $\epsilon=(\log(c_\mathcal{Z}|\mathcal{Z}|\sqrt{T})/T\eta)^{1/2}+H(\eta\lambda_{\min})^{-1}\cdot\Delta_\mathtt{Rep}(N_{\rm p},T_{\rm p},H,\delta)^2$ to strike an exploration-exploitation balance between \textbf{(\romannumeral8}) and \textbf{(\romannumeral9}). Thus, the cumulative pretraining error follows 
\begin{align*}
	\Delta_{\rm p,wm}&(N_{\rm p},T_{\rm p},H,\delta,\xi)=2(\eta\lambda_R)^{-1}\cdot\Delta_\mathtt{Rep}(N_{\rm p},T_{\rm p},H,\delta)^2\\
	&+2\lambda_R^{-1}\cdot\Delta_\mathtt{Rep}(N_{\rm p},T_{\rm p},H,\delta)+2\lambda_{S,1}\lambda_{S,2}^{-1}\cdot\Delta_\mathtt{LLM}(N_{\rm p},T_{\rm p},H,\delta).
\end{align*}
Here, $\xi=(\eta,\lambda_{S,1},\lambda_{S,2},\lambda_R)$ denotes the set of distinguishability and coverage coefficients in Definition \ref{def:iden} and Assumption \ref{as:onlinecoverage}, and $\Delta_\mathtt{LLM}(N_{\rm p},T_{\rm p},H,\delta)$ and $\Delta_\mathtt{Rep}(N_{\rm p},T_{\rm p},H,\delta)$ are  pretraining errors defined in Theorem \ref{thm:llmpretrain} and Theorem \ref{thm:translatorpretrain}. By taking $\delta=1/\sqrt{T}$, we complete the entire proof. 
\end{proof}

\subsection{Proof of Corollary \ref{corol:perfectregret}}\label{ap:multiagent_regret}
The proof is similar to that in \S\ref{ap:perfectplanning}.\\[10pt]
\emph{Proof Sketch of Corollary \ref{corol:perfectregret}.} We first verify the claim in \eqref{eq:multiagent_BAIL}, which is akin to Proposition \ref{thm:BAIL}. Note that for all $(h,t)\in[H]\times[T]$, based on the law of total probability, it holds that
\begin{align}
	\pi_{h,\mathtt{LLM}}^t\big(\mathbf{g}_h^t\wvert  \tau_{h}^t,\omega^t\big)&=\prod_{k\in\cK}\mathtt{LLM}\big(g_{h,k}^t\wvert \prompt_{h,k}^t\big)\nonumber\\
	&=\prod_{k\in\cK}\left(\sum_{z\in\mathcal{Z}}\mathbb{P}\left(g_{h,k}^t\wvert\prompt_{h,k}^t,z\right)\cdot\PP_{\cD}\left(z\wvert\prompt_{h,k}^t\right)\right)\nonumber\\
	&=\prod_{k\in\cK}\left(\sum_{z\in\mathcal{Z}}\pi^*_{z,h,k}\left(g_{h,k}^t\wvert \tau_{h}^t,\omega^t\right)\cdot\PP_{\cD}\left(z\wvert\prompt_h^t\right)\right),
	\label{eq:multiagent_bail}
\end{align} 
where the first equation arises from the autoregressive manner of LLM, and the last equation follows the generating distribution. The \planner\ takes a mixture policy of $\pi_\mathtt{exp}$ and $\pi_\mathtt{LLM}$ such that
\begin{equation}
\pi_h^t(\mathbf{g}_h^t\wvert \tau_{h}^t,\omega^t)\sim(1-\epsilon)\cdot\pi_{h,\mathtt{LLM}}^t(\mathbf{g}_h^t\wvert \tau_{h}^t,\omega^t)+\epsilon\cdot\pi_{h,\mathtt{exp}}(\mathbf{g}_h^t\wvert\tau_h^t),
\end{equation}
for any $(h,t)\in[H]\times[T]$ given an $\eta$-distinguishable policy $\pi_{\mathtt{exp}}$ (see Definition \ref{def:iden}). Given a sequence of high-level tasks $\{\omega^t\}_{t\in[T]}$, the regret can be decomposed as
\begin{align}
\text{Reg}(T)&\leq\sum_{t=1}^T\sum_{h=1}^H\EE_{\cH_t\sim\bigotimes_{i=1}^{t-1}\mathbb{P}_{z}^{\pi_i}}\mathbb{E}_{(s_h^t,\tau_{h}^t)\sim\mathbb{P}_{z}^{\pi^t}}\left[\left(\pi_{z,h}^*-{\pi^t_{h,\mathtt{LLM}}}\right)Q_{z,h}^*(s_h^t,\tau_{h}^t,\omega^t)\right]+HT\epsilon,
\label{eq:multiregretdecom}
\end{align}
 Recall that \eqref{eq:perfect_llmpolicy} indicates that for all $(h,t)\in[H]\times[T]$, we have
\begin{align*}
&\left(\pi_{z,h}^*-{\pi^t_{h,\mathtt{LLM}}}\right)(\gb_h\wvert \tau_h,\omega)\\
&\quad=\prod_{k\in\cK}\left(\sum_{z'\in\mathcal{Z}}\pi^*_{z',h,k}\left(g_{h,k}\wvert \tau_{h},\omega\right)\cdot\PP_{\cD}\left(z'\wvert\prompt_h^t\right)\right)-\prod_{k\in\cK}\pi_{z,h,k}^*(g_{h,k}\wvert \tau_{h},\omega)\\
&\quad\leq H\sum_{k\in\cK}\left(\sum_{z'\neq z}(\pi^*_{z',h,k}-\pi^*_{z,h,k})\left(g_{h,k}\wvert \tau_{h},\omega\right)\cdot\PP_{\cD}\left(z'\wvert\prompt_h^t\right)\right)\\
	&\quad\qquad\cdot\prod_{k'=1}^{k-1}\left(\sum_{z''\in\mathcal{Z}}\pi^*_{z'',h,k'}\left(g_{h,k,k'}\wvert \tau_{h},\omega\right)\cdot\PP_{\cD}\left(z'\wvert\prompt_h^t\right)\right)\cdot\prod_{k'=k+1}^{K}\pi_{z,h}^*(g_{h,k}\wvert \tau_{h},\omega).
\end{align*}
Following this, we have
\begin{align}
	\left(\pi_{z,h}^*-{\pi^t_{h,\mathtt{LLM}}}\right)Q_{z,h}^*(s_h^t,\tau_{h}^t,\omega^t)&\leq HK\cdot\sum_{z'\ne z}\PP_{\cD}\left(z\wvert\prompt_h^t\right),
	\label{eq:multi_policy_diff}
\end{align} 
for all $(h,t)\in[H]\times[T]$. Based on Lemma \ref{online guarantee} and the similar arguments in the proof Theorem \ref{thm:regret} in \S\ref{ap:perfectplanning}, with probability at least $1-\delta$, the following event $\cE_1$ holds: for all $(h,t)\in[H]\times[T]$, \begin{equation}
\sum_{z'\ne z}\PP_{\cD}(z'\wvert\prompt_h^t)\leq\mathcal{O}\left(\min\left\{\log\left(c_\mathcal{Z}|\mathcal{Z}|/\delta\right)\eta^{-1}/|\mathcal{X}^{t-1}_{\mathtt{exp}}|,1\right\}\right),
\label{eq:multi_nonoptimal}
\end{equation}
where $\mathcal{X}^t_{\mathtt{exp}}=\{i\in[t]:\pi^i=\pi_{\mathtt{exp}}\}$ denotes the set of exploration episodes. Based on \eqref{eq:multiagent_bail}, \eqref{eq:multi_policy_diff} and conditioned on $\cE_1$, it holds that
\begin{align}
	\sum_{t=1}^T&\sum_{h=1}^H\EE_{\cH_t\sim\bigotimes_{i=1}^{t-1}\mathbb{P}_{z}^{\pi_i}}\mathbb{E}_{(s_h^t,\tau_{h}^t)\sim\mathbb{P}_{z}^{\pi^t}}\left[\left(\pi_{z,h}^*-{\pi^t_{h,\mathtt{LLM}}}\right)Q_{z,h}^*(s_h^t,\tau_{h}^t,\omega^t)\right]\nonumber\\
	%&\leq HK\cdot\sum_{t=1}^T\sum_{h=1}^H\sum_{z'\neq z}\EE_{\cH_t\sim\bigotimes_{i=1}^{t-1}\mathbb{P}_{z}^{\pi_i}}\mathbb{E}_{\tau_{h}^t\sim\mathbb{P}_{z}^{\pi^t}}\Biggl[\PP_{\cD}(z'\wvert\prompt_h^t)\Biggl]\nonumber\\
	&\leq2\log(c_\mathcal{Z}|\mathcal{Z}|/\delta)HK\eta^{-1}\cdot\sum_{t=1}^T\sum_{h=1}^H\EE_{\cH_t\sim\bigotimes_{i=1}^{t-1}\mathbb{P}_{z}^{\pi_i}}\mathbb{E}_{\tau_{h}^t\sim\mathbb{P}_{z}^{\pi^t}}\left[\min\left\{1/|\mathcal{X}^{t-1}_{\mathtt{exp}}|,1\right\}\right],
	\label{eq:multillmpdl}
\end{align}
Note that $\mathds{1}(\pi^t=\pi_\mathtt{exp})\overset{\rm iid}{\sim}\text{Bernuolli}(\epsilon)$ for all $t\in[T]$. Besides, with probability at least $1-\delta$, the following event $\cE_2$ holds:
\begin{align}
	\sum_{t=1}^{T}\min\left\{1/|\mathcal{X}^{t-1}_{\mathtt{exp}}|,1\right\}\leq \cO(\epsilon^{-1}\log(T\log T/\delta)).
	\label{eq:multiplanningerr}
\end{align}
based on Lemma \ref{lem:epsilon_concen}. Combine \eqref{eq:multiregretdecom}, \eqref{eq:multillmpdl} and \eqref{eq:multiplanningerr}, it follows that
\begin{align*}
	{\rm Reg}_z(T)&\leq \sum_{t=1}^T\sum_{h=1}^H\EE_{\cH_t\sim\bigotimes_{i=1}^{t-1}\mathbb{P}_{z}^{\pi_i}}\mathbb{E}_{(s_h^t,\tau_{h}^t)\sim\mathbb{P}_{z}^{\pi^t}}\left[\left(\pi_{z,h}^*-{\pi^t_{h,\mathtt{LLM}}}\right)Q_{z,h}^*(s_h,\tau_{h},\omega^t)\ind\left(\cE_1\cap\cE_2\text{~holds}\right)\right]\\
	&\quad+ \sum_{t=1}^T\sum_{h=1}^H\EE_{\cH_t\sim\bigotimes_{i=1}^{t-1}\mathbb{P}_{z}^{\pi_i}}\mathbb{E}_{(s_h^t,\tau_{h}^t)\sim\mathbb{P}_{z}^{\pi^t}}\left[\left(\pi_{z,h}^*-{\pi^t_{h,\mathtt{LLM}}}\right)Q_{z,h}^*(s_h,\tau_{h},\omega^t)\ind\left(\cE_1\cap\cE_2\text{~fails}\right)\right]+ HT\epsilon\\
	&\leq\mathcal{O}\Big(\log(c_\mathcal{Z}|\mathcal{Z}|/\delta)H^2K\log(T\log T/\delta)\cdot(\eta\epsilon)^{-1}+ HT\epsilon+H\sqrt{T}\log(1/\delta)+2HT\delta\Big)\\
	&\leq\tilde{\mathcal{O}}\left(H^\frac{3}{2}\sqrt{TK/\eta\cdot\log\left(c_\mathcal{Z}|\mathcal{Z}|/\delta\right)}\right),
\end{align*}  
where we choose to expolre with probability $\epsilon=(HK\log\left(c_\mathcal{Z}|\mathcal{Z}|/\delta\right)/T\eta)^{1/2}$ in the last inequality. If we take $\delta=1/\sqrt{T}$ in the arguments above, then we conclude the proof of Corollary \ref{corol:perfectregret}.\hfill$\Box$

\section{Technical Lemmas}\label{ap:lemma}
\begin{lemma}[Martingale Concentration Inequality]
	Let $X_1,\dots,X_T$ be a sequence of real-valued random variables adapted to a filter $(\sF_t)_{t\leq T}$. For any $\delta\in(0,1)$ and $\lambda>0$,  it holds that
	$$
	\PP\left(\exists T'\in[T]: -\sum_{t=1}^{T'}X_t\geq\sum_{t=1}^{T'}\frac{1}{\lambda}\log\mathbb{E}\left[\exp(-\lambda X_t)|\sF_{t-1}\right]+\frac{1}{\lambda}\log\left(1/\delta\right)\right)\leq\delta.
	$$
	\label{lem:exptrick}
\end{lemma}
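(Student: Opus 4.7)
The plan is to construct an exponential martingale and apply Ville's maximal inequality. For each $t\in[T]$, set
\[
Z_t := \exp\!\left(-\lambda X_t - \log \mathbb{E}\bigl[\exp(-\lambda X_t) \,\big|\, \sF_{t-1}\bigr]\right),
\]
and define $M_{T'}:=\prod_{t=1}^{T'}Z_t$ for $T'\in[T]$, with $M_0:=1$. I would first verify that $(M_{T'})_{T'\geq 0}$ is a non-negative $(\sF_{T'})$-martingale with $\mathbb{E}[M_0]=1$. Since the conditional log-moment-generating function $\log \mathbb{E}[\exp(-\lambda X_t)\mid \sF_{t-1}]$ is $\sF_{t-1}$-measurable, a one-line computation gives
\[
\mathbb{E}[Z_t\mid \sF_{t-1}] = \exp\!\left(-\log \mathbb{E}[\exp(-\lambda X_t)\mid \sF_{t-1}]\right)\cdot \mathbb{E}[\exp(-\lambda X_t)\mid \sF_{t-1}] = 1,
\]
so $\mathbb{E}[M_{T'}\mid \sF_{T'-1}] = M_{T'-1}$, as claimed.

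Next, I would apply Ville's maximal inequality for non-negative (super)martingales: for any $c>0$,
\[
\PP\!\left(\sup_{T'\in[T]} M_{T'} \geq c\right) \leq \frac{\mathbb{E}[M_0]}{c} = \frac{1}{c}.
\]
Setting $c = 1/\delta$ bounds this probability by $\delta$. Finally, taking logarithms and dividing by $\lambda>0$, the event $\{M_{T'}\geq 1/\delta\}$ is equivalent to
\[
-\sum_{t=1}^{T'} X_t \;\geq\; \sum_{t=1}^{T'} \frac{1}{\lambda} \log \mathbb{E}[\exp(-\lambda X_t)\mid \sF_{t-1}] + \frac{1}{\lambda}\log(1/\delta),
\]
which is precisely the event in the lemma statement. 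The existential quantifier over $T'\in[T]$ is absorbed by the supremum in Ville's inequality, yielding the conclusion.

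The argument is entirely standard and contains no substantive obstacle; the only technical point is that the conditional moment-generating functions $\mathbb{E}[\exp(-\lambda X_t)\mid \sF_{t-1}]$ must be finite almost surely so that the logarithms are well-defined, which is implicitly required for the stated inequality to be meaningful. No auxiliary tail tool (Hoeffding- or Azuma-type) is invoked — the bound comes entirely from the non-negative martingale structure, which is why the result holds without any boundedness or sub-Gaussianity assumption on the $X_t$.
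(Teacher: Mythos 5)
Your proof is correct and is exactly the standard argument underlying the results the paper cites for this lemma (Lemma A.4 of Foster et al.\ and Theorem 13.2 of Zhang): the exponential supermartingale $M_{T'}=\prod_{t\le T'}\exp(-\lambda X_t-\log\mathbb{E}[\exp(-\lambda X_t)\,|\,\sF_{t-1}])$ has unit mean, Ville's maximal inequality gives the uniform-in-$T'$ bound, and taking logarithms and dividing by $\lambda>0$ recovers the stated event. The paper itself offers no independent derivation, so there is nothing further to compare; your remark that the conditional moment generating functions must be finite almost surely is the only (correctly identified) technical caveat.
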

\vspace{-10pt}
\noindent\emph{Proof of Lemma \ref{lem:exptrick}}. See Lemma A.4 in \citet{foster2021statistical} and Theorem 13.2 in \citet{zhang2023mathematical} for detailed proof. Lemma A.4 in \citet{foster2021statistical} is a special case by taking $\lambda=1$.

\iffalse
\begin{lemma}[Martingale Concentration Derived from Freedman's Inequality]
\label{lem:freedman}
	Let $X_1,\dots,X_T$ be a sequence of real-valued random variables adapted to a filter $(\sF_t)_{t\leq T}$. Suppose that $|X_t-\mathbb{E}[X_t|\sF_{t-1}]|\leq R$ for all $t\in[T]$, then with probability greater than $1-\delta$ it holds that
	$$
	\left|\sum_{t=1}^TX_t-\sum_{t=1}^T\mathbb{E}\left[X_t|\sF_{t-1}\right]\right|\leq R\sqrt{T}\log\left(2e/\delta\right),
	$$
	for any $\delta\in(0,1)$. Suppose $|X_t|\leq R$ for all $t\in[T]$, then we have
	$$
	\frac{1}{2}\sum_{t=1}^T\mathbb{E}\left[X_t|\sF_{t-1}\right]-2R\log(2/\delta)\leq\sum_{t=1}^TX_t\leq\frac{3}{2}\sum_{t=1}^T\mathbb{E}\left[X_t|\sF_{t-1}\right]+2R\log(2/\delta).
	$$
\end{lemma}
\noindent\emph{Proof of Lemma \ref{lem:freedman}}. See Corallary G.8 in \citet{chen2023actions} for detailed proof.
\fi

\begin{lemma}[Donsker-Varadhan]
Let $P$ and $Q$ be the probability measures over $\mathcal{X}$, then
	$$
	D_{\rm KL}(P\wVert Q)=\underset{f\in\mathcal{F}}{\sup}\left\{\mathbb{E}_{x\sim P}\left[f(x)\right]-\log\mathbb{E}_{x\sim Q}\left[\exp(f(x))\right]\right\},
	$$
	where $\mathcal{F}=\{f:\mathcal{X}\mapsto\mathbb{R}\wvert\mathbb{E}_{x\sim Q}\left[\exp(f(x))\right]\leq\infty\}$.
	\label{lem:dvrepresent}
\end{lemma}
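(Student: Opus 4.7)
The plan is to establish the two directions of the variational identity separately, relying on a change-of-measure identity and Jensen's inequality. Assume first that $P\ll Q$, write $\rho=\mathrm{d}P/\mathrm{d}Q$, and recall $D_{\rm KL}(P\wVert Q)=\mathbb{E}_{x\sim P}[\log\rho(x)]$. For the \emph{upper bound} direction ($\sup\le D_{\rm KL}$), I would, for an arbitrary $f\in\mathcal{F}$, rewrite
\[
\mathbb{E}_{P}[f]-\log\mathbb{E}_{Q}[e^{f}]=\mathbb{E}_{P}\!\left[\log\frac{e^{f}}{\rho}\right]+\mathbb{E}_{P}[\log\rho]-\log\mathbb{E}_{Q}[e^{f}],
\]
then apply Jensen's inequality to the concave function $\log$ under $P$ together with the change-of-measure identity $\mathbb{E}_{P}[e^{f}/\rho]=\mathbb{E}_{Q}[e^{f}]$ to get $\mathbb{E}_{P}[\log(e^{f}/\rho)]\le\log\mathbb{E}_{P}[e^{f}/\rho]=\log\mathbb{E}_{Q}[e^{f}]$. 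Substituting back yields $\mathbb{E}_{P}[f]-\log\mathbb{E}_{Q}[e^{f}]\le D_{\rm KL}(P\wVert Q)$.

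For the \emph{matching lower bound} I would plug in the candidate optimizer $f^{\star}=\log\rho$, observing that $\mathbb{E}_{Q}[e^{f^{\star}}]=\mathbb{E}_{Q}[\rho]=1$, so $f^{\star}\in\mathcal{F}$, and
\[
\mathbb{E}_{P}[f^{\star}]-\log\mathbb{E}_{Q}[e^{f^{\star}}]=\mathbb{E}_{P}[\log\rho]-\log 1=D_{\rm KL}(P\wVert Q).
\]
An equivalent and perhaps cleaner packaging is the Gibbs-measure identity: for any $f\in\mathcal{F}$ define the tilted law $\mathrm{d}Q_{f}=e^{f}/\mathbb{E}_{Q}[e^{f}]\,\mathrm{d}Q$; a direct calculation gives $D_{\rm KL}(P\wVert Q)-D_{\rm KL}(P\wVert Q_{f})=\mathbb{E}_{P}[f]-\log\mathbb{E}_{Q}[e^{f}]$, from which non-negativity of KL gives the upper bound and the choice $Q_{f}=P$ gives equality.

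The case $P\not\ll Q$ needs a brief separate argument: there exists a measurable set $A$ with $Q(A)=0<P(A)$, and taking $f_{n}=n\cdot\mathbf{1}_{A}$ gives $\mathbb{E}_{P}[f_{n}]-\log\mathbb{E}_{Q}[e^{f_{n}}]=nP(A)\to\infty$, matching $D_{\rm KL}(P\wVert Q)=\infty$. The main technical nuance — hence what I would flag as the ``hard part'' — is showing that $f^{\star}=\log\rho$ genuinely lies in $\mathcal{F}$ (its integrability under $Q$ is automatic since $\mathbb{E}_{Q}[\rho]=1$, but one should remark that $f^\star$ may be extended-real valued on $\{\rho=0\}$, which is $Q$-null and $P$-null, so the formula is unambiguous); if the paper insists on real-valued $f$, replace $f^{\star}$ by the truncation $f_{n}^{\star}=(\log\rho)\wedge n\vee(-n)$, verify $f_{n}^{\star}\in\mathcal{F}$, and use monotone/dominated convergence to pass to the limit, recovering $D_{\rm KL}(P\wVert Q)$ as the supremum.
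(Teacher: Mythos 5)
Your proof is correct. The paper does not actually prove this lemma --- it simply cites \citet{donsker1976asymptotic} (and, where it uses the result in the proof of Lemma \ref{online guarantee}, it invokes the dual form $\log\mathbb{E}_{Q}[e^f]=\sup_{P}\{\mathbb{E}_{P}[f]-D_{\rm KL}(P\,\|\,Q)\}$ with a pointer to \citet{van2014probability}) --- so there is no in-paper argument to compare against. What you give is the standard self-contained derivation: the Gibbs-tilting identity $D_{\rm KL}(P\,\|\,Q)-D_{\rm KL}(P\,\|\,Q_f)=\mathbb{E}_P[f]-\log\mathbb{E}_Q[e^f]$ (equivalently, Jensen applied to $\mathbb{E}_P[\log(e^f/\rho)]$ plus the change of measure $\mathbb{E}_P[e^f/\rho]=\mathbb{E}_Q[e^f\mathbf{1}_{\{\rho>0\}}]\le\mathbb{E}_Q[e^f]$) gives the upper bound, and $f^\star=\log\rho$ (or its truncations) attains it. You also handle the two genuine edge cases --- $P\not\ll Q$ via $f_n=n\mathbf{1}_A$, and the extended-real-valuedness of $\log\rho$ on the $P$-null set $\{\rho=0\}$ via truncation and dominated convergence (the domination $e^{f_n^\star}\le\max(\rho,1)$ works since $\mathbb{E}_Q[\rho]=1$). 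This is exactly the argument one finds in the cited references; no gaps.
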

\noindent\emph{Proof of Lemma \ref{lem:dvrepresent}}. See \citet{donsker1976asymptotic} for detailed proof.

\begin{lemma}[MLE guarantee]\label{lem:mle}
	Let $\mathcal{F}$ be finite function class and there exists $f^*\in\mathcal{F}$ such that $f^*(x,y)=\mathbb{P}(y\wvert x)$, where $\mathbb{P}(y\wvert x)$ is the conditional distribution for estimation. Given a dataset $\mathcal{D}=\{x_i,y_i\}_{i\in[N]}$ where $x_i\sim\PP_\mathcal{D}(\cdot\wvert x_{1:i-1},y_{1:i-1})$ and $y_i\sim\mathbb{P}_\cD(\cdot\wvert x_i)$ for all $i\in[N]$, we have
	$$
	\bar{\mathbb{E}}_\mathcal{D}\left[D_{\rm TV}^2\left(\hat{f}(x,\cdot),f^*(x,\cdot)\right)\right]\leq 2\log(N|\mathcal{F}|/\delta)/N
	$$
	with propbability at least $1-\delta$, where $\hat{f}$ is the maximum likelihood estimator such that
	$$
	\hat{f}:=\underset{f\in\mathcal{F}}{\rm argmax}\ \mathbb{\hat E}_{\mathcal{D}}\left[\log f(x,y)\right].
	$$
\end{lemma}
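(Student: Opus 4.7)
The plan is a standard adaptive-MLE argument: apply the exponential martingale bound in Lemma \ref{lem:exptrick} to the per-sample log-likelihood ratios with the sharp choice $\lambda=1/2$, and then close the loop using the defining optimality of the MLE. Fix $f\in\mathcal{F}$ and set $X_t^f:=\log(f^*(x_t,y_t)/f(x_t,y_t))$, with $\mathfrak{F}_t$ the natural filtration generated by $(x_{1:t},y_{1:t})$. The reason for picking $\lambda=1/2$ is that it turns the conditional moment generating function of $-\lambda X_t^f$ into the Hellinger affinity: since $y_t\sim f^*(x_t,\cdot)$ conditional on $(\mathfrak{F}_{t-1},x_t)$, one has
\begin{align*}
\mathbb{E}\big[\exp(-X_t^f/2)\,\big|\,\mathfrak{F}_{t-1}\big]&=\mathbb{E}\left[\int\sqrt{f(x_t,y)\,f^*(x_t,y)}\,\mathrm{d}y\,\Big|\,\mathfrak{F}_{t-1}\right]\\
&=\mathbb{E}\big[\,1-D_{\rm H}^2\big(f(x_t,\cdot),f^*(x_t,\cdot)\big)\,\big|\,\mathfrak{F}_{t-1}\big].
\end{align*}

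I would then substitute this identity into Lemma \ref{lem:exptrick} with $\lambda=1/2$, take a union bound over the finite class $\mathcal{F}$, and invoke the elementary estimate $\log(1-u)\leq-u$. This yields that with probability at least $1-\delta$, simultaneously for every $f\in\mathcal{F}$,
\begin{align*}
2\sum_{t=1}^N\mathbb{E}\big[D_{\rm H}^2\big(f(x_t,\cdot),f^*(x_t,\cdot)\big)\,\big|\,\mathfrak{F}_{t-1}\big]\leq\sum_{t=1}^N\log\frac{f^*(x_t,y_t)}{f(x_t,y_t)}+2\log(|\mathcal{F}|/\delta).
\end{align*}
Specializing to $f=\hat{f}$, the defining optimality of the maximum likelihood estimator gives $\sum_t\log\hat{f}(x_t,y_t)\geq\sum_t\log f^*(x_t,y_t)$, so the empirical log-likelihood ratio on the right-hand side is non-positive and drops out. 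Dividing by $N$ and invoking the standard divergence inequality $D_{\rm TV}^2\leq 2\,D_{\rm H}^2$ already produces a bound of the advertised order.

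The principal obstacle is the passage from the \emph{conditional} average $\frac{1}{N}\sum_t\mathbb{E}[\,\cdot\,|\,\mathfrak{F}_{t-1}]$ that comes out of the chain of conditional MGFs to the purely \emph{empirical} quantity $\bar{\mathbb{E}}_\mathcal{D}[\,\cdot\,]$ appearing in the statement. Because $x_i$ is collected adaptively and $\hat{f}$ depends on the entire dataset, these two averages need not coincide; the standard fix is to add a second martingale-deviation step controlling the bounded per-round Hellinger increments, which is precisely what generates the extra $\log N$ factor and converts $\log(|\mathcal{F}|/\delta)$ into $\log(N|\mathcal{F}|/\delta)$ in the final bound. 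A small technical point along the way is to verify that the Hellinger-affinity identity remains valid even though $x_t$ itself depends on $(x_{1:t-1},y_{1:t-1})$, but this is automatic from the tower property once one conditions on $\mathfrak{F}_{t-1}$.
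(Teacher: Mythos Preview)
Your argument is correct and is in fact the standard route; the paper does not give its own proof of this lemma but simply cites Theorem~21 of \citet{agarwal2020flambe}, whose proof proceeds exactly along the lines you sketch (exponential martingale with $\lambda=1/2$, Hellinger affinity, union bound over $\mathcal{F}$, MLE optimality).

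One correction: the ``principal obstacle'' you raise is not actually an obstacle here. In this paper's notation (see Table~\ref{tab:notation}), $\bar{\mathbb{E}}_\mathcal{D}$ is \emph{not} the empirical average $\hat{\mathbb{E}}_\mathcal{D}$; it is defined as $\bar{\mathbb{E}}_\mathcal{D}[g]=\frac{1}{N}\sum_{i=1}^N\mathbb{E}_{x\sim\PP_\mathcal{D}(\cdot\mid x_{1:i-1},y_{1:i-1})}[g(x)]$, i.e.\ precisely the average of conditional expectations $\frac{1}{N}\sum_t\mathbb{E}[\,\cdot\mid\mathfrak{F}_{t-1}]$ that falls out of your martingale step. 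So no second deviation argument is needed, and your derivation already yields $\bar{\mathbb{E}}_\mathcal{D}[D_{\rm TV}^2]\leq 2\log(|\mathcal{F}|/\delta)/N$ directly. The extra factor of $N$ inside the logarithm in the stated bound is therefore slack rather than something your proposed second martingale step is required to produce.
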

\noindent\emph{Proof of Lemma \ref{lem:mle}}. See Theorem 21 in \citet{agarwal2020flambe} for detailed proof.

\begin{lemma}[Performance Difference Lemma for POMDP]
	Consider policies $\pi,\pi'\in\Pi$, it holds
	$$
	\mathcal{J}(\pi)-\mathcal{J}(\pi')=\sum_{h=1}^H\mathbb{E}_{\pi}\left[Q_{h}^{\pi'}(s_h,\tau_{h},g_h)-V_{h}^{\pi'}(s_h,\tau_{h})\right].
	$$
	For fixed policy $\pi\in\Pi$ under different POMDPs, denoted by $\mathcal{M}$ and $\mathcal{M}'$, then it holds that
	$$
	\mathcal{J}_\mathcal{M}(\pi)-\mathcal{J}_\mathcal{M'}(\pi)=\sum_{h=1}^H\mathbb{E}_{\mathcal{M}}^\pi\left[(\mathbb{P}_{h,\mathcal{M}}V_{h+1,\mathcal{M}'}^{\pi}-\mathbb{P}_{h,\mathcal{M}'}V_{h+1,\mathcal{M}'}^{\pi})(s_h,\tau_{h},g_h)\right],
	$$	
	where $\mathbb{P}_{h,\mathcal{M}}V_{h+1,\mathcal{M}'}^{\pi}(s_h,\tau_{h},g_h)=\langle V_{h+1,\mathcal{M}'}^{\pi}(\cdot,\cdot),\mathbb{P}_{h,\mathcal{M}}(\cdot,\cdot\wvert s_h,\tau_{h},g_h)\rangle_{\cS\times\cT^*}$.
	\label{lem:pdl}
\end{lemma}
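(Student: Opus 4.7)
}
The plan is to establish both identities by a single telescoping template, applied first with the environment held fixed and the policy varying, and then with the policy held fixed and the environment varying. Throughout, I will write $V_{H+1}^{\pi'} \equiv 0$ (and likewise $V_{H+1,\mathcal{M}'}^{\pi} \equiv 0$) as the terminal convention, and I will interpret the one-step transition $\mathbb{P}_h(\cdot,\cdot\wvert s_h,\tau_h,g_h)$ as the \emph{joint} kernel over $(s_{h+1},\tau_{h+1})$, i.e.\ the composition of the physical transition $\mathbb{P}_{z,h}(s_{h+1}\wvert s_h,g_h)$ with the emission $\mathbb{O}(o_{h+1}\wvert s_{h+1})$ (with $\tau_{h+1} = (\tau_h,g_h,o_{h+1})$ deterministically). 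This matches the inner product $\langle V_{h+1}^\pi(\cdot,\cdot),\mathbb{P}_h(\cdot,\cdot\wvert s_h,\tau_h,g_h)\rangle_{\mathcal{S}\times\mathcal{T}^*}$ stated in the lemma.

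For the first (policy-difference) identity, I would start from
\begin{align*}
\mathcal{J}(\pi)-\mathcal{J}(\pi')=\mathbb{E}_{\pi}\Biggl[\sum_{h=1}^{H} r_h(o_h,\omega)\Biggr]-\mathbb{E}_{s_1\sim\rho}\!\left[V_1^{\pi'}(s_1,\emptyset)\right],
\end{align*}
and rewrite the second term as an expectation along the $\pi$-trajectory (legitimate because $V_1^{\pi'}$ depends only on $s_1\sim\rho$, whose marginal is unaffected by the choice of policy). I then insert the telescope
$V_1^{\pi'}(s_1,\emptyset)=\sum_{h=1}^{H}\bigl(V_h^{\pi'}(s_h,\tau_h)-V_{h+1}^{\pi'}(s_{h+1},\tau_{h+1})\bigr)$,
which yields a per-step summand $r_h(o_h,\omega)+V_{h+1}^{\pi'}(s_{h+1},\tau_{h+1})-V_h^{\pi'}(s_h,\tau_h)$ inside $\mathbb{E}_\pi[\cdot]$. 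The key observation is the Bellman identity
$Q_h^{\pi'}(s_h,\tau_h,g_h)=r_h(o_h,\omega)+\mathbb{E}\bigl[V_{h+1}^{\pi'}(s_{h+1},\tau_{h+1})\bigm|s_h,\tau_h,g_h\bigr]$,
where the conditional expectation is taken only over the environment $(\mathbb{P},\mathbb{O})$ and does \emph{not} depend on the policy that produced $(s_h,\tau_h,g_h)$. Taking iterated conditional expectation along the $\pi$-trajectory therefore replaces $r_h(o_h,\omega)+V_{h+1}^{\pi'}(s_{h+1},\tau_{h+1})$ by $Q_h^{\pi'}(s_h,\tau_h,g_h)$, giving the claimed summand $Q_h^{\pi'}(s_h,\tau_h,g_h)-V_h^{\pi'}(s_h,\tau_h)$.

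For the second (model-difference) identity, I apply the same template with the roles of policy and model swapped. Concretely, I telescope $V_{1,\mathcal{M}'}^{\pi}(s_1,\emptyset)$ along the $(\pi,\mathcal{M})$-trajectory to obtain
\begin{align*}
\mathcal{J}_\mathcal{M}(\pi)-\mathcal{J}_{\mathcal{M}'}(\pi)=\sum_{h=1}^{H}\mathbb{E}_{\mathcal{M}}^{\pi}\!\left[r_h(o_h,\omega)+V_{h+1,\mathcal{M}'}^{\pi}(s_{h+1},\tau_{h+1})-V_{h,\mathcal{M}'}^{\pi}(s_h,\tau_h)\right].
\end{align*}
Using the Bellman equation for $V^{\pi}$ under $\mathcal{M}'$, namely $V_{h,\mathcal{M}'}^{\pi}(s_h,\tau_h)=\mathbb{E}_{g_h\sim\pi}\bigl[r_h(o_h,\omega)+\mathbb{P}_{h,\mathcal{M}'}V_{h+1,\mathcal{M}'}^{\pi}(s_h,\tau_h,g_h)\bigr]$, together with the fact that under the $\pi$-sampled trajectory $g_h\sim\pi(\cdot\wvert\tau_h,\omega)$, I can replace $r_h(o_h,\omega)-V_{h,\mathcal{M}'}^{\pi}(s_h,\tau_h)$ by $-\mathbb{P}_{h,\mathcal{M}'}V_{h+1,\mathcal{M}'}^{\pi}(s_h,\tau_h,g_h)$ in expectation. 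On the other hand, because the trajectory is generated under $\mathcal{M}$,
$\mathbb{E}_\mathcal{M}^\pi\bigl[V_{h+1,\mathcal{M}'}^{\pi}(s_{h+1},\tau_{h+1})\bigm|s_h,\tau_h,g_h\bigr]=\mathbb{P}_{h,\mathcal{M}}V_{h+1,\mathcal{M}'}^{\pi}(s_h,\tau_h,g_h)$.
Combining these two substitutions produces exactly the stated difference $\mathbb{P}_{h,\mathcal{M}}V_{h+1,\mathcal{M}'}^{\pi}-\mathbb{P}_{h,\mathcal{M}'}V_{h+1,\mathcal{M}'}^{\pi}$.

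The proof is essentially routine telescoping; the only step requiring genuine care is the bookkeeping for the enlarged ``state'' $(s_h,\tau_h)$ of the POMDP, specifically verifying that the conditional-expectation identities used to collapse reward-plus-next-value into $Q^{\pi'}$ (or into $\mathbb{P}_h V^{\pi}$) strip out the correct randomness: the policy $\pi'$ in the first identity and the model $\mathcal{M}'$ in the second. Once the joint kernel over $(s_{h+1},\tau_{h+1})$ is fixed and this separation is made explicit, both claims follow directly.
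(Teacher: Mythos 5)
The paper states Lemma \ref{lem:pdl} without proof, treating it as a standard technical fact, so there is no in-paper argument to compare against. Your telescoping template is the standard and correct proof of both identities: for the policy-difference version, the telescoped summand $r_h + V^{\pi'}_{h+1} - V^{\pi'}_h$ collapses to $Q^{\pi'}_h - V^{\pi'}_h$ precisely because the one-step kernel over $(s_{h+1},\tau_{h+1})$ is policy-independent, and for the model-difference version the same summand splits into the stated $\mathbb{P}_{h,\mathcal{M}}V^{\pi}_{h+1,\mathcal{M}'} - \mathbb{P}_{h,\mathcal{M}'}V^{\pi}_{h+1,\mathcal{M}'}$ via the two conditional-expectation substitutions you identify. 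One caveat worth making explicit: in the paper's actual application (Step 1 of the proof of Theorem \ref{thm:practical regret}, around \eqref{eq:practical13}) the two models $\mathcal{M},\mathcal{M}'$ differ in the emission distribution $\mathbb{O}$ versus $\mathbb{O}_{\hat\gamma}$, so the law of the \emph{initial} observation $o_1$ (hence of $\tau_1$ and of $r_1(o_1,\omega)$) already differs between the two models; your telescope of $V^{\pi}_{1,\mathcal{M}'}$ along the $\mathcal{M}$-trajectory then leaves a residual term $\mathbb{E}_{\mathcal{M}}[V^{\pi}_{1,\mathcal{M}'}(s_1,\tau_1)] - \mathbb{E}_{\mathcal{M}'}[V^{\pi}_{1,\mathcal{M}'}(s_1,\tau_1)]$ that is not of the displayed form unless one either assumes a shared initial emission or absorbs it as an $h=0$ term with a dummy pre-state. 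This is a boundary convention in the lemma statement itself rather than a flaw in your argument, but since you chose the convention $V_1^{\pi'}(s_1,\emptyset)$ you should either carry the expectation over $o_1$ inside $V_1$ consistently (so that $r_1$ is handled at the $h=1$ transition) or state the shared-initial-emission assumption; otherwise the proof is complete.
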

\begin{lemma}
	Let $X_t\overset{\rm iid}{\sim}\text{Bernuolli}(\rho)$ and $Y_t=\sum_{\tau=1}^tX_\tau$. For any $\delta\in(0,1)$ and $\rho>0$, with probability greater than $1-\delta$, it holds that
	$\sum_{t=1}^T\min\left\{1/Y_t,1\right\}\leq \cO(\rho^{-1}\log(T\log T/\delta))$.
	\label{lem:epsilon_concen}
\end{lemma}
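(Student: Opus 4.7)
\textbf{Proof proposal for Lemma \ref{lem:epsilon_concen}.} The plan is to split the sum at a threshold $t_0 \asymp \rho^{-1}\log(T\log T/\delta)$, using the trivial bound $\min\{1/Y_t,1\}\le 1$ for $t<t_0$ and a Chernoff-type lower bound on $Y_t$ for $t\ge t_0$. Concretely, for each $t$ I would invoke the multiplicative Chernoff bound on the sum of i.i.d.\ Bernoulli$(\rho)$ variables:
\begin{equation*}
\PP\bigl(Y_t \le t\rho/2\bigr) \le \exp(-t\rho/8).
\end{equation*}
To obtain a uniform-in-$t$ guarantee with an acceptable failure probability, I would apply this inequality only at a geometric grid of checkpoints $t_k = 2^k$ for $k=0,1,\dots,\lfloor\log_2 T\rfloor$, take a union bound over the $O(\log T)$ checkpoints with per-checkpoint failure probability $\delta/\log T$, and then use monotonicity $Y_t\ge Y_{t_k}$ together with $t_k\ge t/2$ to transfer the bound to every $t\in[t_k,t_{k+1})$. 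This yields, with probability at least $1-\delta$, the event $\cE:=\{Y_t \ge t\rho/4 \text{ for all } t\ge t_0\}$, where $t_0 = \Theta(\rho^{-1}\log(\log T/\delta))$.

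On the event $\cE$, I would decompose the sum as
\begin{equation*}
\sum_{t=1}^T \min\{1/Y_t,1\} \;\le\; \sum_{t=1}^{t_0} 1 \;+\; \sum_{t=t_0+1}^T \frac{4}{t\rho} \;\le\; t_0 \;+\; \frac{4}{\rho}\log(T/t_0),
\end{equation*}
so that both contributions are $\cO(\rho^{-1}\log(T\log T/\delta))$, giving the claimed bound. The peeling to a geometric grid is what converts the naive union-bound factor $\log(T/\delta)$ (obtained by a direct union bound over all $t\in[T]$) into $\log(\log T/\delta)$ in the threshold $t_0$, at the cost of an extra $\log T$ from the harmonic sum — exactly reproducing the $\log(T\log T/\delta)$ form in the statement.

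The main subtlety, rather than an obstacle, is that the summand is undefined when $Y_t=0$; I would adopt the convention $\min\{1/0,1\}:=1$, which is harmless because the event $\{Y_t=0\}$ is contained in the Chernoff failure event for $t\ge t_0$ and is absorbed into the trivial bound for $t<t_0$. No martingale machinery beyond the classical Chernoff inequality and a geometric-grid union bound is required, so the argument is essentially a standard concentration computation.
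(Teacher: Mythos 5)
Your proof is correct, and it follows the same basic template as the paper's: a dyadic grid of checkpoints, a union bound over the $O(\log T)$ checkpoints, monotonicity of $Y_t$ to fill in between checkpoints, and a harmonic sum over the tail. The one substantive difference is the concentration inequality applied at each checkpoint. The paper uses the additive Hoeffding bound $Y_{t_0+2^n}\ge 2^n\rho-\sqrt{2^{n-1}\log(1/\delta)}$, anchored at the (random) time $t_0$ of the last zero of $Y$, which it controls separately via the geometric distribution of the first success; this forces the crossover index $n_0\asymp\log_2(\rho^{-2}\log(1/\delta))$, so the paper's intermediate bound carries an $8\rho^{-2}\log(\log T/\delta)$ term that is only absorbed into $\cO(\rho^{-1}\log(T\log T/\delta))$ by a final, not entirely innocuous, simplification. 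Your multiplicative Chernoff bound $\PP(Y_t\le t\rho/2)\le\exp(-t\rho/8)$ already bites at $t\gtrsim\rho^{-1}\log(\log T/\delta)$, so every term in your decomposition is genuinely $\cO(\rho^{-1}\log(T\log T/\delta))$ with no $\rho^{-2}$ artifact; in this respect your route is slightly cleaner and in fact tighter than the written proof. Your convention $\min\{1/0,1\}:=1$ agrees with how the paper counts the $Y_t=0$ terms, so nothing is lost there.
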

\begin{proof}[Proof of Lemma \ref{lem:epsilon_concen}]
Note that $\{Y_t\}_{t\in[T]}$ is non-decreasing and it holds that
	\begin{equation}
		\sum_{t=1}^T\min\left\{\frac{1}{Y_t},1\right\}=\#\{t\in[T]:Y_t=0\}+\sum_{t\in[T]:Y_t>0}\frac{1}{Y_t},
	\end{equation}
	and with probability at least $1-\delta$, the following event $\cE_0$ holds:
	$$
	t_0:=\#\{t\in[T]:Y_t=0\}\leq\frac{\log(\delta)}{\log(1-\rho)}\leq\rho^{-1}\log(1/\delta),
	$$
	where the first inequality results from the property of Bernuolli random variable, and the second inequality uses fact that $\log(1-x)\leq-x$ for all $x\leq1$. 
For notational simplicy, we write $\{t\in[T]:Y_t>0\}=\{t_0,\dots,t_0+2^{N_T}-1\}$. With probability at least $1-\delta$, the following event $\cE_n$ holds:
	\begin{equation}
		Y_{t_0+2^n}=\sum_{\tau=1}^{t_0+2^n}X_t=\sum_{\tau=t_0+1}^{t_0+2^n}X_t\geq2^n\rho-\sqrt{2^{n-1}\log(1/\delta)}.
		\label{eq:zero}
	\end{equation}
	based on the Hoeffding inequality. Suppose that $\{\cE_n\}_{n\in[N_T]}$ holds, then we have
	\begin{align}
		\sum_{t\in[T]:Y_t>0}\frac{1}{Y_t}=\sum_{n=0}^{N_T}\sum_{t=t_0+2^n}^{2^{n+1}-1}\frac{1}{Y_t}\leq\sum_{n=0}^{N_T}\frac{2^n}{Y_{t_0+2^n}}\leq\sum_{n=0}^{N_T}\frac{2^n}{\max\{2^n\rho-\sqrt{2^{n-1}\log(1/\delta)},1\}}.
		\label{eq:concen_hoeffding}
	\end{align}
	Let $n_0=1+\lceil\log_2(\rho^{-2}\log(1/\delta))\rceil$ such that $\rho-\sqrt{\log(1/\delta)/2^{n+1}}\geq\rho/2$. Following \eqref{eq:concen_hoeffding}, it holds
	\begin{align}
		\sum_{t\in[T]:Y_t>0}\frac{1}{Y_t}\leq\sum_{n=0}^{n_0}2^n+\sum_{n=n_0+1}^{N_T}2\rho^{-1}\leq2^{n_0+1}+2\rho^{-1}N_T\leq8\rho^{-2}\log(1/\delta)+4\rho^{-1}\log T.
		\label{eq:nonzero}
	\end{align}
	Combine \eqref{eq:zero} and \eqref{eq:nonzero}, by taking a union bound over $\cE_0,\dots,\cE_{N_T}$, then we can get
	\begin{align*}
		\sum_{t=1}^T\min\left\{\frac{1}{Y_t},1\right\}&\leq8\rho^{-2}\log(2N_T/\delta)+4\rho^{-1}\log(2TN_T/\delta)\\
		&\leq 8\rho^{-2}\log(4\log T/\delta)+4\rho^{-1}\log(4T\log T/\delta)\leq \cO(\rho^{-1}\log(T\log T/\delta)),
	\end{align*}
	where we use the fact that $\log_2 T\leq2\log T$, and then we finish the proof of Lemma \ref{lem:epsilon_concen}.
\end{proof}

\end{document}